\documentclass{article}

\usepackage[preprint]{neurips_2026}

\usepackage[utf8]{inputenc}
\usepackage[T1]{fontenc}
\usepackage{hyperref}
\usepackage{url}
\usepackage{booktabs}
\usepackage{amsfonts}
\usepackage{amsmath,amssymb,amsthm,mathtools,bm,bbm}
\usepackage{microtype}
\usepackage{xcolor}
\usepackage{paralist}
\usepackage{enumitem}
\usepackage{algorithm}
\usepackage[noend]{algpseudocode}
\usepackage{multirow,array}

\usepackage{comment}
\usepackage{caption}
\usepackage{wrapfig}
\usepackage{minitoc}
\usepackage{longtable}
\usepackage{graphicx} 
\usepackage{float}    

\hypersetup{colorlinks=true,linkcolor=blue,citecolor=blue,urlcolor=blue}

\newtheorem{theorem}{Theorem}[section]
\newtheorem{proposition}[theorem]{Proposition}
\newtheorem{corollary}[theorem]{Corollary}
\newtheorem{lemma}[theorem]{Lemma}
\newtheorem{remark}[theorem]{Remark}
\newtheorem{assumption}{Assumption}[section]

\newcommand{\R}{\mathbb{R}}
\newcommand{\E}{\mathbb{E}}
\newcommand{\KL}{\mathrm{KL}}
\newcommand{\kl}{\mathrm{KL}}
\newcommand{\TV}{\mathrm{TV}}

\newcommand{\X}{\mathcal{X}}
\newcommand{\D}{\mathcal{D}}
\newcommand{\U}{\mathcal{U}}

\newcommand{\Pbb}{\mathbb{P}}
\newcommand{\F}{\mathcal{F}}

\newcommand{\dd}{\mathrm{d}}

\newcommand{\argmin}{\mathop{\mathrm{argmin}}}
\newcommand{\argmax}{\mathop{\mathrm{argmax}}}
\newcommand{\argtopk}{\mathop{\mathrm{argtopK}}}

\newcommand{\prior}{p_0}

\newcommand{\Xstar}{\X^{\star}}

\newcommand{\good}{\mathsf{G}}

\DeclareMathOperator{\supp}{supp}

\DeclareMathOperator{\Pdim}{Pdim}

\newcommand{\dist}{\operatorname{dist}}

\title{Regret Analysis of Guided Diffusion for\\Black-Box Optimization over Structured Inputs}
\author{
    Masaki Adachi, Anita Yang\\
    Lattice Lab\\
    Toyota Motor Corporation\\
    \And
    Yakun Wang, Song Liu\\
    School of Mathematics\\
    University of Bristol\\
}

\begin{document}
\maketitle

\begin{abstract}
Guided-diffusion black-box optimization (BO) has shown strong empirical performance on structured design problems such as molecules and crystals, but its regret behavior remains poorly understood. Existing BO regret analyses typically rely on maximum information gain, non-pretrained surrogate models, or exact acquisition maximization—assumptions that break down in modern diffusion-BO pipelines, where pretrained diffusion models serve as powerful priors over valid structures and acquisition maximization is replaced by approximate sampling over astronomically large discrete spaces.
We develop a \emph{first} certificate-based expected simple-regret framework for guided-diffusion BO that avoids maximum-information-gain bounds, RKHS assumptions, and exact acquisition maximization. The central quantity in our analysis is \emph{mass lift}: the increase in probability mass assigned to $\gamma$-optimal designs relative to the pretrained generator.  This view explains how exponential-looking finite-budget convergence and polynomial acceleration can all arise from the same mechanism. We also give practical diagnostics for estimating search exponents from finite candidate pools and a proposal-corrected resampling construction that provides a fully certified sampler instance.
\end{abstract}

\doparttoc 
\faketableofcontents 
\section{Introduction}\label{sec:intro}
\begin{wrapfigure}[15]{r}{0.45\textwidth}
    \vspace{-2em}
    \centering
    \includegraphics[width=0.45\textwidth]{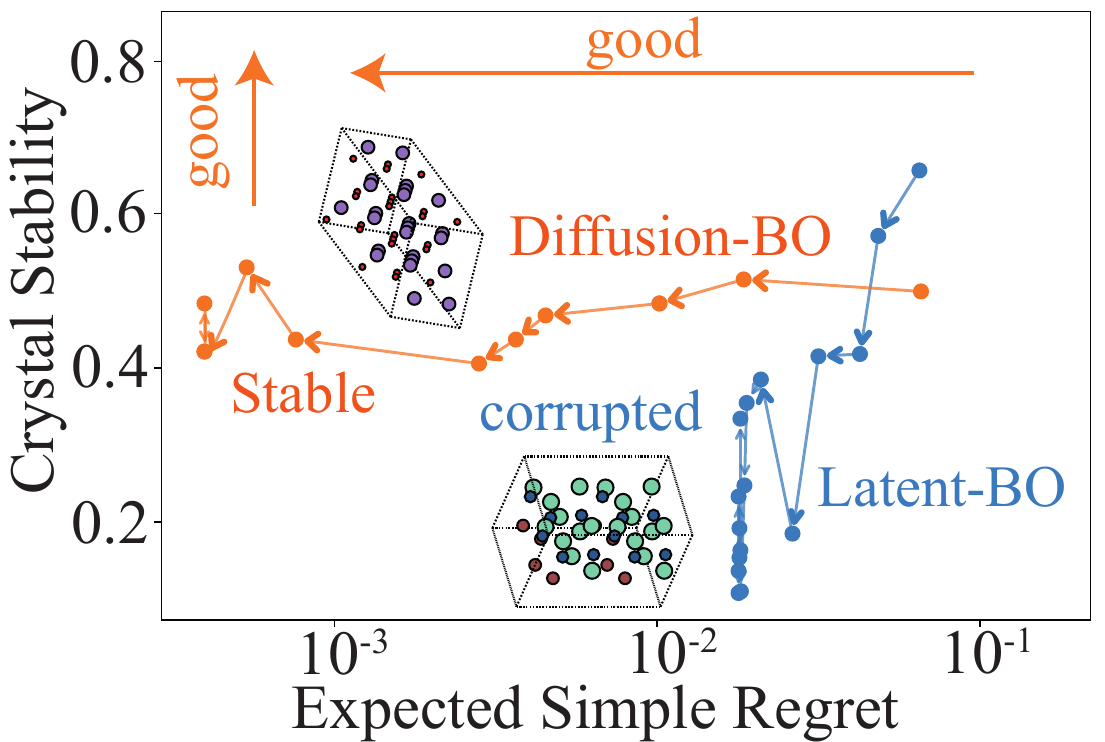}
    \caption{Diffusion BO produces stable, low-regret crystals during optimization, whereas Latent BO yields corrupted ones.}
    \label{fig:vae}
\end{wrapfigure}
\textbf{Structured-input optimization.}
Black-box optimization (BO~\citep{garnett2023bayesian}) is a natural tool for expensive scientific search. 
In structured domains such as molecules~\citep{gomez2018automatic,vignac2023digress}, proteins~\citep{gruver2023protein}, crystals~\citep{zeni2025mattergen}, and neural architectures~\citep{white2021bananas, ru2021interpretable}, however, the main bottleneck is often not surrogate fitting but \emph{acquisition maximization}: selecting the next query requires optimizing over a massive combinatorial space (e.g., molecular space is estimated to be of size $10^{60}$~\citep{reymond2015chemical}).
A common workaround is \emph{latent BO}, which embeds structured objects into a continuous space and performs optimization therein~\citep{gomez2018automatic, grosnit2021high, moss2025return}. This approach relies on approximately invertible encoder--decoder pairs, which can be fragile when extrapolating beyond the training distribution, often producing corrupted structures~\citep{maus2022local, lee2023advancing, santi2026verifierconstrained} (Figure~\ref{fig:vae}). Diffusion models offer a powerful alternative generative model~\citep{jiao2023crystal, zeni2025mattergen, okabe2025structural}, but their lack of an explicit latent space precludes directly applying latent BO.

\begin{figure}
    \vspace{-1em}
    \centering
    \includegraphics[width=1\linewidth]{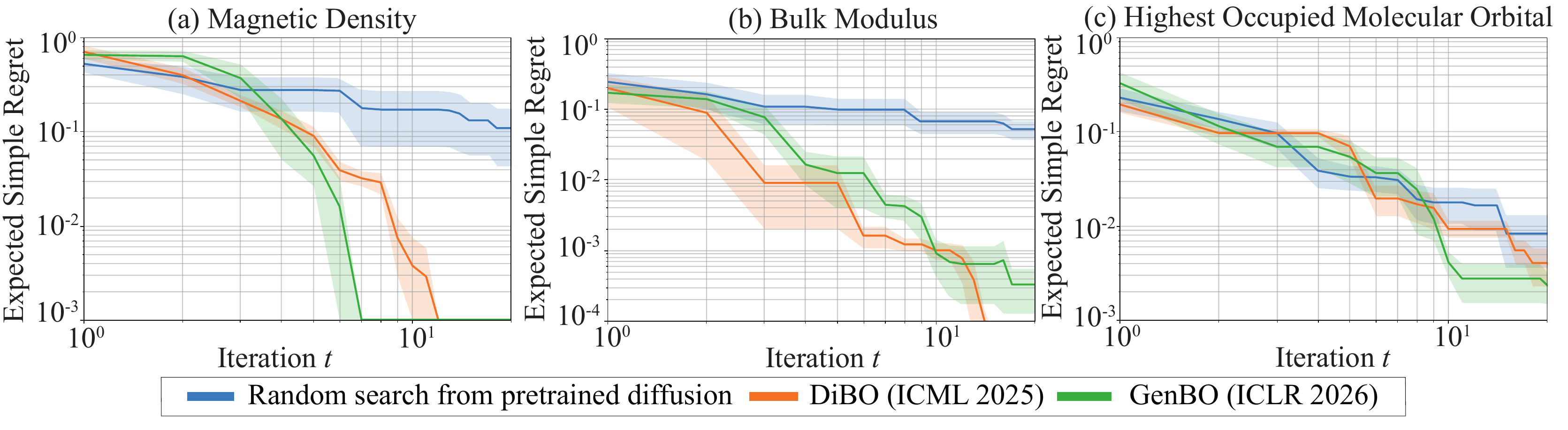}
    \caption{Log-log plots of regret of two crystal and one molecular optimization tasks illustrating three acceleration types: (a) exponential, (b) polynomial, and (c) constant-factor acceleration.}
    \label{fig:motivation}
\end{figure}

\textbf{Diffusion BO.}
Guided-Diffusion BO (GDBO~\citep{yun2025posterior, oliveira2025generative, sun2024fast}) recasts acquisition maximization as sampling: it fine-tunes a pretrained diffusion model so that samples concentrate in high-acquisition regions. This has shown strong performance in scientific discovery~\citep{chen2025matinvent,park2025guiding}, yet its optimization behavior is largely unexplained.
A central obstacle is that classical regret analyses rely on maximum information gain (MIG) and exact acquisition maximization~\citep{srinivas2009gaussian, adachi2025bayesian}. In GDBO, however, the diffusion model is a generative prior over inputs $p_0(x)$, the sampler targets an acquisition-tilted distribution $p_a(x\mid y)$\footnote{A Gibbs posterior using the acquisition function $a_t(x)$ as a reward or negative loss.}, and a Bayesian surrogate $p(y\mid x)$ is optional. Moreover, sampling from this guided distribution introduces approximation error. Existing analyses~\citep{yun2025posterior,oliveira2025generative,sun2024fast} show that a fine-tuned sampler can track a tilted distribution, but distribution tracking is not itself a regret guarantee.

\textbf{Three empirical regimes.}
Empirically, GDBO does not show a single form of acceleration. 
As shown in Figure~\ref{fig:motivation}, DiBO~\citep{yun2025posterior} and GenBO~\citep{oliveira2025generative} exhibit three distinct finite-budget regimes: exponential, polynomial, and only constant-factor improvement over random search.
The first exceeds what classical polynomial BO rates~\citep{srinivas2009gaussian, bull2011convergence, zhou2020neural} would predict; the second matches the usual expectation of BO acceleration but still requires explaining the source of the exponent gain; the third shows that guided diffusion does not automatically improve rates.
Since the same acquisition family can appear across these regimes, acquisition-function design alone cannot explain them.
A theory of GDBO must instead identify when guided sampling changes the convergence rate.

\begin{figure}
    \vspace{-1em}
    \centering
    \includegraphics[width=1\linewidth]{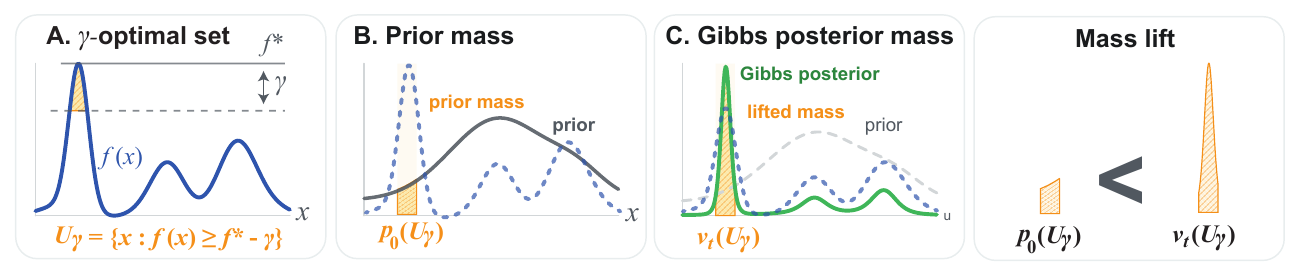}
    \caption{Concept of mass lift: posterior gain on the $\gamma$-optimal set over the prior.}
    \label{fig:mass_lift}
\end{figure}
\textbf{Our contributions.}
We provide the \emph{first} certificate-based expected simple-regret analysis for this class of GDBO algorithms without relying on MIG, RKHS assumptions, or exact acquisition maximization.
The key quantity is \emph{mass lift}: the increase in probability assigned to a \(\gamma\)-optimal set by the guided sampler relative to the pretrained prior.
This single mechanism explains the three regimes above: large threshold-set lift yields exponential-looking search acceleration, lift on local critical bands yields polynomial active-learning acceleration, and weak lift or dominant sampler/score errors yields only constant-factor improvement.
Technically, our analysis separates three roles that are entangled in existing methods: acquisition calibration, terminal sampler approximation, and finite-pool best-of-$N$ search. This separation gives a common interface for DiBO-style RTB samplers, GenBO-style weighted scoring objectives, and proposal-corrected resampling. Once an acquisition function is chosen, mass lift becomes a concrete design principle for generative BO: it determines the finite-budget search exponent through the probability of hitting the relevant threshold set.

\section{Setup and background}\label{sec:setup}
Let $\X:=\supp(p_0)$ be the accessible domain of a frozen pretrained generator $p_0$. We assume $\X$ is finite but potentially enormous. The objective $f:\X\to\R$ is expensive, and observations satisfy $Y=f(X)+\xi$ with independent noise $\xi$. We use expected simple regret~\footnote{Expectation is over all algorithmic randomness, e.g., diffusion sampling.}:
\begin{equation}
\label{eq:near_opt_set}
 r_T:=\E[f^\star-f(\widehat x_T)],
 \qquad
 f^\star:=\max_{x\in\X}f(x),
 \qquad
 \widehat x_T = \argmax_{x \in\D_T} \widehat{a}_T(x),
\end{equation}
where $\widehat{a}_T$ is the learned acquisition function. If $\lim_{T \to \infty} r_T = 0$, the algorithm is no-regret~\citep{srinivas2009gaussian} and asymptotically identifies a global optimizer in $\Xstar=\argmax_{x\in\X}f(x)$. 

\begin{wrapfigure}[12]{r}{0.54\textwidth}
  \vspace{-2.7em}
  \begin{minipage}{\linewidth}
    \begin{algorithm}[H]
    \caption{GDBO template}
    \label{alg:generic_gbo}
    \begin{algorithmic}[1]
    \For{$t=1,2,\ldots,T$}
        \State Train $\widehat a_t$ from the current data $\D_t$.
        \State Define the target $\widehat q_t(x)$.
        \State Estimate a terminal sampler $\nu_t \approx \widehat q_t$.
        \State Draw $N$ candidates $X_{t,1},\ldots,X_{t,N}\stackrel{\mathrm{i.i.d.}}{\sim}\nu_t$.
        \State $X_t^{(K)}=\argtopk_{x\in\{X_{t,1},\ldots,X_{t,N}\}}\widehat a_t(x)$
        \State Optionally $X_t^{(J)} := X_{t,1},\ldots,X_{t,J} \stackrel{\mathrm{i.i.d.}}{\sim} p_0$.
        \State Evaluate $X_t^{(K)} \cup X_t^{(J)}$ and update $\D_{t+1}$.
    \EndFor
    \State \Return $\widehat x_T = \argmax_{x \in\D_T} \widehat{a}_T(x)$.
    \end{algorithmic}
    \end{algorithm}
  \end{minipage}
\end{wrapfigure}

\subsection{Acquisition sampling}
Classical BO maximizes an acquisition $a_t(x)$, which is generally intractable over structured spaces. A common variational alternative~\citep{wild2023rigorous, zhang2024diffusion, oliveira2025generative, knoblauch2019generalized} is:
\[
q_t^\star
\in
\argmax_{q\in\Delta(\X)}
\left\{
\E_q\bigl[a_t(x)\bigr] - \frac{1}{\beta}\,\KL\bigl(q\,\|\,p_0\bigr)
\right\},
\]

\begin{wrapfigure}[18]{r}{0.45\textwidth}
    \vspace{-1.0em}
    \centering
    \includegraphics[width=0.45\textwidth]{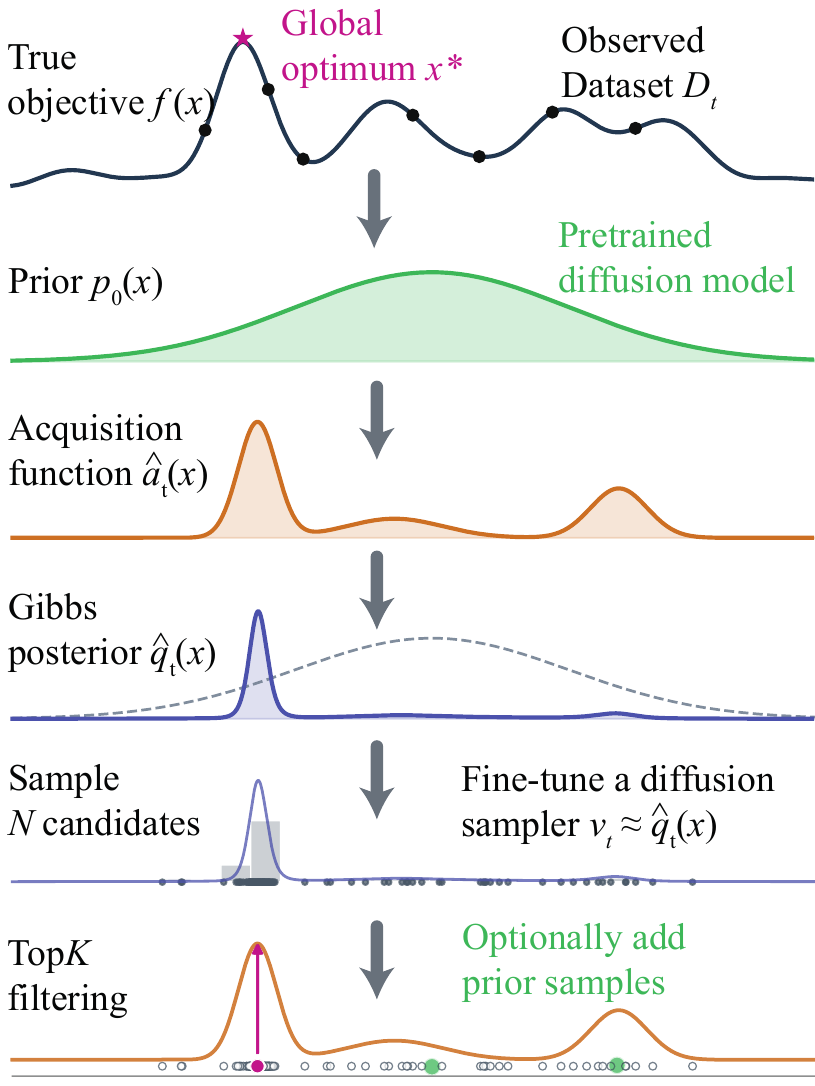}
    \caption{Visual explanation of Algorithm~\ref{alg:generic_gbo}.}
    \label{fig:gdbo}
\end{wrapfigure}
where $\beta>0$ controls regularization, and $\Delta(\X)$ is the probability simplex.
The optimizer is a Gibbs posterior~\citep{peters2007reinforcement,korbak2022reinforcement,rafailov2023direct};
\begin{equation}
\label{eq:temp_post_obj}
q_t^\star(x) \propto p_0(x)\exp\bigl(\beta a_t(x)\bigr).
\end{equation}
The KL term prevents the fine-tuned sampler from drifting too far from $p_0$, mitigating mode collapse~\citep{srivastava2017veegan}. With Probability of Improvement (PI; \citep{kushner1964new}), $a_t(x)=\log\Pr(Y\ge\tau_t\mid X=x)$ and $\beta=1$ recover the event posterior $p(x\mid Y\ge\tau_t)$~\citep{hvarfner2022pibo,adachi2024quadrature,adachi2024looping,xu2024principled}. The Expected Improvement (EI~\citep{mockus1975application}) in GenBO~\citep{oliveira2025generative} applies the same log-form $a_t(x) = \beta^{-1} \log \mathrm{EI}_t(x)$.

\textbf{Best-of-$N$ sampling.}
In practice, the target \eqref{eq:temp_post_obj} is approximated by a diffusion-based sampler:
\begin{align*}
&\hat q_t(x) \propto p_0(x) \exp\bigl(\beta \hat a_t(x)\bigr),\\
&X_{t,1},\ldots,X_{t,N}\stackrel{\mathrm{i.i.d.}}{\sim} \nu_t,\\
&\hat{x}_t \in \argmax_{x\in \{X_{t,1},\ldots,X_{t,N}\}} \, \hat a_t(x),
\end{align*}
where $\widehat a_t\approx a_t$, $\widehat q_t\approx q_t^\star$, and $\nu_t\approx\widehat q_t$. Top-$K$ is the batch analogue. Relative to exact maximization, GDBO therefore introduces both terminal distribution error $\nu_t\approx\widehat q_t$ and finite-pool best-of-$N$ error\footnote{i.e., it is not exact maximization $\argmax_{x\in \X} \, \hat a_t(x)$.}.

\subsection{Fine-tuning a diffusion sampler}
GDBO methods follow Algorithm~\ref{alg:generic_gbo}\footnote{The regret theorem assumes labeled i.i.d. prior-refresh samples for score learning (Assumption~\ref{ass:score_reservoir_app}) to avoid martingale arguments. In experiments, we train on all labels, including top-$K$ exploitation labels, which often improves performance; proving this requires adaptive local score-learning bounds. Appendix~\ref{app:unified_active_learning_floor} provides the needed local ranking decomposition.}; they differ in how they estimate $\widehat a_t$ and align the sampler with the Gibbs posterior. Unlike classical BO, they need not place a GP~\citep{rasmussen2003gaussian} on $f$.

\textbf{DiBO.}
DiBO~\citep{yun2025posterior} uses Upper Confidence Bound (UCB~\citep{srinivas2009gaussian}),
$a_t(x)=\mu_t(x)+\beta_{\mathrm{UCB}}\sigma_t(x)$, where $\mu_t$ and $\sigma_t$ come from a neural ensemble surrogate. It aligns the diffusion sampler with Eq.~\eqref{eq:temp_post_obj} using Relative Trajectory Balance (RTB) loss~\citep{venkatraman2024amortizing}.

\textbf{GenBO.}
GenBO~\citep{oliveira2025generative} uses EI as its acquisition function: $\mathbb{E}[(y-\tau_t)_+]$, where $\tau_t\in\mathbb{R}$ is a threshold and $(z)_+ := \max\{z,0\}$. It adopts a likelihood-free BO (LFBO; \citep{song2022general}) fine-tuning approach. Rather than placing a Bayesian surrogate on the objective $f$, LFBO directly learns an acquisition estimator $\widehat a_t \approx a_t$ using deterministic models, avoiding the need for ensembles.
Based on density-ratio estimation~\citep{sugiyama2012density, tiao2021bore, oliveira2022batch}, LFBO derives a weighted classification loss using binary labels
$z_i := \mathbbm{1}\{y_i \geq \tau_t\}$,
with weights $u(y_i) = (y_i-\tau_t)_+$. GenBO further extends this idea to a diffusion-alignment loss:\footnote{The original GenBO primarily uses Direct Preference Optimization (DPO)~\citep{rafailov2023direct, wallace2024diffusion, fujisawa2025scalable} with $S(q,x_i):=\log q_i(x)$. Here we focus on their extension to diffusion models in Section 3.3.}
$\mathcal{L}(q) := \sum_{(x_i,y_i)\in\D_t} \frac{p_0(x_i)}{\widehat q_{i-1}(x_i)} u(y_i) S(q,x_i)$,
where $S(q,x_i)$ is a proper scoring rule~\citep{gneiting2007strictly, adachi2025fixing}, such as the standard diffusion training loss.

\textbf{Algorithm flow.}
Algorithm~\ref{alg:generic_gbo} and Fig.~\ref{fig:gdbo} summarize the overall procedure. For a generic theory, we assume $J\geq 1$. The special case $J=0$ is treated separately in Appendix~\ref{app:unified_threshold_acceleration} as local score learning.

\section{A regret decomposition based on search and score learning}
\label{sec:mass_regret}

We analyze the GDBO family rather than a single algorithm. Let $a_t:\X\to\R$ be an \emph{ideal} acquisition used only for analysis, let $\widehat a_t$ be its learned version. We define the $\gamma$-optimal set:
\begin{align}
\label{eq:gamma-optimal_set}
    \U_\gamma:=\{x\in\X: f^\star-f(x)\le \gamma\}.
\end{align}

\subsection{Standing assumptions}
\label{subsec:generic_score_rate_main}
Formal assumptions are given in Appendix~\ref{app:formal_assumptions}; here we state the high-level content:
\begin{compactenum}
    \item[(A1)] We require $p_0(\U_\gamma)>0$. If this condition fails, then $p_0(\Xstar)=0$, so any sampler absolutely continuous with respect to $p_0$ assigns zero probability to querying a global optimum. This requirement can be viewed as Cromwell's rule for GDBO~\citep{jackman2009bayesian}.
    \item[(A2)] The ideal score ranks larger objective values higher. Noisy EI satisfies this condition (Proposition~\ref{prop:ei_calibrated_app} and Corollary~\ref{cor:log_ei_calibrated_app}).  Noiseless EI is not globally calibrated; it is used only through the setwise threshold certificate in Proposition~\ref{prop:noiseless_ei_setwise} and the local regret bound Corollary~\ref{cor:noiseless_ei_local_regret}.
    \item[(A3)] We assume the acquisition model lies in a Besov space and learn its parameters via clipped-ReLU empirical risk minimization (ERM). This assumption is standard in diffusion theory~\citep{oko2023diffusion,kawata2025direct}. Unlike RKHSes, Besov spaces capture inhomogeneous smoothness and singularities~\citep{suzuki2018adaptivity,imaizumi2019deep}, making them well suited to structured materials-discovery problems, which are often described as “needle-in-a-haystack” searches~\citep{hautier2019finding,siemenn2023fast}.
    \item[(A4)] The diffusion sampler may be imperfect: the realized law $\nu_t$ differs from $\widehat q_t$ through a setwise terminal mass error $d_{t,A}$.
\end{compactenum}

\subsection{A generic regret decomposition}
\label{subsec:main_regret_decomp}

\begin{theorem}[Regret decomposition, informal]
\label{thm:regret_decomp_informal}
Under Assumption~\ref{ass:calibrated_score_app}, for any $\gamma\ge0$,
\begin{equation}
\label{eq:main_regret_decomp_informal}
 r_T := \E[f^\star-f(\widehat x_T)]
 \le
 \underbrace{\gamma}_{\text{optimization accuracy}}
 +
 \underbrace{2L_a\,\E\|\widehat a_T-a_T\|_\infty}_{\text{score-learning term}}
 +
 \underbrace{R_f\,\Pbb(\D_T\cap \U_{\gamma_T}=\varnothing)}_{\text{search term}}.
\end{equation}
The formal statement and proof are in Theorem~\ref{thm:formal_regret_decomp}.
\end{theorem}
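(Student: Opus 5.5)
The plan is to condition on the event $E:=\{\D_T\cap\U_{\gamma_T}\neq\varnothing\}$, where $\gamma_T$ is the enlarged threshold used in the formal statement, and split the regret into a good-event part and a bad-event part. On $E^c$ we use only the crude bound $f^\star-f(\widehat x_T)\le R_f$, where $R_f:=f^\star-\min_{x\in\X}f(x)$ is finite because $\X$ is finite. This gives exactly the search term $R_f\,\Pbb(\D_T\cap\U_{\gamma_T}=\varnothing)$. Everything else is spent controlling $f^\star-f(\widehat x_T)$ on $E$.

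On $E$, pick a witness $x^\dagger\in\D_T\cap\U_{\gamma_T}$. By definition of $\widehat x_T$ as the maximizer of $\widehat a_T$ over $\D_T$, we have $\widehat a_T(\widehat x_T)\ge\widehat a_T(x^\dagger)$. Passing to the ideal score costs two sup-norm errors:
\begin{equation*}
a_T(\widehat x_T)\ \ge\ a_T(x^\dagger)-2\|\widehat a_T-a_T\|_\infty .
\end{equation*}
Next we invoke the calibration Assumption~\ref{ass:calibrated_score_app} (A2) in its quantitative, Lipschitz-type form. The intended reading is that a deficit in ideal score translates into at most $L_a$ times that deficit in objective value, up to the slack already built into $\gamma_T$ versus $\gamma$. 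Applied to the display, this yields
\begin{equation*}
f(\widehat x_T)\ \ge\ f(x^\dagger)-\gamma_{\rm slack}-2L_a\|\widehat a_T-a_T\|_\infty ,
\end{equation*}
and since $f(x^\dagger)\ge f^\star-\gamma_T$, we obtain $f^\star-f(\widehat x_T)\le\gamma+2L_a\|\widehat a_T-a_T\|_\infty$ after matching $\gamma_T$ and the slack to $\gamma$. We then take expectations, bounding $\E[\,\cdot\,\mathbbm{1}_E]$ by the unrestricted expectation because every term is nonnegative. Adding the $E^c$ contribution gives \eqref{eq:main_regret_decomp_informal}.

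The main obstacle is the calibration step, namely getting from an ideal-score comparison to an objective comparison with constant $L_a$. Monotone ranking (A2) by itself only says that larger $f$ gives larger $a_T$. We need the reverse quantitative direction: if $a_T(x)\ge a_T(x')-\epsilon$, then $f(x)\ge f(x')-(\text{slack})-L_a\epsilon$. I would first try to read Assumption~\ref{ass:calibrated_score_app} as exactly this inverse-modulus statement. If it is stated differently, one alternative is to define $\gamma_T:=\gamma$ and absorb the threshold mismatch into the choice of $\gamma_T$. Another is to handle setwise calibration, as for noiseless EI, by comparing to the level set $\{a_T\ge\inf_{\U_{\gamma_T}}a_T-2\|\widehat a_T-a_T\|_\infty\}$ and bounding its objective gap.

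A secondary subtlety is that $\widehat a_T$ and $\D_T$ are random and dependent. The argument is pathwise, so no independence is needed, and the expectation of the sup-norm error appears directly.
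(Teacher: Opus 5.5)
Your proposal is correct and is essentially the paper's proof. You split on $\{\D_T\cap\U_{\gamma}\neq\varnothing\}$, use a witness $x^\dagger$ and the argmax property to get $a_T(\widehat x_T)\ge a_T(x^\dagger)-2\|\widehat a_T-a_T\|_\infty$, convert this through calibration, and bound the complement by $R_f$.

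The slack you worry about does not exist. The $\gamma_T$ in the display is just $\gamma$ (a notational slip in the informal statement). Assumption~\ref{ass:calibrated_score_app} states exactly $f(x)-f(x')\le L_a\{a_t(x)-a_t(x')\}$ for every pair with $f(x)\ge f(x')$. So either $f(x^\dagger)\le f(\widehat x_T)$ and the bound is trivial, or applying the assumption to the pair $(x^\dagger,\widehat x_T)$ gives $f(x^\dagger)-f(\widehat x_T)\le 2L_a\|\widehat a_T-a_T\|_\infty$ with $\gamma_{\rm slack}=0$. This is exactly what the paper's contradiction argument does.
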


The score-learning term is obtained via $s_a$-smooth Besov functions after $\Psi:\X\to[0,1]^{d_a}$ features.

\begin{theorem}[Generic score-learning rate, informal]
\label{thm:generic_score_rate_informal}
Under Assumptions~\ref{ass:finite_alignment_app},\ref{ass:score_reservoir_app},\ref{ass:besov_score_app},\ref{ass:relu_erm_app}, fix any effective smoothness $0<\bar s_a<s_a-d_a/p$.
The balanced-width clipped-ReLU ERM satisfies
\begin{equation}
\label{eq:main_score_rate_informal}
    \E\|\widehat a_t-a_t\|_\infty
    =
    O\!\left(t^{-\vartheta_a}(\log t)^{c_a}\right),
    \qquad
    \vartheta_a=\frac{\bar s_a}{2\bar s_a+d_a}.
\end{equation}
Moreover, for every fixed $\kappa_a>0$ there is an envelope
$\bar\varepsilon_t=C t^{-\vartheta_a}(\log t)^{c_a}$ such that
\begin{equation}
\label{eq:main_score_hp_informal}
    \Pbb\{\|\widehat a_t-a_t\|_\infty>\bar\varepsilon_t\}
    \le u_t := C t^{-1-\kappa_a}.
\end{equation}
The formal result is Theorem~\ref{thm:generic_score_learning_rate}. If a direct $L_\infty$ ReLU approximation theorem with exponent $s_a$ is available for the chosen score class, then $\bar s_a$ can be replaced by $s_a$.
\end{theorem}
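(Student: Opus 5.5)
The plan follows the standard nonparametric-regression recipe for deep ReLU networks (Suzuki 2018; Schmidt-Hieber; Oko et al.), adapted to the sup-norm and to the finite domain $\X$ seen through the features $\Psi$. By Assumption~\ref{ass:score_reservoir_app}, the score learner at round $t$ has access to $n_t\asymp t$ labeled i.i.d.\ prior-refresh samples $(X_i,Y_i)$ with $X_i\sim p_0$. It fits $\widehat a_t$ by clipped-ReLU ERM over a network class $\F_t$ of width/depth/sparsity $W_t$ chosen to balance bias and variance. Because the target is a bounded functional of the conditional law of $Y$ given $X$ (Assumption~\ref{ass:finite_alignment_app}), I would write $\widehat a_t - a_t$ as an approximation part plus an estimation part and bound each.

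\textbf{Approximation.} By Assumption~\ref{ass:besov_score_app}, $a_t = g\circ\Psi$ with $g\in B^{s_a}_{p,q}([0,1]^{d_a})$. Since $s_a>d_a/p$, the space embeds into continuous functions. For any $\bar s_a<s_a-d_a/p$, the embedding $B^{s_a}_{p,q}\hookrightarrow B^{\bar s_a}_{\infty,\infty}$ lets me invoke a Hölder/Besov $L_\infty$ ReLU approximation theorem. This gives a network with $W$ nonzero parameters and $\|g_W-g\|_\infty\lesssim W^{-\bar s_a/d_a}(\log W)^{c}$. The same step shows that a direct $L_\infty$ approximation result at exponent $s_a$ lets me replace $\bar s_a$ by $s_a$.

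\textbf{Estimation and the sup-norm passage.} Clipping keeps the networks bounded. Standard ERM oracle inequalities with pseudo-dimension bounds ($\Pdim(\F_t)\lesssim W\log W$) and Bernstein/Talagrand concentration then give, with probability $1-e^{-u}$,
\[
\|\widehat a_t-a_t\|_{L_2(p_0)}^2\lesssim \|g_W-g\|_\infty^2+\frac{W\log W\log t+u}{t}.
\]
The main obstacle is converting this $L_2(p_0)$ bound into an $L_\infty$ bound over $\X$. I would try two routes. First, the finite-alignment assumption gives a uniform lower mass $p_0(x)\ge p_{\min}$ on the relevant support after feature alignment, so $\|h\|_\infty^2\le p_{\min}^{-1}\|h\|_{L_2(p_0)}^2$. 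Second, if that fails, I would use an interpolation inequality for Besov-smooth functions in which both the network and $g$ have controlled smoothness: $\|h\|_\infty\lesssim\|h\|_{L_2}^{1-d_a/(2\bar s_a+d_a)\cdot(\cdot)}$. This loses only the gap $s_a\to\bar s_a$, and that loss is exactly why the rate is stated with $\bar s_a$. In either route, balancing $W^{-2\bar s_a/d_a}$ against $W/t$ gives $W_t\asymp t^{d_a/(2\bar s_a+d_a)}$ and the sup-norm rate $t^{-\bar s_a/(2\bar s_a+d_a)}$ up to polylogs.

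\textbf{Tail bound and expectation.} Choosing $u=(1+\kappa_a)\log t$ defines the envelope $\bar\varepsilon_t=Ct^{-\vartheta_a}(\log t)^{c_a}$, with the constant absorbing $\kappa_a$, and gives failure probability $u_t=Ct^{-1-\kappa_a}$; this is \eqref{eq:main_score_hp_informal}. For \eqref{eq:main_score_rate_informal}, I would split the expectation on this event. Clipping gives $\|\widehat a_t-a_t\|_\infty\le 2B$ always, so
\[
\E\|\widehat a_t-a_t\|_\infty\le\bar\varepsilon_t+2Bu_t,
\]
and $u_t=o(t^{-\vartheta_a})$ because $\vartheta_a<1/2<1+\kappa_a$.
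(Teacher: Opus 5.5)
Your first route is the paper's proof. It uses the Besov-to-H\"older embedding with an $L_\infty$ ReLU approximation bound, a pseudodimension-based localized ERM oracle inequality in $L_2(P_\Psi)$, and the balanced width $W\asymp n^{d_a/(2\bar s_a+d_a)}$. It then applies the finite-support conversion $\|h\|_\infty\le p_{\min}(P_\Psi)^{-1/2}\|h\|_{L_2(P_\Psi)}$, valid because $p_{\min}(P_\Psi)\ge\min_{x}p_0(x)>0$ on the finite $\X$, takes a deviation level of order $\log t$, and integrates the bounded tail.

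The interpolation fallback is unnecessary, and it would not give the same exponent: it loses a power of the $L_2$ rate and needs Besov-norm control of the fitted network. Also, the $\bar s_a$ loss comes from the embedding in your approximation step, not from the sup-norm passage. Two details remain: push the $g$-level bound through the $L_G$-Lipschitz link $G$, and condition on the past-measurable index $\zeta_t$ with constants uniform over $\mathcal Z$.
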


For exact optimization set $\gamma_T=0$ and $\U_{\gamma_T}=\Xstar$. Given slack $\omega_{t,\Xstar}>0$ and $p_0^\star:=p_0(\Xstar)$, define
\begin{equation}
\label{eq:main_alpha_def}
    m_t^\star:=[\ell_{t,\Xstar}-\omega_{t,\Xstar}]_+,
    \quad
    \lambda_{{\rm BO}, t} :=J\Lambda(p_0^\star)+N\Lambda(m_t^\star),
    \quad
    \Lambda(u):=-\log(1-u).
\end{equation}

\begin{theorem}[Search term, informal]
\label{thm:end_to_end_informal}
Suppose top-$K$ selection uses $K\ge1$. After the score activation time $T_\mathrm{act}$ in Eq.~\eqref{eq:activation_time}, the miss probability satisfies
\begin{align}
\label{eq:main_end_to_end_miss_informal}
    \Pbb(\D_T\cap\Xstar=\varnothing)
    &\le
    \exp\!\left(-\sum_{t=T_\text{act}}^T \lambda_{\rm BO, t}\right)
    +
    \sum_{t=T_\text{act}}^T
    \exp\!\left(-\sum_{s=t+1}^T\lambda_{\rm BO, s}\right)
    \left( u_t+\frac{d_{t,\Xstar}^2}{\omega_{t,\Xstar}^2}\right),
\end{align}
The formal statement is Theorem~\ref{thm:formal_end_to_end_miss}.
\end{theorem}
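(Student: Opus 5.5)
The plan is to write the miss event as the intersection of per-round miss events and derive a one-step recursion for the miss probability. Let $M_t:=\{\D_{t+1}\cap\Xstar=\varnothing\}$ denote ``no optimizer evaluated through round $t$'', let $\F_{t-1}$ be the history before round $t$, and set $\pi_t:=\Pbb(M_t)$. At each round $t\ge T_{\rm act}$ we condition on $\F_{t-1}$ and on the good event
\[
G_t:=\{\|\widehat a_t-a_t\|_\infty\le\bar\varepsilon_t\}\cap\{\nu_t(\Xstar)\ge \ell_{t,\Xstar}-\omega_{t,\Xstar}\}.
\]
We then bound the probability that round $t$ also fails to hit $\Xstar$.

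On $G_t$ the round-$t$ miss factorises, because the $J$ prior-refresh samples and the $N$ sampler draws are conditionally independent given $\F_{t-1}$. The $J$ i.i.d.\ draws from $p_0$ all miss with probability $(1-p_0^\star)^J=e^{-J\Lambda(p_0^\star)}$. For the sampler pool, the event ``at least one of the $N$ draws lies in $\Xstar$'' has probability at least $1-(1-m_t^\star)^N$. We must show that such a draw is then actually selected into the top-$K$ set. This is where the activation time enters: for $t\ge T_{\rm act}$, the envelope $\bar\varepsilon_t$ from Theorem~\ref{thm:generic_score_rate_informal} is smaller than half the calibration gap of Assumption~\ref{ass:calibrated_score_app} between $\Xstar$ and the complement. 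Consequently, on $\{\|\widehat a_t-a_t\|_\infty\le\bar\varepsilon_t\}$, every optimal candidate outranks every non-optimal one under $\widehat a_t$, so with $K\ge1$ an optimizer enters $X_t^{(K)}$. Hence
\[
\Pbb(M_t\mid\F_{t-1},G_t)\le e^{-\lambda_{{\rm BO},t}}\,\mathbbm 1_{M_{t-1}}.
\]
The step I expect to be the main obstacle is making this ranking argument precise for the formal calibration notion, in particular handling ties and defining $T_{\rm act}$ so that it is exactly the round where $2\bar\varepsilon_t$ falls below the gap.

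Next, bound the bad event. Part one is $\Pbb(\|\widehat a_t-a_t\|_\infty>\bar\varepsilon_t)\le u_t$ by Eq.~\eqref{eq:main_score_hp_informal}. For part two, Assumption (A4) says that the mass assigned to $\Xstar$ is the realized terminal mass, with $\E[(\widehat q_t(\Xstar)-\nu_t(\Xstar))^2]\le d_{t,\Xstar}^2$ and $\widehat q_t(\Xstar)\ge\ell_{t,\Xstar}$ on the score event. Chebyshev's inequality then gives probability at most $d_{t,\Xstar}^2/\omega_{t,\Xstar}^2$ that the sampler mass drops below $m_t^\star$. Combining the two parts,
\[
\pi_t\le e^{-\lambda_{{\rm BO},t}}\pi_{t-1}+u_t+d_{t,\Xstar}^2/\omega_{t,\Xstar}^2 .
\]

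Finally, unroll this linear recursion from $t=T_{\rm act}$ to $T$, using $\pi_{T_{\rm act}-1}\le1$ and discarding the rounds before activation. The homogeneous part yields $\exp(-\sum_{t=T_{\rm act}}^T\lambda_{{\rm BO},t})$. Each inhomogeneous term injected at round $t$ is multiplied by the survival factors of the later rounds, giving $\exp(-\sum_{s=t+1}^T\lambda_{{\rm BO},s})$. This is exactly Eq.~\eqref{eq:main_end_to_end_miss_informal}. A simple induction on $T$ verifies the unrolled form.
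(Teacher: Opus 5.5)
Your proposal is correct and follows the paper's proof essentially step for step. Like the paper, you factorise the per-round miss on the good score-and-mass event and bound the mass-failure event by a conditional Markov inequality on $\{\widehat q_t(\Xstar)-\nu_t(\Xstar)\}_+^2$ (Lemma~\ref{prop:rms_to_mass_event_main}), which gives the recursion $\Pbb(B_t)\le e^{-\lambda_{{\rm BO},t}}\Pbb(B_{t-1})+u_t+d_{t,\Xstar}^2/\omega_{t,\Xstar}^2$, and you then unroll it. The only cosmetic difference is the activation margin: the paper takes $\Delta_f^\star/(4L_a)$ rather than half the gap. That choice gives a strict learned score gap of $\Delta_f^\star/(2L_a)$, which settles the tie issue you flagged, and it also supplies the mass certificate $\ell_{t,\Xstar}$ through Proposition~\ref{prop:gibbs_target_mass_app}.
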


This is the point where a concrete sampler enters: it must bound $d_{t,A}$ and certify enough mass $\ell_{t,A}$. 

\begin{corollary}[Clean rate after a sampler-specific floor, informal]
\label{cor:clean_rate_from_sampler_informal}
Suppose that after a burn-in time $T_0$, $J\ge1$, $N\ge1$, $\ell_{t,\Xstar}\ge\ell_\star>0$, and the sampler-specific analysis gives
$\frac{d_{t,\Xstar}^2}{(\ell_\star/2)^2} \le C_{\rm alg}t^{-1-\kappa_{\rm alg}}$.
\[
    r_T
    =
    O\!\left(
    \underbrace{T^{-\vartheta_a}(\log T)^{c_a}}_\text{score-learning rate}
    +\underbrace{T^{-1-\kappa_a}}_\text{activation failure probability}
    +\underbrace{T^{-1-\kappa_{\rm alg}}}_\text{sampler approximation error}
    +\underbrace{\exp(-\lambda_{\rm BO}T)}_\text{$\Xstar$ miss search probability}
    \right).
\]
The formal proof is Corollary~\ref{cor:formal_clean_rate}.
\end{corollary}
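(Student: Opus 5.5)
We combine the three earlier results, taking $\gamma=\gamma_T=0$ so that $\U_{\gamma_T}=\Xstar$. Theorem~\ref{thm:regret_decomp_informal} then gives
\[
r_T\le 2L_a\,\E\|\widehat a_T-a_T\|_\infty+R_f\,\Pbb(\D_T\cap\Xstar=\varnothing).
\]
Theorem~\ref{thm:generic_score_rate_informal} bounds the first term directly by $O(T^{-\vartheta_a}(\log T)^{c_a})$. What remains is to show that the miss probability is $O(T^{-1-\kappa_a}+T^{-1-\kappa_{\rm alg}}+e^{-\lambda_{\rm BO}T})$.

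\textbf{Choice of slack and per-round rate.} We apply Theorem~\ref{thm:end_to_end_informal} with the fixed slack $\omega_{t,\Xstar}:=\ell_\star/2$ for $t\ge T_1:=\max\{T_0,T_{\rm act}\}$. Then $m_t^\star\ge[\ell_{t,\Xstar}-\ell_\star/2]_+\ge\ell_\star/2$. Since $\Lambda$ is increasing, we get
\[
\lambda_{{\rm BO},t}\ge\lambda_{\rm BO}:=J\Lambda(p_0^\star)+N\Lambda(\ell_\star/2)>0,
\]
which is strictly positive by (A1) and $J,N\ge1$. The sampler hypothesis gives $d_{t,\Xstar}^2/\omega_{t,\Xstar}^2\le C_{\rm alg}t^{-1-\kappa_{\rm alg}}$, so each perturbation in the sum is at most
\[
\eta_t:=u_t+C_{\rm alg}t^{-1-\kappa_{\rm alg}}=O\bigl(t^{-1-\kappa_a}+t^{-1-\kappa_{\rm alg}}\bigr).
\]

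\textbf{Bounding the two terms.} The leading term is at most $\exp(-\lambda_{\rm BO}(T-T_1+1))=O(e^{-\lambda_{\rm BO}T})$, since $T_1$ is a constant. The convolution sum $\sum_{t=T_1}^T e^{-\lambda_{\rm BO}(T-t)}\eta_t$ is the main obstacle: a naive bound summing all $\eta_t$ gives only $O(1)$. I would split it at $t=T/2$.
\begin{itemize}
\item For $t\le T/2$, the weight is at most $e^{-\lambda_{\rm BO}T/2}$ and $\sum_t\eta_t<\infty$ because the exponents exceed $1$. This part is $O(e^{-\lambda_{\rm BO}T/2})$, which is dominated by the polynomial terms.
\item For $t>T/2$, we use $\eta_t\le\eta_{T/2}=O(T^{-1-\kappa_a}+T^{-1-\kappa_{\rm alg}})$ together with the geometric bound $\sum_{k\ge0}e^{-\lambda_{\rm BO}k}=(1-e^{-\lambda_{\rm BO}})^{-1}$.
\end{itemize}
This yields exactly the activation-failure and sampler-error terms.

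\textbf{Before activation.} The activation time $T_{\rm act}$ is finite and depends only on the score envelope of Theorem~\ref{thm:generic_score_rate_informal}. Rounds before $T_1$ therefore only affect constants, since miss events are monotone and only later rounds are needed.

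A subtlety is that the $e^{-\lambda_{\rm BO}T/2}$ piece appears with a halved exponent. To keep the stated $\exp(-\lambda_{\rm BO}T)$, I would absorb this piece into the polynomial terms rather than the exponential one. Multiplying by $R_f$ and adding the score term completes the proof.
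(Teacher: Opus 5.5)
Your proposal is correct and follows the paper's proof. You take $\gamma_T=0$ in the regret decomposition and use slack $\omega_{t,\Xstar}=\ell_\star/2$, so the per-round exponent is at least $\lambda_{\rm BO}$. You then split the convolution sum at $T/2$, which is exactly the paper's Lemma~\ref{lem:geometric_smoothing_app}, including absorbing the $e^{-\lambda_{\rm BO}T/2}$ piece into the polynomial terms. One small fix: in the late part, bound by the monotone envelope $C_u t^{-1-\kappa_a}+C_{\rm alg}t^{-1-\kappa_{\rm alg}}$ rather than by $\eta_t$ itself, since $u_t$ need not be monotone.
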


\textbf{Interpretation.}
Corollary~\ref{cor:clean_rate_from_sampler_informal} contains both exponential search terms and polynomial terms. Asymptotically, the slowest polynomial term dominates; at finite budgets, however, search, activation, or sampler floors can be the bottleneck. We next explain how this produces the regimes in Figure~\ref{fig:motivation}.

\subsection{Problem-specific acceleration via threshold mass lift}
\label{subsec:mass_lift_window_main}

\begin{figure}
    \centering
    \vspace{-1em}
    \includegraphics[width=0.8\linewidth]{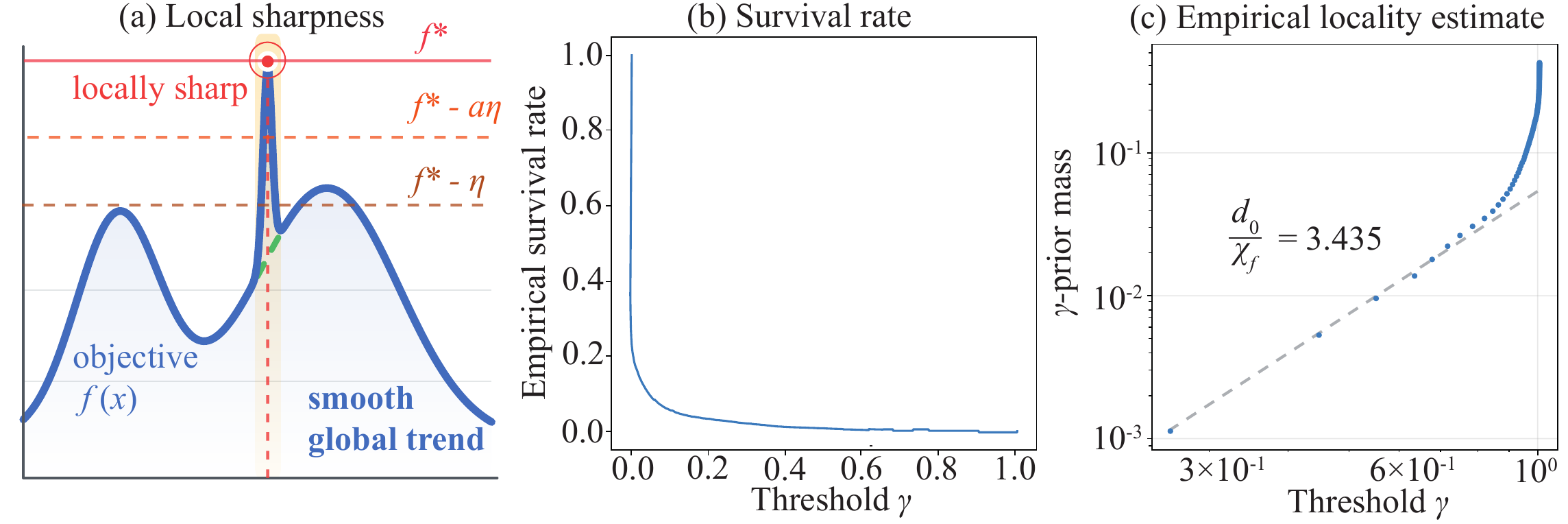}
    \caption{
    Local sharpness and threshold mass lift:
    (a) visual explanation;
    (b) the empirical survival curve is highly skewed;
    (c) a rapidly shrinking \(\gamma\)-optimal set.
    }
    \label{fig:local}
\end{figure}

\begin{wraptable}[7]{r}{0.59\textwidth}
\centering
\vspace{-1.5em}
\caption{Three finite-budget regimes.}
\label{tab:rate}
\resizebox{0.58\textwidth}{!}{%
\begin{tabular}{lll}
\toprule
Acceleration
& Relevant set
& Mass-lift condition\\
\midrule
Exponential
& \(\U_{a\eta}\)
& \(\nu_t(\U_{a\eta}) \gg p_0(\U_{a\eta})\)\\
Polynomial
& \(\mathcal C_{T,\eta}\)
& \(Q_T^{\rm act}(\mathcal C_{T,\eta}) \gg p_0(\mathcal C_{T,\eta})\)\\
Constant-factor
& \(\U_{a\eta},\,\mathcal C_{T,\eta}\)
& no substantial lift\\
\bottomrule
\end{tabular}
}
\end{wraptable}
The exact-optimizer bound above is an asymptotic no-regret statement: it controls the probability of ever hitting $\Xstar$. Finite-budget behavior is different. Before the search reaches $\Xstar$, regret is governed by moving near-optimal sets $\U_\gamma$, with the relevant threshold $\gamma$ shrinking as the budget grows. In particular, after the current best gap is $\eta$\footnote{$\eta=\min_{x_i\in\D_t} f^\star-f(x_i)$, the current simple regret.}, achieving a multiplicative improvement by $a\in(0,1)$ requires querying $\U_{a\eta}$ (Fig.~\ref{fig:local}(a)). Random search hits this progress set with probability $p_0(\U_{a\eta})$, whereas GDBO hits it with probability $\nu_t(\U_{a\eta})$. We call the regime $\nu_t(\U_{a\eta})\gg p_0(\U_{a\eta})$ \emph{threshold mass lift}. Table~\ref{tab:rate} summarizes the resulting finite-budget regimes.

\textbf{Local sharpness and random search.}
Random-search difficulty is controlled by how fast $p_0(\U_\gamma)$ vanishes as $\gamma\downarrow0$. A useful local model is
\begin{equation}
\label{eq:local_geometry_main}
    f^\star-f(x)
    \asymp
    \operatorname{dist}_\Psi(x,\Xstar)^{\chi_f},
    \qquad
    p_0\{x:\operatorname{dist}_\Psi(x,\Xstar)\le r\}
    \asymp
    r^{d_0},
\end{equation}
so $p_0(\U_\gamma)\asymp\gamma^{d_0/\chi_f}$. Thus, although fixed-threshold hits are exponential, $\Pbb\{\Gamma_n>\gamma\}=\exp\{-n\Lambda(p_0(\U_\gamma))\}$, the expected best gap looks polynomial because the relevant $\gamma$ shrinks with budget: $\E[\Gamma_{BT}]\asymp T^{-\chi_f/d_0}$. The fitted exponent in Figure~\ref{fig:local}(c) is therefore an effective near-optimal mass exponent: it measures how rapidly the prior mass of \(\U_\gamma\) vanishes as the threshold becomes stricter.  This is consistent with the ``needle-in-the-haystack'' structure in materials
discovery~\citep{hautier2019finding, siemenn2023fast}.

\begin{wrapfigure}[11]{r}{0.4\textwidth}
    \vspace{-1.0em}
    \centering
    \includegraphics[width=0.4\textwidth]{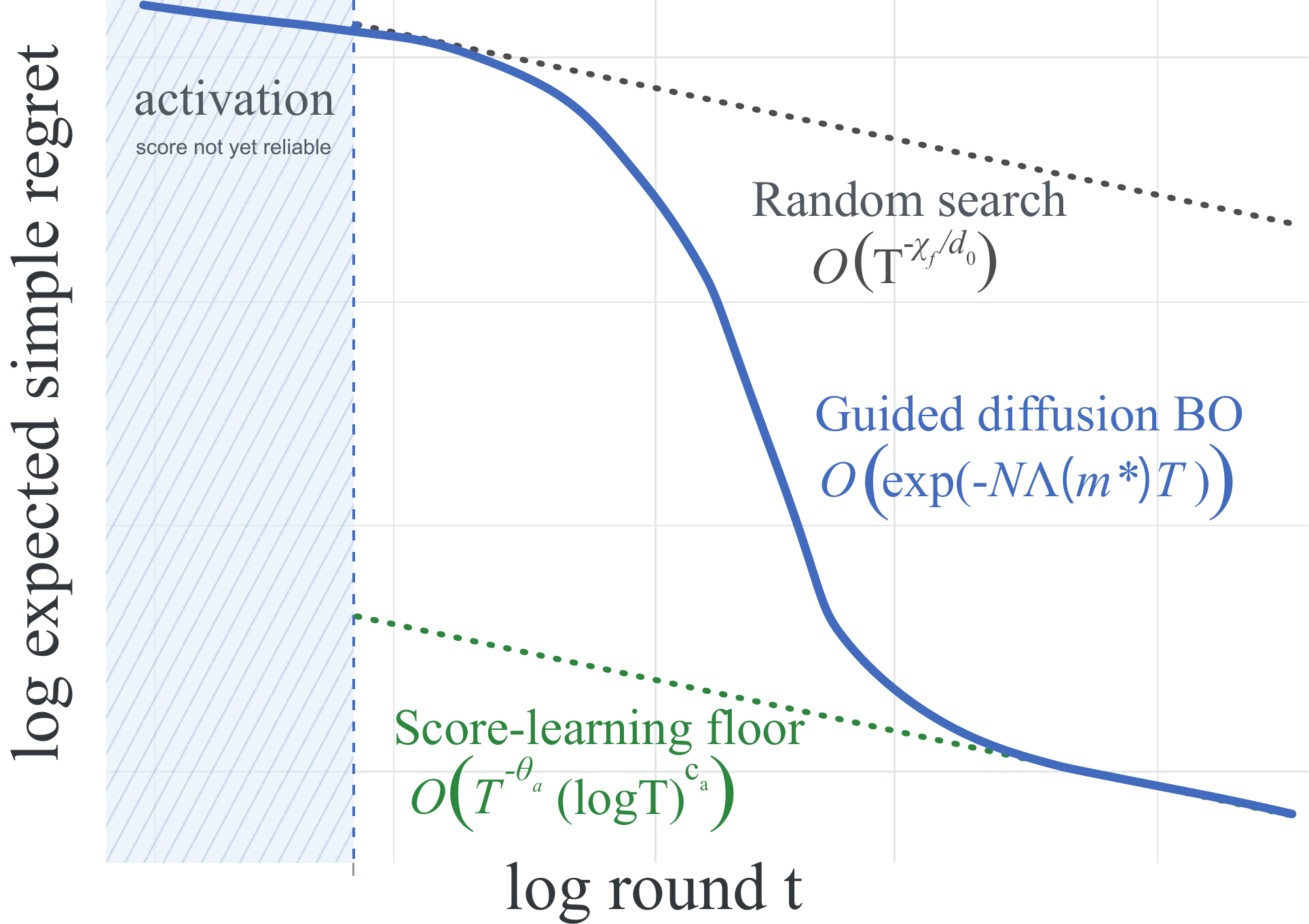}
    \caption{Exponential acceleration.}
    \label{fig:exponential}
\end{wrapfigure}
\textbf{Guided diffusion lifts threshold mass.}
Threshold acquisitions such as EI naturally target progress sets. In the noiseless population case, current gap $\eta$ gives utility $I_\eta(x)=[\eta-(f^\star-f(x))]_+$ and tilted target $q_\eta^{\rm EI}(x)\propto p_0(x)[\eta-(f^\star-f(x))]_+$\footnote{This threshold target can be viewed as the Gibbs posterior in Eq.~\eqref{eq:temp_post_obj} applied to the log-utility $a_t := \log I_\eta(x)$.}. Under Eq.~\eqref{eq:local_geometry_main},
\begin{equation}
\label{eq:ei_constant_progress_main}
    q_\eta^{\rm EI}(\U_{a\eta})
    \gtrsim
    (1-a)\frac{p_0(\U_{a\eta})}{p_0(\U_\eta)}
    \asymp
    1.
\end{equation}
Hence random search hits $\U_{a\eta}$ with vanishing probability, whereas the guided sampler can hit it with constant probability. After score activation and sampler control, the best threshold gap contracts exponentially until it reaches a floor:
\begin{equation}
\label{eq:threshold_contraction_main}
    \E(\eta_T-\eta_{\min})_+
    \le
    (\eta_{T_0}-\eta_{\min})_+
    \exp\{-p_{\rm prog}(1-a)(T-T_0)\},
\end{equation}
where $p_{\rm prog}$ lower bounds the per-round probability of hitting $\U_{a\eta}$.

\begin{wrapfigure}[12]{r}{0.4\textwidth}
    \vspace{-1.0em}
    \centering
    \includegraphics[width=0.4\textwidth]{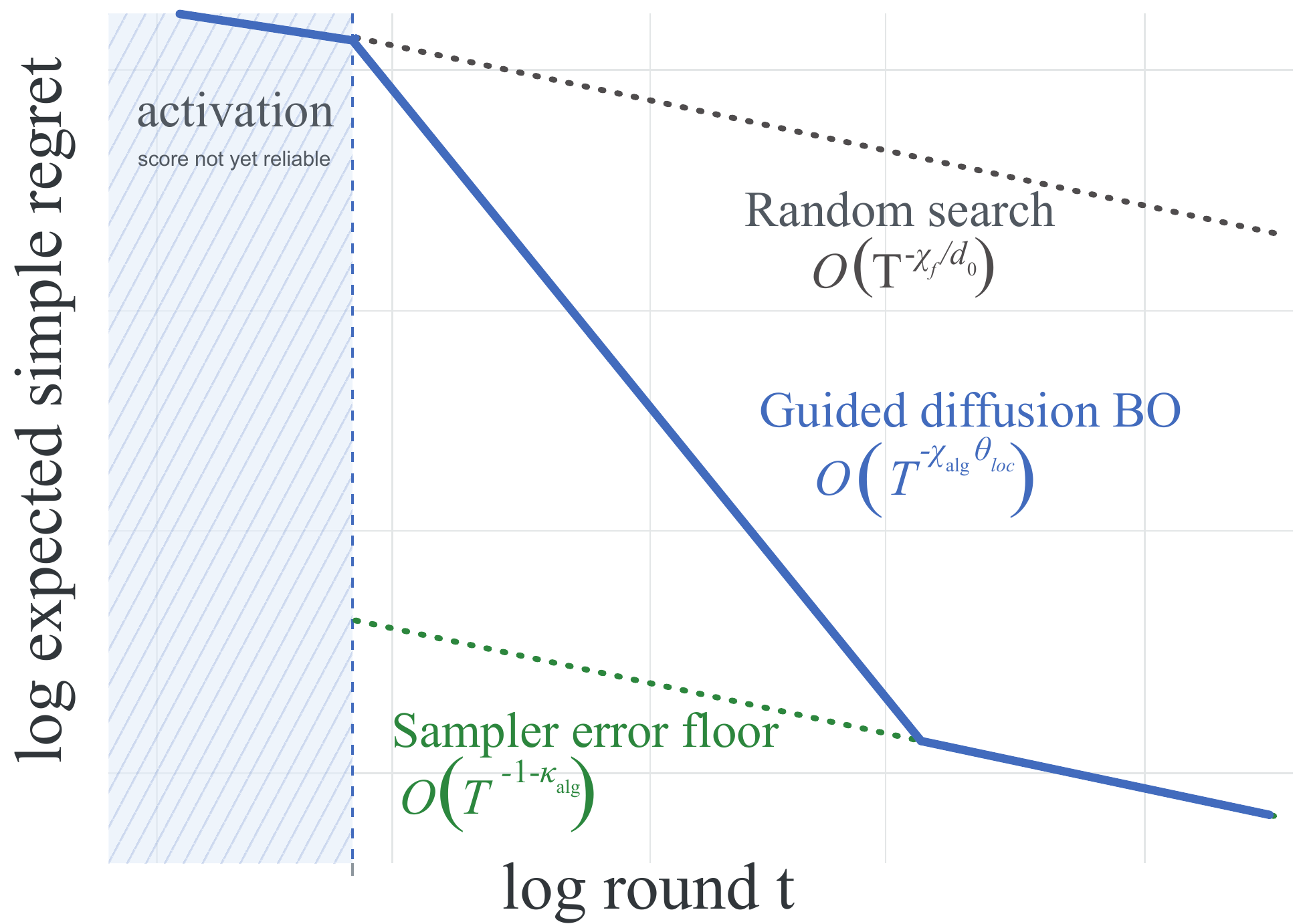}
    \caption{Polynomial acceleration.}
    \label{fig:polynomial}
\end{wrapfigure}
\textbf{Active-learning effect.}
Mass lift can also accelerate score learning. Once the search phase reaches a floor, the only remaining errors come from a local critical band $\mathcal C_{T,\eta}$: points whose ideal scores are close enough to those of near-optimal points that the learned acquisition model may still confuse them. If guided sampling allocates substantially more labeled data to this band than the prior does, $Q_T^{\rm act}(\mathcal C_{T,\eta})\gg p_0(\mathcal C_{T,\eta})$, score learning becomes a form of local active learning. Here, $Q_T^{\rm act}$ denotes the empirical distribution of labeled data used to train the score model.
Appendix~\ref{app:unified_threshold_acceleration} shows that if $Q_T^{\rm act}(\mathcal C_{T,\eta})\gtrsim \eta^{\beta_{\rm act}}$, the floor scales as $\widetilde O\!\left(T^{-\chi_a\vartheta_{\rm loc} / (1+\chi_a\beta_{\rm act}\vartheta_{\rm loc})}\right)$; if the guided sampler keeps constant critical-band mass ($\beta_{\rm act}=0$), this becomes $\widetilde O(T^{-\chi_a\vartheta_{\rm loc}})$. GDBO therefore yields polynomial acceleration when this local rate beats the global rate $\vartheta_a$.

\textbf{Discussion: GDBO vs. GP-UCB.}
GP-UCB typically uses a globally homogeneous smoothness model, often summarized by a lengthscale $\theta_\ell$. For locally sharp functions this can be pessimistic: complexity scales with $(1/\theta_\ell)^d$~\citep{ziomek2024bayesian}, and the roughest local region can dominate. GDBO instead uses the pretrained generator and acquisition-guided sampling to place mass on the threshold sets that matter, turning cursed local needle-in-the-haystack geometry into a possible source of acceleration.

\section{How Algorithmic Choices Enter the Bound}
\label{sec:algorithm_interface}

The analysis reduces a concrete GDBO algorithm to certificates. For each search-relevant set $A\subseteq\X$---$\Xstar$ for no-regret, $\U_{a\eta_t}$ for exponential progress, or $\mathcal C_{T,\eta}$ for active learning---the sampler must certify enough terminal mass on $A$; it need not solve exact acquisition maximization.

\subsection{Diffusion-sampler certificates}
\begin{table}[t]
\centering
\caption{
Sufficient sampler certificates for the terminal-mass interface.  Here
\(A\) denotes any search-relevant set, such as \(\Xstar\) or
\(\U_{a\eta_t}\).  \(M_t\) denotes an internal sampler-training, particle, or
optimization budget, and \(b_{t,A}\) denotes residual decoder or optimization
bias.
}
\label{tab:sampler_floor_summary}
\small
\begin{tabular}{p{0.16\linewidth}p{0.49\linewidth}p{0.27\linewidth}}
\toprule
Sampler family & Sufficient certificate & Consequence for \(d_{t,A}^2\) \\
\midrule
Exact
&
\(\nu_t=\widehat q_t\)
&
\(d_{t,A}=0\).
\\

Generic TV
&
\(\E[\TV(\nu_t,\widehat q_t)^2\mid\F_{t-1}]\le\delta_t^2\)
&
\(d_{t,A}^2\le\delta_t^2\).
\\

Generic KL
&
\(\E[\KL(\widehat q_t\|\nu_t)\mid\F_{t-1}]\le\varepsilon_t\)
&
\(d_{t,A}^2\le\varepsilon_t/2\).
\\

DiBO/RTB~\citep{yun2025posterior}
&
$\E[\KL(P_t^\star\|P_{\theta_t})\mid\F_{t-1}]
\le
\mathcal E_t^{\rm RTB}+b_{t,A,{\rm dec}}^2$,
with \((\Pi_t)_\#P_t^\star=\widehat q_t\) and
\((\Pi_t)_\#P_{\theta_t}=\nu_t\).
&
$d_{t,A}^2
\le
\tfrac12\{\mathcal E_t^{\rm RTB}
+b_{t,A,{\rm dec}}^2\}$.
\\

GenBO~\citep{oliveira2025generative}
&
$\E[\mathcal L_t^S(\nu_t)-\mathcal L_t^S(\widehat q_t)\mid\F_{t-1}]
\le
\mathcal E_t^S+b_{t,{\rm opt}}^2$,
with \(D_S(\widehat q_t,q)\ge c_S\TV(\widehat q_t,q)^2\).
&
$d_{t,A}^2
\le
c_S^{-1}\{\mathcal E_t^S+b_{t,{\rm opt}}^2\}.
$
For the log score, \(c_S=2\).
\\[0.2em]

GenBO + \citep{oko2023diffusion}
&
$\E[\TV(\nu_t,\widehat q_t)^2\mid\F_{t-1}]
\le
\widetilde O(M_{t,{\rm eff}}^{-s/(2s+d_x)})
+b_{t,{\rm opt}}^2$.
&
Same for \(d_{t,A}^2\).  Choose
\(M_{t,{\rm eff}}\) to bound
\(O(t^{-1-\kappa_{\rm alg}})\).
\\
Prior-resampling wrapper
&
Draw $M_t$ proposals from $p_0$, reweight by
$w_t(x)=\exp\{\beta \widehat a_t(x)\}$, and resample with normalized $w_t(x)$.
&
$d_{t,A}^2\le \exp(2\beta A_{\rm osc})/M_t$ for every $A$ with $\operatorname{osc}(\widehat a_t)\le A_{\rm osc}$.\\
\bottomrule
\end{tabular}
\end{table}
Once the learned acquisition $\widehat a_t$ is frozen, the sampler interface is
\[
\widehat q_t(A)\ge \ell_{t,A},
\qquad
\mathbb E\!\left[
\{\widehat q_t(A)-\nu_t(A)\}_+^2
\,\middle|\,\mathcal F_{t-1}
\right]
\le d_{t,A}^2 .
\]
The first inequality is a target-mass certificate and the second is a terminal approximation certificate.  Together they give a miss exponent of order $N_t\Lambda([\ell_{t,A}-\omega_{t,A}]_+)$, plus the sampler floor $d_{t,A}^2/\omega_{t,A}^2$.  Therefore a concrete diffusion algorithm enters the regret bound only through $(\ell_{t,A},d_{t,A})$. 
\begin{remark}[Certificate interface]
\label{rem:certificate-interface}
Table~\ref{tab:sampler_floor_summary} should be read as a sampler-certificate interface. Any training analysis that certifies terminal TV, terminal KL, path-space KL, or a strongly proper scoring divergence to the frozen Gibbs target $\widehat q_t$ can be plugged into the regret bound through $(\ell_{t,A},d_{t,A})$. The table gives sufficient routes for DiBO/RTB, GenBO-style weighted scoring, diffusion distribution learning, and proposal-corrected resampling. Appendix~\ref{sec:theory-experiment-interface} summarizes which routes are used as formal certificates and which are used as empirical mass-lift diagnostics.
\end{remark}

\subsection{Acquisition-function certificates}

The acquisition function enters in three ways. First, it must provide a ranking certificate for the final recommendation; the global version is Assumption~\ref{ass:calibrated_score_app}. Second, it determines the target-mass lower bound $\ell_{t,A}$, the route to exponential-looking acceleration. EI-style threshold utilities can keep $\ell_{t,\U_{a\eta}}$ bounded away from zero even when $p_0(\U_{a\eta})$ is tiny. Third, the acquisition learner controls the score-learning floor. The global rate in Theorem~\ref{thm:generic_score_rate_informal} depends on the score class, whereas polynomial acceleration requires a local statement: the labeled design must place sufficient mass on $\mathcal C_{T,\eta}$. Thus acquisition functions can be plugged into the framework whenever these ranking, mass, and learning certificates can be verified; Appendix~\ref{app:ei_calibration} gives EI calibration and Appendix~\ref{app:mean_ucb_certificates} covers others.

\subsection{Related work}
\textbf{Diffusion black-box optimizer.}
Diffusion BBO has been studied mostly offline~\citep{krishnamoorthy2023diffusion,li2024diffusion} and recently in online GDBO~\citep{li2024diffusion,yun2025posterior,oliveira2025generative}, evolutionary search~\citep{zhang2024diffusion}, and reinforcement learning~\citep{chen2025matinvent}. We use DiBO and GenBO as motivating instances.\\
\textbf{BO for structured inputs.}
Structured surrogate methods include graph-kernel GPs~\citep{ru2021interpretable,griffiths2023gauche}, string kernels~\citep{moss2020boss}, and neural surrogates~\citep{zhou2020neural,white2021bananas}. They can provide regret guarantees but usually abstract away acquisition-maximization error. Latent BO~\citep{gomez2018automatic,grosnit2021high} optimizes in continuous embeddings, but relies on valid decoder extrapolation, which often fails in practice~\citep{moss2025return}.\\
\textbf{Priors over the design space.}
Prior-informed BO injects human experts' beliefs over $x$ into a Bayesian surrogate~\citep{ramachandran2020incorporating,souza2021bayesian,hvarfner2022pibo,hvarfner2024a,adachi2024looping,xu2024principled}, but it does not adapt a diffusion prior using observed data. Trust regions~\citep{eriksson2019scalable,eriksson2021high} can also be seen as a shrinking prior over $x$, yet impose hard constraints and therefore violate Assumption~\ref{ass:finite_alignment_app} unless made probabilistic. \\
\textbf{Approximation-aware BO.}
Approximation-aware BO has been studied in several forms, including hyperparameter estimation error~\citep{berkenkamp2019no, ziomek2024bayesian, ziomek2025time}, sparse variational approximations in GP modeling~\citep{maus2024approximation}, and quadrature error in batch sampling~\citep{adachi2024quadrature, adachi2024adaptive, osselin2025natural}. However, the approximation error induced by diffusion-based sampling has not yet been theoretically characterized.

\section{Experiments}\label{sec:experiments}
\textbf{Baselines and generators.}
We use DiBO~\citep{yun2025posterior} as the main GDBO baseline because it supports multiple acquisitions, and GenBO~\citep{oliveira2025generative} as a supporting baseline. For diffusion models, we use MatterGen~\citep{zeni2025mattergen} for crystals and DiGress~\citep{vignac2023digress} for molecules. Both are pretrained priors $p_0$ over realistic structures and are not conditioned on the target property.\\
\textbf{Evaluation and protocol.}
We evaluate crystal properties with ALIGNN~\citep{choudhary2021atomistic} and molecular properties with GFN2-xTB~\citep{bannwarth2019gfn2xtb}. For all tasks we convert the raw oracle value into a normalized reward $\bar f(x)\in[0,1]$ and report the normalized target regret $R_t^{\rm exp} = 1-\max_{x\in D_t}\bar f(x)$. We estimate guided threshold mass by Monte Carlo,
$\widehat\nu_t(S_\tau) = \frac1m\sum_{i=1}^m \mathbbm{1} \{\bar f(Z_{t,i})\ge \tau\}$, where $Z_{t,i}\sim\nu_t$ and $\tau=0.8$. We measure crystal stability with the quantum-chemical metrics from MatterGen~\citep{zeni2025mattergen}. All methods start from 32 prior samples; each round generates $N=128$ candidates, evaluates the top $K=32$, and repeats for 20 rounds. We use 5 seeds. Experiments ran on eight NVIDIA H200 GPUs (141GB each) with an Intel Xeon Platinum 8480+ CPU. For MatterGen, GenBO takes approximately 2.6 hours per run, while DiBO takes 13 hours. For DiGress, GenBO takes approximately 0.5 hours per run, while DiBO takes 1.6 hours. Code is available at \url{https://anonymous.4open.science/r/Diffusion-BO-E260}.

\begin{wrapfigure}[17]{r}{0.35\textwidth}
    \vspace{-4.5em}
    \centering
    \includegraphics[width=0.35\textwidth]{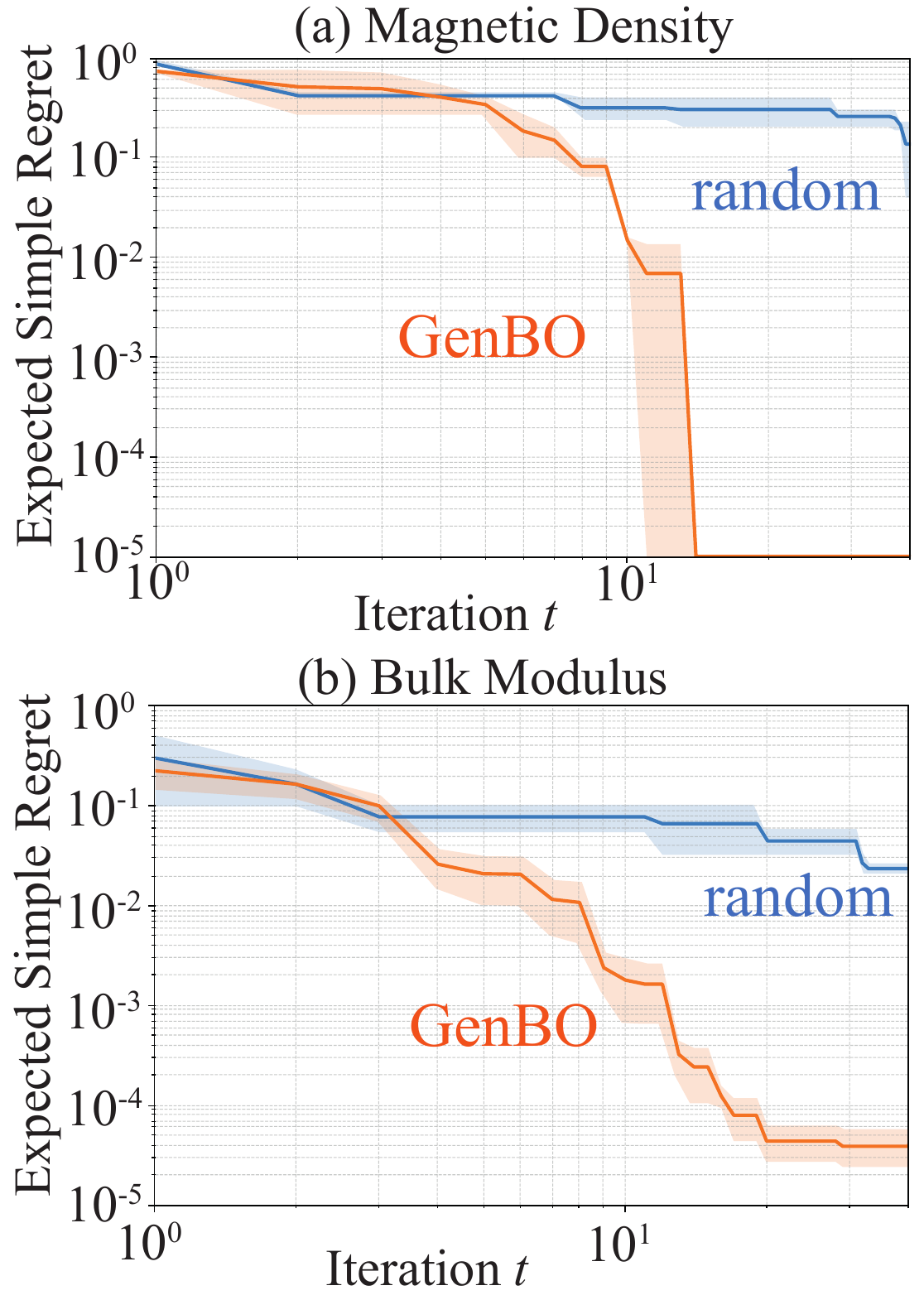}
    \caption{Longer iterations.}
    \label{fig:longer}
\end{wrapfigure}
\subsection{Design choice prediction}
\begin{figure}
    \centering
    \includegraphics[width=0.9\linewidth]{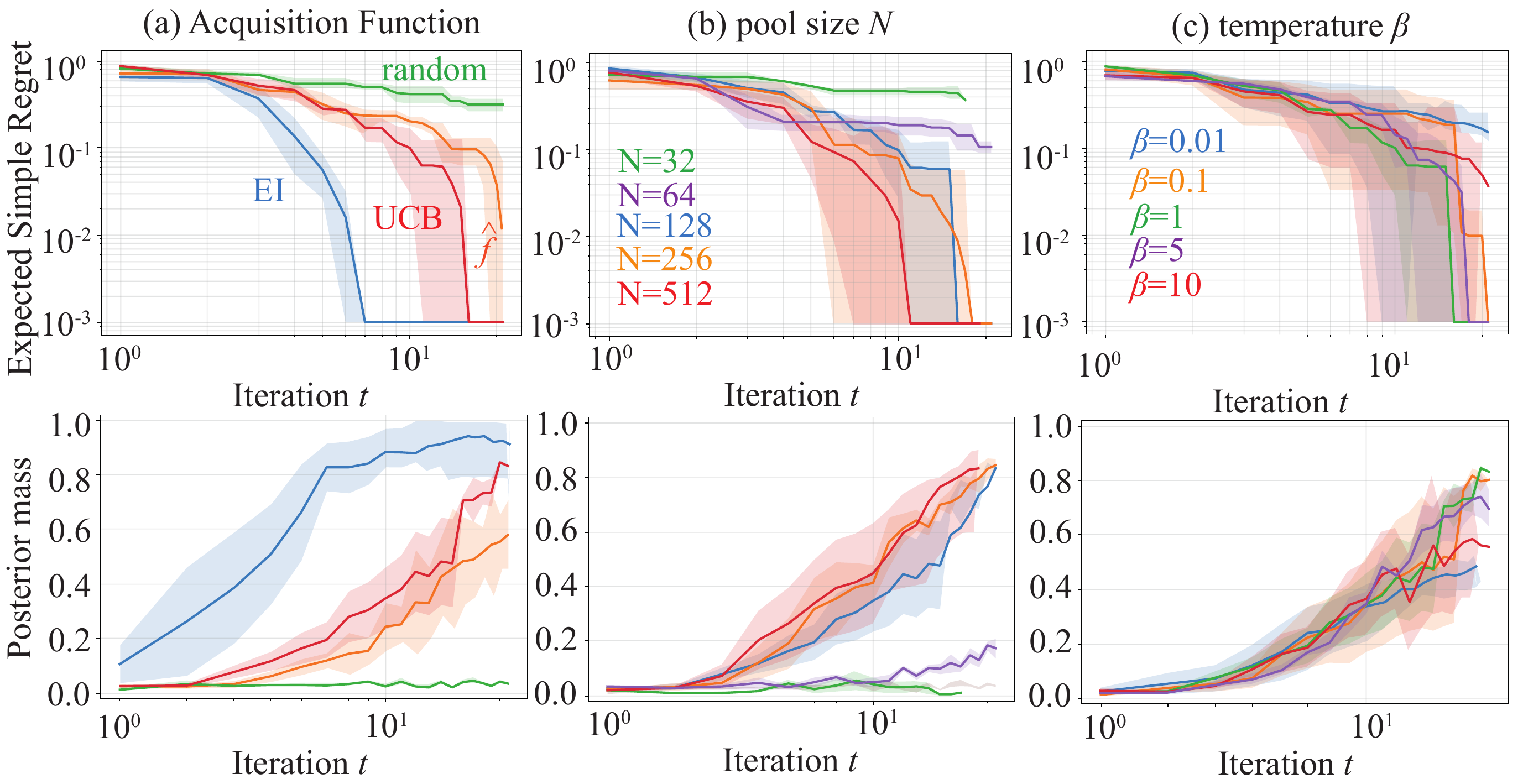}
    \caption{
    Theory-alignment diagnostics on magnetic-density optimization. Top row: normalized target regret. Bottom row: empirical guided threshold mass $\widehat\nu_t(S_{0.8})$. The trends match the search-exponent prediction: convergence is faster when (a) the acquisition produces stronger threshold mass lift and (b) the candidate pool size $N$ is larger, yet (c) temperature $\beta$ does not induce visible change.}
    \label{fig:hypers}
\end{figure}
We test whether design choices affect convergence as predicted. This subsection uses magnetic density, which exhibits the exponential acceleration in Figure~\ref{fig:motivation}(a). We vary the acquisition function $a_t$, $\beta$, $N$, and prior mass $p_0(\Xstar)$.

\textbf{Longer iterations.}
Corollary~\ref{cor:clean_rate_from_sampler_informal} predicts that finite-budget acceleration eventually hits the slowest polynomial floor. Figure~\ref{fig:longer} supports this: in the exponential case all seeds find the ground truth before the floor is visible, while in the polynomial case the curve flattens, consistent with a sampler-approximation floor. Similar sigmoid-like floor effects appear in GenBO~\citep{oliveira2025generative}, e.g., its Fig. 1(a) and Fig. 2(a).\\
\textbf{Acquisition function.}
Our framework requires ranking and mass certificates, not uncertainty estimation per se. We therefore compare EI, UCB, and the no-uncertainty plug-in score $\widehat a_t(x)=\widehat y_t(x)$. These fit the interface in different ways: EI provides a threshold-progress certificate through the EI-weighted/log-EI target, while UCB and the plug-in score are treated as setwise perturbations of the ideal score $a=f$ as in Appendix~\ref{app:mean_ucb_certificates}. Figure~\ref{fig:hypers}(a) shows that all three can yield exponential-looking acceleration when they produce sufficient threshold mass lift.\\
\textbf{Parameters.}
The search term depends on the finite-pool hit exponent $N\Lambda(\nu_t(A))$. For exact search, Proposition~\ref{prop:gibbs_target_mass_app} gives the Gibbs-mass
lower bound $\widehat q_t(\mathcal X^\star) \ge
\left[ 1+ \frac{1-p_0^\star}{p_0^\star} \exp\left\{-\frac{\beta_t\Delta_f^\star}{2L_a}\right\} \right]^{-1}$, which increases with both $\beta_t$ and prior mass. For finite-budget threshold search, the analogous diagnostic is the empirical mass $\widehat\nu_t(S_\tau)$. 
We therefore vary the candidate-pool size N and the inverse-temperature parameter $\beta$. Figure~\ref{fig:hypers}(b) shows that increasing $N$ can accelerate convergence, whereas changing $\beta$ within the tested range has no visible effect. Because larger $\beta$ also increases the fraction of corrupted crystals, a larger candidate pool $N$ is preferable.\\
\textbf{Posterior mass.}
Across settings, expected regret tracks posterior mass lift closely, supporting the mass-lift explanation.

\vspace{-0.5em}
\subsection{Theory-experiment alignment}
\begin{wrapfigure}[6]{r}{0.35\textwidth}
    \vspace{-4em}
    \centering
    \includegraphics[width=0.35\textwidth]{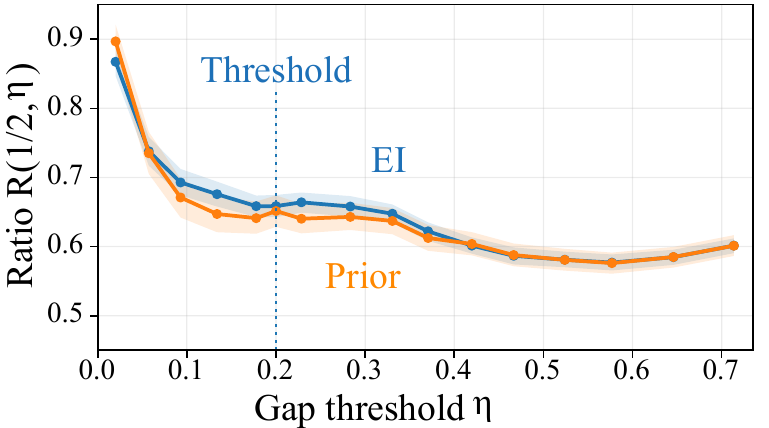}
    \caption{Reverse doubling.}
    \label{fig:reverse}
\end{wrapfigure}
\textbf{Reverse-doubling and mass-lift diagnostics.}
We separate two acceleration effects predicted by the theory.  First, we audit the exponential acceleration effect in the local threshold geometry by estimating the reverse-doubling ratio $R_s(a,\eta) := \frac{M_s(a\eta)}{M_s(\eta)}$, where $M_s(\eta):=\Pr_s(x\in\mathcal U_\eta)$.
For the prior \(s=p_0\), Proposition~\ref{prop:ei_progress_mass} implies the ideal EI lower bound $q_\eta^{\rm EI}(\mathcal U_{a\eta}) \ge (1-a)R_0(a,\eta)$. Thus a ratio bounded away from zero over the tested threshold window certifies that EI can make multiplicative progress with constant probability, conditional on accurately sampling the EI-tilted target (See §\ref{subsec:mass_lift_window_main}).  This diagnostic is a statement about the shape of the near-optimal tail. In our experiments the reverse-doubling curves remain bounded away from zero, supporting the constant-progress geometry.

\begin{wrapfigure}[14]{r}{0.5\textwidth}
    \vspace{-2.5em}
    \centering
    \includegraphics[width=0.5\textwidth]{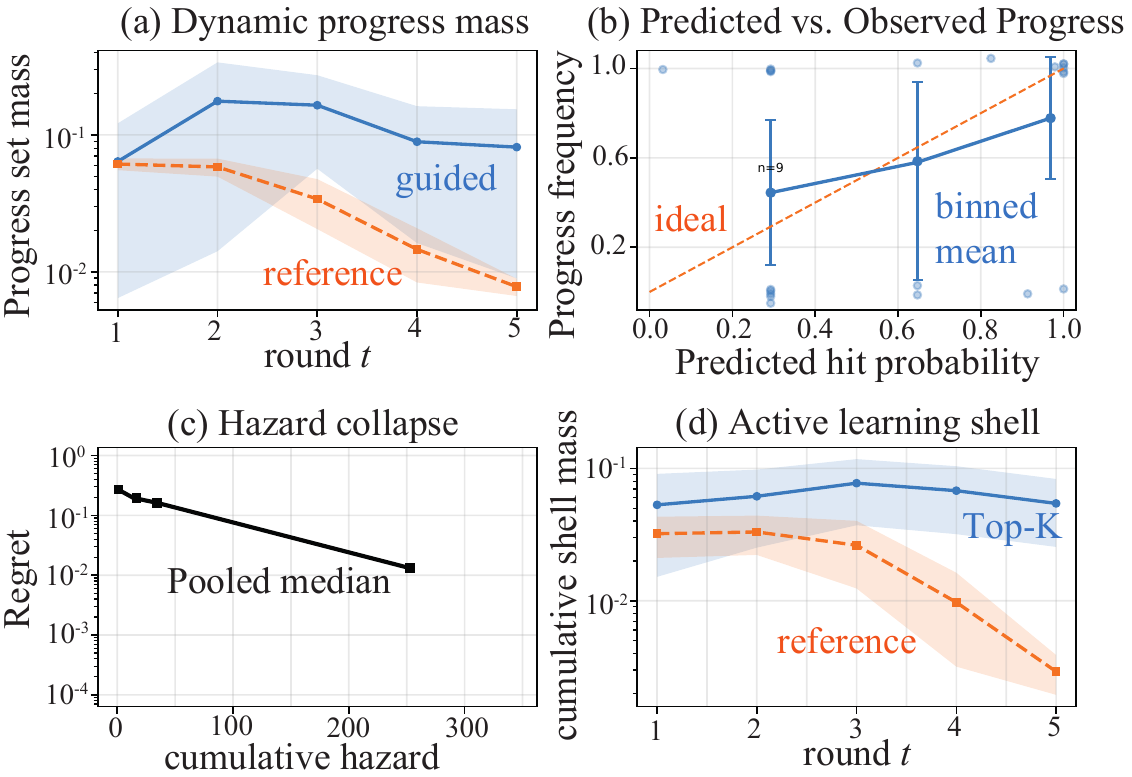}
    \caption{EI mass lift diagnosis.}
    \label{fig:ei_mass_lift}
\end{wrapfigure}
\textbf{Mass-lift diagnostics.}
Second, we audit the polynomial effect. We post-process the EI runs using the dynamic progress set
$\mathcal U_{a\eta_{t-1}}$, where $\eta_{t-1}=1-\max_{x\in\mathcal D_{t-1}} f(x)$ and $a=0.75$.
For each round, we estimate the guided terminal mass
$\widehat\nu_t(\mathcal U_{a\eta_{t-1}})$ from an independent audit pool and convert it into the predicted pool-hit probability $1-(1-\widehat\nu_t)^N$.
The resulting diagnostic shows that (a) the estimated progress mass increases over the active phase relative to an early-round reference audit distribution, and (b) the posterior-predictive pool-hit probability is directionally aligned with observed multiplicative progress. (c) Plotting regret against the cumulative hazard $\sum_t N\Lambda(\widehat\nu_t)$ further collapses the search phase, supporting the interpretation that EI accelerates optimization by increasing the probability of sampling from the moving progress set. (d) The final panel reports the cumulative mass of evaluated top-$K$ points in the
local shell $\mathcal U_{\eta}\setminus\mathcal U_{0.75\eta}$, which is a practical proxy for the active critical band used in the local score-learning analysis.

\section{Conclusion}
We introduced a \emph{first} certificate-based regret framework for guided-diffusion black-box optimization over structured inputs and identified \emph{mass lift} as the mechanism controlling finite-budget acceleration. The analysis explains how the same GDBO template can exhibit exponential-looking search phases, faster polynomial active-learning phases, or only constant-factor gains, depending on how much probability the guided sampler assigns to the relevant near-optimal, progress, or critical-band sets.
Several directions follow naturally. On the algorithmic side, one can design acquisitions that directly maximize threshold lift, adapt $\beta$, $N$, and $K$ using estimated exponent gains, and develop diffusion or flow-matching samplers with finite-sample terminal TV/KL certificates. 
Our framework also naturally extends beyond diffusion models to other generative samplers, including flow matching~\citep{lipman2023flow} and auto-regressive models such as large language models~\citep{yang2026verbalizing, adachi2026open}.
The main limitation is the finite-domain assumption. In continuous domains, finite-$N$ best-of-$N$ sampling can certify only $\gamma$-optimality unless $N\to\infty$ as $\gamma\downarrow0$. This is acceptable in settings such as materials discovery, where the structured search space is astronomically large but finite. 
The asymptotic rate is also not tight, since our learning rate is not tied to any particular acquisition function; sharper acquisition-specific analyses may improve it. Our goal is instead to explain the distinctive convergence regimes of GDBO, and the mass-lift framework provides this explanation.
The main message is that guided diffusion changes BO not by exactly maximizing an acquisition, but by reshaping the probability of sampling the sets that matter.

\bibliographystyle{unsrt}
\bibliography{references} 


\appendix
\newpage
\addcontentsline{toc}{section}{Appendix} 
\part{Appendix} 
\parttoc 

\section{Notation}\label{app:notation}

Table~\ref{tab:notation} summarizes the main notation used throughout the paper.
Symbols are grouped by role. Some symbols are local to a theorem or appendix; when a symbol is reused locally, the relevant context is stated explicitly.

\begingroup
\small
\setlength{\tabcolsep}{4pt}
\renewcommand{\arraystretch}{1.12}

\begin{longtable}{@{}>{\raggedright\arraybackslash}p{0.24\textwidth}
                  >{\raggedright\arraybackslash}p{0.70\textwidth}@{}}
\caption{Notation used in the paper.}
\label{tab:notation}\\
\toprule
Symbol & Meaning \\
\midrule
\endfirsthead

\toprule
Symbol & Meaning \\
\midrule
\endhead

\midrule
\multicolumn{2}{r}{\emph{continued on next page}}\\
\endfoot

\bottomrule
\endlastfoot

\multicolumn{2}{@{}l}{\textbf{Domain, prior, objective, and observations}}\\
\addlinespace[0.2em]
\(\X=\supp(p_0)\) & Finite accessible structured design domain induced by the pretrained generator. \\
\(x\in\X\), \(X\) & A design object and its random counterpart. \\
\(p_0\), \(\prior\) & Frozen pretrained generator or prior distribution over \(\X\). \\
\(\supp(p_0)\) & Support of the prior \(p_0\). \\
\(f:\X\to\R\) & Expensive black-box objective. Larger values are better. \\
\(f^\star\) & Global optimum value, \(f^\star=\max_{x\in\X}f(x)\). \\
\(\Xstar\) & Global optimizer set, \(\Xstar=\argmax_{x\in\X}f(x)\). \\
\(p_0^\star\) & Prior mass of the optimizer set, \(p_0^\star=p_0(\Xstar)\). \\
\(Y=f(X)+\xi\) & Noisy observation model. \\
\(\xi\), \(\varepsilon\) & Observation noise. The appendix often writes the additive noise as \(\varepsilon\). \\
\(\D_t\) & Evaluated dataset available after round \(t\). \\
\(\widehat x_T\) & Final recommendation after budget \(T\). \\
\(R_f\) & Objective range, \(R_f=f^\star-\min_{x\in\X}f(x)\). \\
\(\rho(x)\) & Objective gap, \(\rho(x)=f^\star-f(x)\), used mainly in Appendix~\ref{app:unified_threshold_acceleration}. \\
\(\bar f(x)\) & Normalized experimental score in \([0,1]\). \\

\addlinespace[0.4em]
\multicolumn{2}{@{}l}{\textbf{Regret and near-optimal sets}}\\
\addlinespace[0.2em]
\(r_T\) & Expected simple regret, \(r_T=\E[f^\star-f(\widehat x_T)]\). \\
\(\gamma\), \(\gamma_T\) & Near-optimality or recommendation-accuracy level. \\
\(\U_\gamma\) & \(\gamma\)-optimal set, \(\U_\gamma=\{x\in\X:f^\star-f(x)\le\gamma\}\). \\
\(\Gamma_n\) & Best true gap after \(n\) random-search samples, \(\Gamma_n=\min_{1\le i\le n}\rho(X_i)\). \\
\(M_0(\gamma)\) & Prior near-optimal mass profile, \(M_0(\gamma)=p_0(\U_\gamma)\). \\
\(\Lambda(u)\) & Bernoulli miss exponent, \(\Lambda(u)=-\log(1-u)\). \\

\addlinespace[0.4em]
\multicolumn{2}{@{}l}{\textbf{Acquisition functions and Gibbs targets}}\\
\addlinespace[0.2em]
\(a_t:\X\to\R\) & Ideal acquisition or score used for analysis. \\
\(\widehat a_t\) & Learned acquisition or score used by the algorithm. \\
\(L_a\) & Inverse-link constant in the calibrated-score assumption. \\
\(\Delta(\X)\) & Probability simplex over the finite domain \(\X\). \\
\(\beta\), \(\beta_t\) & Gibbs inverse-temperature or acquisition-tilt strength. \\
\(q_t^\star\) & Ideal Gibbs target, \(q_t^\star(x)\propto p_0(x)\exp\{\beta a_t(x)\}\). \\
\(\widehat q_t\) & Learned Gibbs target, \(\widehat q_t(x)\propto p_0(x)\exp\{\beta_t\widehat a_t(x)\}\). \\
\(\nu_t\) & Realized terminal distribution of the guided sampler. \\
\(\tau_t\), \(\tau\) & Improvement threshold. \\
\(\mathrm{EI}_t(x)\), \(a_\tau(x)\) & Expected Improvement score, \(a_\tau(x)=\E[(Y-\tau)_+\mid X=x]\). \\
\(I_\eta(x)\) & Noiseless EI/progress utility, \(I_\eta(x)=(\eta-\rho(x))_+\). \\
\(\mu_t(x)\), \(\sigma_t(x)\) & Surrogate predictive mean and uncertainty used in UCB-style scores. \\
\(\beta_{\rm UCB}\), \(b_t\) & UCB exploration multiplier. \\
\(c_\phi(x)\), \(c_\tau^\star(x)\) & LFBO classifier and its population minimizer. \\
\(z_i\) & Threshold label, \(z_i=\mathbbm{1}\{y_i\ge\tau_t\}\). \\
\(u(y_i)\), \(w_i(\tau)\) & Improvement weights such as \(u(y_i)=(y_i-\tau_t)_+\) and \(w_i(\tau)=(Y_i-\tau)_+\). \\
\(\mathcal L_\tau\), \(\mathcal L(q)\), \(\mathcal L_t^S(q)\) & LFBO, diffusion-alignment, or proper-scoring objective. \\
\(S(q,x_i)\) & Proper scoring rule used in GenBO-style alignment. \\

\addlinespace[0.4em]
\multicolumn{2}{@{}l}{\textbf{Algorithmic sampling quantities}}\\
\addlinespace[0.2em]
\(T\) & Number of BO rounds. \\
\(N\), \(N_t\) & Number of terminal guided candidates sampled at a round. \\
\(K\), \(K_t\) & Number of top-ranked guided candidates evaluated. \\
\(J\), \(J_{t,a}\) & Number of prior-refresh samples evaluated. \\
\(B\) & Batch size of a matched no-filtering random-search baseline, often \(B=J+K\). \\
\(C_t\) & Terminal guided candidate pool at round \(t\). \\
\(X_{t,1},\ldots,X_{t,N_t}\) & Guided candidate samples from \(\nu_t\). \\
\(\argtopk\) & Operator returning the top-\(K\) candidates under a score. \\
\(\F_{t-1}\) & Filtration containing all information before round \(t\). \\
\(M_t\) & Proposal, resampling, particle, or internal sampler-training budget. \\
\(M_{t,{\rm eff}}\) & Effective sample size in sampler-learning certificates. \\

\addlinespace[0.4em]
\multicolumn{2}{@{}l}{\textbf{Terminal-mass interface and search exponents}}\\
\addlinespace[0.2em]
\(A\subseteq\X\) & Generic search-relevant set, e.g., \(\Xstar\), \(\U_\gamma\), \(\U_{a\eta}\), or \(\mathcal C_{T,\eta}\). \\
\(\ell_{t,A}\) & Certified lower bound on target mass \(\widehat q_t(A)\). \\
\(d_{t,A}\) & RMS terminal set-mass error between \(\widehat q_t(A)\) and \(\nu_t(A)\). \\
\(\omega_{t,A}\) & Slack used to convert target-mass and sampler-error bounds into terminal-mass events. \\
\(m_t^\star\) & Slacked terminal optimizer mass, \(m_t^\star=[\ell_{t,\Xstar}-\omega_{t,\Xstar}]_+\). \\
\(\lambda_{{\rm BO},t}\), \(\alpha_t^\star\) & Per-round BO search exponent, typically \(J_{t,a}\Lambda(p_0^\star)+N_t\Lambda(m_t^\star)\). \\
\(\good_t\), \(\good_{t,A}\) & Good event on which the learned score gives the required ranking or target-mass certificate. \\
\(u_t\) & Failure probability of the good event, \(u_t=\Pbb(\good_t^c)\). \\
\(T_{\rm act}\) & Score activation time after which the deterministic score-learning envelope is below the required margin. \\
\(\Delta_f^\star\) & Finite-domain optimizer gap, \(f^\star-\max_{x\notin\Xstar}f(x)\). \\
\(\underline q_\star^{\rm Gibbs}(\beta_t)\) & Lower bound on \(\widehat q_t(\Xstar)\) under a Gibbs tilt and a valid score gap. \\

\addlinespace[0.4em]
\multicolumn{2}{@{}l}{\textbf{Score-learning notation}}\\
\addlinespace[0.2em]
\(\Psi:\X\to[0,1]^{d_a}\) & Representation map used by the score learner. \\
\(d_a\) & Dimension of the score-learning feature representation. \\
\(\zeta_t\) & Acquisition index determining the score slice \(a_t\). \\
\(\mathcal Z\) & Compact index set for acquisition slices. \\
\(G\) & Link mapping a primitive score to the acquisition score. \\
\(g_\zeta^\star\), \(\widehat g_t\) & Primitive target score slice and its clipped-ReLU ERM estimate. \\
\(B^{s_a}_{p,q}\) & Besov function space for primitive score slices. \\
\(s_a\), \(\bar s_a\) & Besov smoothness and effective smoothness used in the learning rate. \\
\(n_{t,a}\) & Number of labeled reservoir samples available to the acquisition learner. \\
\(\bar\varepsilon_t\) & Deterministic score-learning envelope. \\
\(\vartheta_a\) & Global score-learning exponent, \(\vartheta_a=\bar s_a/(2\bar s_a+d_a)\). \\
\(c_a\) & Generic logarithmic exponent or constant in score-learning rates; its value may change from line to line. \\
\(P_\Psi\) & Pushforward law of \(p_0\) under \(\Psi\). \\
\(p_{\min}(P_\Psi)\) & Minimum atom of the finite feature law \(P_\Psi\). \\
\(\Pdim(\mathcal G_W)\) & Pseudodimension of the ReLU sieve class. \\
\(\mathcal G_W\), \(W\), \(W_n\) & Clipped-ReLU network class, sieve size, and balanced sieve size. \\

\addlinespace[0.4em]
\multicolumn{2}{@{}l}{\textbf{Sampler-certification notation}}\\
\addlinespace[0.2em]
\(\TV(\cdot,\cdot)\), \(\KL(\cdot\|\cdot)\) & Total variation distance and Kullback--Leibler divergence. \\
\(P_t^\star\), \(P_{\theta_t}\) & Target and learned path laws in DiBO/RTB-style path-space certificates. \\
\(\Pi_t\) & Terminal-state or decoder map from path or latent space to \(\X\). \\
\(\mathcal E_t^{\rm RTB}\) & RTB path-space divergence certificate or its upper bound. \\
\(b_{t,A,{\rm dec}}\) & Residual decoder or path-to-terminal bias. \\
\(R_t(x)\) & Positive tilt defining a GenBO-style target, with \(\widehat q_t(x)\propto p_0(x)R_t(x)\). \\
\(Z_t\) & Normalizing constant for a positive tilt \(R_t\). \\
\(\rho_t\) & Proposal distribution in the GenBO log-score certificate; distinct from the gap \(\rho(x)\). \\
\(W_t(x)\) & Importance weight used in proposal correction. \\
\(D_S\), \(c_S\) & Proper-scoring divergence and its TV-strong-properness constant. \\
\(\mathcal E_t^S\), \(b_{t,{\rm opt}}\) & Proper-score optimization/statistical error and residual optimization bias. \\
\(\bar q_t\), \(\bar\nu_t\) & Lifted target and learned distributions before decoding. \\
\(r_t\) & Roundwise distribution-learning error rate in sampler certificates. \\
\(g_t\) & Generic proposal distribution in the proposal-corrected guided sampler. \\
\(\nu_t^M\), \(\nu_t^{M,g}\) & Self-normalized proposal-resampling terminal laws. \\
\(A_{\rm osc}\) & Uniform oscillation bound for the frozen learned acquisition. \\
\(W_{\min}\), \(W_{\max}\) & Lower and upper bounds on proposal-correction weights. \\

\addlinespace[0.4em]
\multicolumn{2}{@{}l}{\textbf{Threshold mass lift and local geometry}}\\
\addlinespace[0.2em]
\(\eta\), \(\eta_t\) & Threshold gap or best-so-far true gap, \(\eta_t=\min_{x\in\D_t}\rho(x)\). \\
\(\eta_{\min}\) & Irreducible floor for threshold contraction. \\
\(a\in(0,1)\) & Multiplicative progress factor; hitting \(\U_{a\eta}\) improves gap \(\eta\) to at most \(a\eta\). \\
\(\zeta_t=(\eta_t-\eta_{\min})_+\) & Threshold-excess variable used locally in Appendix~\ref{app:threshold_contraction_theorem}; distinct from the acquisition index \(\zeta_t\). \\
\(\dist_\Psi(x,\Xstar)\) & Distance from \(x\) to the optimizer set in representation space. \\
\(\chi_f\) & Objective sharpness exponent near \(\Xstar\). \\
\(d_0\) & Local prior small-ball dimension near \(\Xstar\). \\
\(\alpha_0=d_0/\chi_f\) & Near-optimal prior-mass exponent. \\
\(q_\eta^{\rm EI}\) & Ideal EI-tilted threshold target, \(q_\eta^{\rm EI}(x)\propto p_0(x)(\eta-\rho(x))_+\). \\
\(Z_0(\eta)\) & Normalizing constant for \(q_\eta^{\rm EI}\). \\
\(m_{\rm EI}(\eta,a)\) & EI-guided mass on the multiplicative progress set \(\U_{a\eta}\). \\
\(p_{\rm prog}\) & Lower bound on the conditional probability of making multiplicative threshold progress. \\
\(\lambda_{\rm rand}(\eta,a)\), \(\lambda_{\rm BO}(\eta,a)\) & Threshold-level random-search and BO search exponents. \\

\addlinespace[0.4em]
\multicolumn{2}{@{}l}{\textbf{Local active-learning notation}}\\
\addlinespace[0.2em]
\(\mathcal C_{T,\eta}\) & Local critical score band outside \(\U_\eta\). \\
\(Q_T^{\rm act}\) & Average labeled design distribution used for local score learning. \\
\(\beta_{\rm act}\) & Active-mass exponent defined by \(Q_T^{\rm act}(\mathcal C_{T,\eta})\gtrsim \eta^{\beta_{\rm act}}\). \\
\(s_{\rm loc}\), \(d_{\rm loc}\) & Local smoothness and local effective dimension for score learning. \\
\(\vartheta_{\rm loc}\) & Local score-learning exponent, \(\vartheta_{\rm loc}=s_{\rm loc}/(2s_{\rm loc}+d_{\rm loc})\). \\
\(\chi_a\) & Score-margin sharpness exponent linking score error to regret. \\
\(c_\Delta\), \(c_{\mathcal C}\) & Local margin and critical-band constants. \\
\(S_T(\eta)\) & Search miss probability for \(\U_{\eta/2}\). \\
\(b_T(\eta)\) & Coarse-screening failure envelope. \\
\(u_T^{\rm loc}(\eta)\), \(v_T^{\rm act}(\eta)\) & Local learning and active-mass failure envelopes. \\

\addlinespace[0.4em]
\multicolumn{2}{@{}l}{\textbf{Experimental diagnostics}}\\
\addlinespace[0.2em]
\(\widehat\nu_t(\U_\gamma)\) & Monte Carlo estimate of posterior or guided mass on \(\U_\gamma\). \\
\(m\) & Audit sample size used to estimate \(\widehat\nu_t(\U_\gamma)\). \\
\(\xi\) & In experiments, also used as a diffusion-weight interpolation parameter; this is overloaded with the observation-noise notation. Consider renaming it to \(\alpha_{\rm prior}\) or \(\xi_{\rm interp}\) for clarity. \\

\end{longtable}
\endgroup

\section{Extended background}

\subsection{Likelihood-free BO.}\label{app:lfbo}
While classical BO estimates acquisition function $\hat a_t$ through Bayesian surrogate models on $f$ such as Gaussian Process (GP; \citep{rasmussen2003gaussian}), likelihood-free BO (LFBO; \citep{song2022general}) directly models $\hat a_t \approx a_t$ via weighted classification reformulation. This is particularly useful for non-Bayesian surrogate models such as neural network~\citep{lecun2015deep}. 
For any threshold $\tau\in\R$, define weights $w_i(\tau):=(Y_i-\tau)_+$ and train a classifier $c_\phi(x)\in(0,1)$ by minimizing
\begin{align}
\mathcal{L}_\tau(\phi)
:=
\frac1n\sum_{i=1}^n
\Bigl[
- w_i(\tau)\log c_\phi(X_i)
-\log(1-c_\phi(X_i))
\Bigr].
\label{eq:ei_logistic_loss}
\end{align}
At the population level, the unique minimizer satisfies
\begin{align}
c_\tau^\star(x)=\frac{a_\tau(x)}{1+a_\tau(x)},
\qquad
a_\tau(x):=\E[(Y-\tau)_+\mid X=x] = \frac{c_\tau^\star(x)}{(1-c_\tau^\star(x))},
\label{eq:lfbo_population_identity}
\end{align}
so the odds transform recovers Expected Improvement (EI; \citep{mockus1975application}) exactly. LFBO can handle both EI and PI, and we employ EI mainly.

\subsection{Diffusion models in one paragraph}
A diffusion model starts from data $X_0\sim p_0$, gradually corrupts it into an easier reference distribution, and then learns to approximately reverse that corruption. In continuous-state diffusion models the forward process is typically Gaussian and the reverse dynamics are parameterized through a score network. In discrete domains the same idea is implemented with a continuous-time Markov chain rather than with Gaussian perturbations. In both cases, the role of the pretrained backbone is the same: it learns how to sample from the unconditional data distribution $p_0$.

\subsection{Continuous-time discrete diffusion}
For structured discrete objects we use a continuous-time Markov chain (CTMC) $\{X_s\}_{s\in[0,1]}$ on $\X$ with time-dependent generator $Q_s$. Starting from $X_0\sim p_0$, the forward process produces noisy marginals
\[
\nu_s(x):=\Pr(X_s=x),
\qquad s\in[0,1].
\]
At large noise levels the law $\nu_s$ becomes easy to sample, while at $s=0$ it coincides with the data distribution $p_0$. If $\nu_s(x)>0$, the exact reverse-time CTMC has off-diagonal rates
\begin{equation}
\label{eq:primer_reverse_rates}
R_s^0(x,y)
=
Q_s(y,x)\frac{\nu_s(y)}{\nu_s(x)},
\qquad x\neq y.
\end{equation}
A pretrained discrete diffusion model approximates these reverse rates, or an equivalent reverse transition operator, so that starting from a noisy sample near $\nu_1$ and simulating the reverse chain approximately recovers $X_0\sim p_0$.

\section{Formal Assumptions}\label{app:formal_assumptions}
This appendix gives the formal assumptions used by the main theorems. We keep the assumptions modular so that different acquisition learners and different samplers can be substituted independently.

\begin{assumption}[Finite aligned domain]\label{ass:finite_alignment_app}
The domain $\X=\supp(p_0)$ is finite. For the exact-optimization results, $p_0^\star:=p_0(\Xstar)>0$. For a near-optimal set $A=\U_\gamma$, we assume $p_0(A)>0$.
\end{assumption}

\begin{assumption}[Calibrated ideal score]\label{ass:calibrated_score_app}
For every round $t$, the ideal score $a_t:\X\to\R$ is order-preserving with $f$ and has a uniform inverse link: there exists $L_a<\infty$ such that, for all $x,x'\in\X$ with $f(x)\ge f(x')$,
\begin{equation}
\label{eq:calibrated_score_app}
    a_t(x)\ge a_t(x'),
    \qquad
    f(x)-f(x')\le L_a\{a_t(x)-a_t(x')\}.
\end{equation}
\end{assumption}

\begin{assumption}[Score reservoir]\label{ass:score_reservoir_app}
At round $t$, the score learner has access to a labeled reservoir
\begin{equation}
\label{eq:score_reservoir_app}
    \D^{\rm explr}_{t,a}:=\{(X_{i,a},Y_{i,a})\}_{i=1}^{n_{t,a}},
    \qquad
    X_{i,a}\stackrel{\rm i.i.d.}{\sim}p_0.
\end{equation}
The acquisition index $\zeta_t$ used to define the ideal score is measurable with respect to the past and independent of the current reservoir sample conditional on the past. Eventually $n_{t,a}\ge c_a^{\rm samp}t$ for some constant $c_a^{\rm samp}>0$.
\end{assumption}

\begin{assumption}[Primitive Besov score model]\label{ass:besov_score_app}
There is a fixed representation $\Psi:\X\to[0,1]^{d_a}$ and a compact index set $\mathcal Z$ such that
\begin{equation}
\label{eq:score_slice_app}
    a_t(x)=G\{g^\star_{\zeta_t}(\Psi(x))\},
    \qquad
    \zeta_t\in\mathcal Z.
\end{equation}
The link $G:I_G\to\R$ is $L_G$-Lipschitz on a compact interval $I_G$. The slice family satisfies, for some $s_a>d_a/p$,
\begin{equation}
\label{eq:besov_ball_app}
    g^\star_\zeta\in B^{s_a}_{p,q}([0,1]^{d_a})\cap L_\infty,
    \qquad
    \sup_{\zeta\in\mathcal Z}\|g^\star_\zeta\|_{B^{s_a}_{p,q}}
    \le C_{\rm Bes},
\end{equation}
and the range of every $g^\star_\zeta$ lies in $I_G$.
\end{assumption}

\begin{assumption}[Clipped ReLU ERM]\label{ass:relu_erm_app}
For each sieve budget $W\ge1$, let $\mathcal G_W$ be the class of clipped ReLU networks $g:[0,1]^{d_a}\to I_G$ with depth at most $C_L\log(eW)$, width at most $C_UW$, total number of trainable parameters at most $C_SW$, and parameter magnitudes bounded by $W^{C_B}$.
Given $n$ reservoir samples, the learner computes an empirical risk minimizer
\begin{equation}
\label{eq:generic_erm_app}
    \widehat g_{t,W}\in\argmin_{g\in\mathcal G_W}
    \frac1{n_{t,a}}\sum_{i=1}^{n_{t,a}}
    \ell_{\zeta_t}\{Y_{i,a},g(\Psi(X_{i,a}))\}.
\end{equation}
For each $\zeta$, the population risk $\mathcal R_\zeta(g):=\E[\ell_\zeta\{Y,g(\Psi(X))\}]$ is uniquely minimized at $g^\star_\zeta$. Uniformly over $\zeta\in\mathcal Z$, the excess risk has local quadratic curvature in $L_2(P_\Psi)$, and the loss has a sub-exponential Lipschitz envelope on $I_G$, so that the localized finite-pseudodimension ERM oracle inequality in Lemma~\ref{lem:generic_oracle_ineq_app} applies.
\end{assumption}

\begin{assumption}[Target-relative weak terminal mass consistency]
\label{ass:weak_terminal_mass_app}
After $\widehat a_t$ is frozen, define the Gibbs target
\begin{equation}
\label{eq:gibbs_target_formal_app}
    \widehat q_t(x)
    :=
    \frac{p_0(x)\exp\{\beta_t\widehat a_t(x)\}}
    {\sum_{z\in\X}p_0(z)\exp\{\beta_t\widehat a_t(z)\}} .
\end{equation}
The exploitation sampler returns a candidate pool
$C_t=\{X_{t,1},\ldots,X_{t,N_t}\}$.
There exists a random terminal law $\nu_t\in\Delta(\X)$ such that, conditional on $\nu_t$ and the frozen score information, the candidates are i.i.d. from $\nu_t$.

For each set $A\subseteq\X$ used in the analysis, there is a deterministic envelope $d_{t,A}\ge0$ such that
\begin{equation}
\label{eq:weak_rms_mass_app}
    \E\left[
    \{\widehat q_t(A)-\nu_t(A)\}_+^2
    \mid \F_{t-1}
    \right]
    \le d_{t,A}^2 .
\end{equation}
Equivalently, it is sufficient to verify the stronger conditional statement after freezing $\widehat a_t$. No deterministic lower bound on $\nu_t(A)$ is assumed directly; target-mass lower bounds are proved separately on good score events.
\end{assumption}

\section{Generic Score Learning from Primitive Besov Assumptions}\label{app:generic_score_learning}
This section proves the generic score-learning rate used in Theorem~\ref{thm:generic_score_rate_informal}. The key point is that the neural-network capacity and approximation bounds are consequences of Assumptions~\ref{ass:besov_score_app} and~\ref{ass:relu_erm_app}, not assumptions.

\begin{proposition}[ReLU capacity and Besov-to-H\"older approximation]
\label{prop:relu_capacity_approx_app}
Under Assumptions~\ref{ass:besov_score_app} and~\ref{ass:relu_erm_app}, fix any
\[
    0<\bar s_a<s_a-d_a/p .
\]
Then there exist constants $C_{\rm vc},C_{\rm app}>0$ and $b_a\ge0$ such that for all $W\ge2$,
\begin{equation}
\label{eq:derived_capacity_approx_app}
    \Pdim(\mathcal G_W)
    \le
    C_{\rm vc}W(\log W)^2,
    \qquad
    \sup_{\zeta\in\mathcal Z}\inf_{g\in\mathcal G_W}
    \|g-g^\star_\zeta\|_{L_2(P_\Psi)}
    \le
    C_{\rm app}W^{-\bar s_a/d_a}(\log W)^{b_a}.
\end{equation}
If a direct $L_\infty$ ReLU approximation theorem with exponent $s_a$ is available for the chosen score class, then $\bar s_a$ may be replaced by $s_a$ throughout.
\end{proposition}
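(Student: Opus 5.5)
The two claims are proved separately. The capacity bound follows from known pseudodimension results for piecewise-linear networks. The approximation bound follows from a Besov-to-H\"older embedding combined with a ReLU approximation theorem for H\"older functions.

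\textbf{Step 1 (capacity).} Clipping $g\mapsto \min\{\max\{g,\inf I_G\},\sup I_G\}$ can be written with two additional ReLU units per output. So $\mathcal G_W$ is contained in a class of ReLU networks with depth $L\le C_L\log(eW)+2$ and number of parameters $S\le C_SW+O(1)$. We invoke the Bartlett--Harvey--Liaw--Mehrabian bound for piecewise-linear networks, $\Pdim\le C\,S L\log S$. With $L=O(\log W)$ and $S=O(W)$ this gives $C_{\rm vc}W(\log W)^2$ for $W\ge2$. The parameter-magnitude bound $W^{C_B}$ is not needed here, since the pseudodimension bound is magnitude-free.

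\textbf{Step 2 (smoothness transfer).} Since $s_a>d_a/p$, the Besov embedding $B^{s_a}_{p,q}([0,1]^{d_a})\hookrightarrow B^{s_a-d_a/p}_{\infty,q}\hookrightarrow C^{\bar s_a}([0,1]^{d_a})$ holds for every $0<\bar s_a<s_a-d_a/p$. The strict inequality absorbs the $q$-index and integer-smoothness edge cases. Hence $\sup_\zeta\|g^\star_\zeta\|_{C^{\bar s_a}}\le C' C_{\rm Bes}$ uniformly over $\mathcal Z$.

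\textbf{Step 3 (approximation).} We apply a Yarotsky-type $L_\infty$ approximation theorem for H\"older balls, e.g.\ the one used by Schmidt-Hieber or Oko et al. For any $h$ in the $C^{\bar s_a}$ ball of radius $R$ and $\epsilon\in(0,1)$, there is a ReLU network with depth $O(\log(1/\epsilon))$, $O(\epsilon^{-d_a/\bar s_a}\log(1/\epsilon))$ nonzero parameters, and weights bounded polynomially in $1/\epsilon$ that achieves $\|\phi-h\|_\infty\le\epsilon$. We choose $\epsilon=W^{-\bar s_a/d_a}(\log W)^{b_a}$, with $b_a$ chosen so that the parameter count is at most $C_SW$ and the depth is at most $C_L\log(eW)$; enlarging the constants $C_L,C_S,C_U,C_B$ if necessary makes this admissible. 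Composing with the clip onto $I_G$ keeps the network in $\mathcal G_W$ and does not increase the error, because $g^\star_\zeta$ takes values in $I_G$ and clipping is $1$-Lipschitz. Since $P_\Psi$ is a probability measure, $\|\cdot\|_{L_2(P_\Psi)}\le\|\cdot\|_\infty$. This gives the displayed bound uniformly in $\zeta$, because the constants depend only on $(R,\bar s_a,d_a)$.

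\textbf{Final remark and main obstacle.} If a direct $L_\infty$ approximation rate $W^{-s_a/d_a}$ is available for the Besov ball, Steps 2--3 are replaced by it, and $\bar s_a$ becomes $s_a$. The main obstacle is Step 3: the architecture constraints of Assumption~\ref{ass:relu_erm_app}, namely depth $\log W$, width and parameter count linear in $W$, and weight magnitudes polynomial in $W$, must hold simultaneously. The cited constructions meet them only up to constant and logarithmic factors. I would handle this by allowing the log factor $(\log W)^{b_a}$ in the rate and by choosing $C_L,C_U,C_S,C_B$ large enough from the outset.
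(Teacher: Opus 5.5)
Your proposal is correct and follows the paper's proof essentially step for step: a standard pseudodimension bound for log-depth, $O(W)$-parameter ReLU networks (with clipping costing only constants), followed by the Besov--H\"older embedding for $\bar s_a<s_a-d_a/p$, a standard $L_\infty$ ReLU approximation of H\"older balls, the $1$-Lipschitz clip onto $I_G$, and $\|\cdot\|_{L_2(P_\Psi)}\le\|\cdot\|_\infty$. Your version is more explicit than the paper's. It names the Bartlett--Harvey--Liaw--Mehrabian bound, tracks the choice of $\epsilon$ and $b_a$, and notes that the architecture constants must be taken large enough, which the paper leaves implicit.
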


\begin{proof}
The pseudodimension bound follows from standard capacity estimates for ReLU networks with $O(W)$ trainable parameters, logarithmic depth, and polynomially bounded weights. Clipping does not increase pseudodimension up to constants.

For approximation, Assumption~\ref{ass:besov_score_app} gives
$g^\star_\zeta\in B^{s_a}_{p,q}$ with $s_a>d_a/p$. By the Besov--H\"older embedding, for every
$0<\bar s_a<s_a-d_a/p$ the family is uniformly bounded in a H\"older ball of smoothness $\bar s_a$. Standard ReLU approximation of H\"older functions gives a network
$\widetilde g_{\zeta,W}$ with the stated architecture such that
\[
    \|\widetilde g_{\zeta,W}-g^\star_\zeta\|_{L_\infty([0,1]^{d_a})}
    \le
    C W^{-\bar s_a/d_a}(\log W)^{b_a}.
\]
Clipping to $I_G$ is $1$-Lipschitz and leaves the target unchanged because $g^\star_\zeta$ takes values in $I_G$. Since $P_\Psi$ is a probability measure, the same bound holds in $L_2(P_\Psi)$. Taking the infimum over $\mathcal G_W$ and the supremum over $\zeta$ proves the claim.
\end{proof}

\begin{lemma}[Localized ERM oracle inequality]\label{lem:generic_oracle_ineq_app}
Under Assumption~\ref{ass:relu_erm_app}, there exists a constant $C_{\rm orc}>0$ such that for every fixed $W\ge2$, every $x>0$, and every $t$, with probability at least $1-2e^{-x}$ conditional on $\zeta_t$,
\begin{align}
\label{eq:generic_oracle_ineq_app}
    \|\widehat g_{t,W}-g^\star_{\zeta_t}\|_{L_2(P_\Psi)}^2
    &\le
    C_{\rm orc}
    \left[
    \inf_{g\in\mathcal G_W}\|g-g^\star_{\zeta_t}\|_{L_2(P_\Psi)}^2
    +
    \frac{W(\log W)^2(\log n_{t,a}+x)}{n_{t,a}}
    \right].
\end{align}
\end{lemma}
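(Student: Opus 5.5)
We plan to prove Lemma~\ref{lem:generic_oracle_ineq_app} as a standard localized empirical-process bound for ERM over a sieve of finite pseudodimension, carried out conditionally on the acquisition index $\zeta_t$. First we condition on the past $\F_{t-1}$. By Assumption~\ref{ass:score_reservoir_app}, $\zeta_t$ is then fixed and the reservoir $\{(X_{i,a},Y_{i,a})\}_{i\le n}$, with $n:=n_{t,a}$, is i.i.d. This reduces the problem to a fixed loss $\ell:=\ell_{\zeta_t}$ and a fixed target $g^\star:=g^\star_{\zeta_t}$.

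Fix any $g_W\in\mathcal G_W$ (eventually a near-minimizer of $\|g-g^\star\|_{L_2(P_\Psi)}$) and consider the excess-loss class
\[
\mathcal H_W:=\{h_g:=\ell\{Y,g(\Psi(X))\}-\ell\{Y,g^\star(\Psi(X))\}:\ g\in\mathcal G_W\}.
\]
The argument rests on three ingredients.

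\textbf{(i) Bernstein condition.} The local quadratic curvature gives $P h_g\ge c\|g-g^\star\|^2_{L_2(P_\Psi)}$. The Lipschitz envelope on $I_G$ gives $|h_g|\le L(Y)\,|g-g^\star|$ with $L(Y)$ sub-exponential. Hence $\mathrm{Var}(h_g)\lesssim \|g-g^\star\|^2_{L_2}\lesssim P h_g$, at least after truncating $L(Y)$ at level $\asymp\log n$.

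\textbf{(ii) Complexity.} Since $\Pdim(\mathcal G_W)\le C_{\rm vc}W(\log W)^2$ by Proposition~\ref{prop:relu_capacity_approx_app} and $I_G$ is bounded, Haussler's covering bound gives $\log\mathcal N(\epsilon,\mathcal G_W,L_2(P_n))\lesssim W(\log W)^2\log(1/\epsilon)$. Composing with the Lipschitz loss transfers this bound to $\mathcal H_W$. Dudley's entropy integral then controls the local Rademacher complexity $\psi(r)\lesssim\sqrt{r D\log n/n}+D\log n/n$, where $D:=W(\log W)^2$. The fixed point is $r^\star\asymp D\log n/n$.

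\textbf{(iii) Talagrand/Bousquet concentration with peeling.} As in Bartlett--Bousquet--Mendelson local Rademacher analysis, with probability $1-e^{-x}$ every $h\in\mathcal H_W$ satisfies
\[
Ph\le 2P_nh+C\bigl(r^\star+(\log n)\,x/n\bigr),
\]
where the $\log n$ factor accounts for the truncation. Since $\widehat g$ is an ERM, $P_nh_{\widehat g}\le P_nh_{g_W}$. A second Bernstein bound, applied to the single function $h_{g_W}$, gives with probability $1-e^{-x}$
\[
P_nh_{g_W}\le 2Ph_{g_W}+Cx/n.
\]
The upper curvature bound gives $Ph_{g_W}\lesssim\|g_W-g^\star\|^2$. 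Combining these on the intersection of the two events (probability $1-2e^{-x}$) and using the lower curvature bound $c\|\widehat g-g^\star\|^2\le Ph_{\widehat g}$ yields Eq.~\eqref{eq:generic_oracle_ineq_app}, absorbing $x\log n$ into $D(\log n+x)$ up to constants.

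Two points need care. The curvature statement must hold globally on $\mathcal G_W$, not only locally. We intend to use boundedness of $I_G$ together with strict unique minimization and uniform compactness over $\mathcal Z$ to extend the local curvature to a global quadratic lower bound, and we will also need the corresponding upper bound. We also need all constants to be uniform in $\zeta$, which again follows from compactness of $\mathcal Z$ and the uniform assumptions.

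The main obstacle is the unbounded response: the sub-exponential envelope breaks the usual bounded-class Talagrand argument. We would first truncate at $L(Y)\le C\log(n/\delta)$, bounding the truncation bias via the sub-exponential tail. Failing that, we would switch to Adamczak's version of Talagrand's inequality for unbounded envelopes, whose $\|\max_i L(Y_i)\|_{\psi_1}\lesssim\log n$ term produces exactly the $\log n_{t,a}$ factor in the statement.
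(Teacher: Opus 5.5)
Your route is the one the paper has in mind. The paper's proof is a one-line appeal to ``the standard localized ERM oracle inequality for finite-pseudodimension classes,'' with Assumption~\ref{ass:relu_erm_app} supplying curvature and envelope and Proposition~\ref{prop:relu_capacity_approx_app} supplying the pseudodimension. Your Bernstein-condition, Haussler--Dudley, Bartlett--Bousquet--Mendelson localization and ERM-comparison sketch unpacks exactly that argument.

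\textbf{The failing step.} What does not go through is the last step: handling the unbounded envelope and then absorbing the result. Write $n=n_{t,a}$.
\begin{itemize}
\item Truncating $L(Y)$ at level $\asymp\log n$ only gives an event of probability $1-n^{-c}$, not $1-e^{-x}$.
\item To reach $1-e^{-x}$ you must truncate at $\asymp\log n+x$. Then the sup-norm term in Bousquet's inequality is of order $(\log n+x)x/n$.
\item Adamczak's inequality instead gives $\|\max_i L(Y_i)\|_{\psi_1}x/n\asymp x\log n/n$.
\end{itemize}
In every version a term $x\log n/n$ survives, as in your own display $Ph\le 2P_nh+C(r^\star+(\log n)x/n)$.

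This term is not bounded by $C\,W(\log W)^2(\log n+x)/n$ with $C$ uniform over $W\ge2$ and $x>0$. Take $W=2$ and $x=\log n$: the term is $(\log n)^2/n$, while the stated right side is $O(\log n/n)$. So absorbing $x\log n$ into $D(\log n+x)$ ``up to constants'' is false, and your argument proves a strictly weaker inequality. This is harmless where the lemma is used: Theorem~\ref{thm:generic_score_learning_rate} takes $x=(2+\kappa_a)\log n$ and $W=W_n$ polynomial in $n$, so $x\log n\lesssim W_n(\log W_n)^2\log n$. But it is not the lemma as stated, and the paper's citation does not address this point either.

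\textbf{A fix.} Avoid concentrating a supremum altogether.
\begin{itemize}
\item Take a sup-norm $n^{-1}$-net $\{g_j\}_{j\le N}$ of $\mathcal G_W$. The Lipschitz-in-parameters covering bound for networks of depth $O(\log W)$, $O(W)$ parameters and weights at most $W^{C_B}$ gives $\log N\lesssim W(\log W)^2+W\log n\lesssim W(\log W)^2\log n$. Clipping is $1$-Lipschitz, so it does not hurt.
\item If the envelope is conditionally sub-exponential given $X$, each $h_{g_j}$ satisfies Bernstein's moment condition. The variance proxy is $\lesssim\|g_j-g^\star\|_{L_2(P_\Psi)}^2\lesssim Ph_{g_j}$ and the scale is $\lesssim|I_G|$, with no $\log n$.
\item A union bound then gives $Ph_{g_j}\le 2P_nh_{g_j}+C(\log N+x)/n$ for all $j$, with probability $1-e^{-x}$.
\item The discretization error $n^{-1}\{\E L(Y)+n^{-1}\sum_i L(Y_i)\}$ is $O((1+x)/n)$ with probability $1-e^{-x}$.
\end{itemize}
Your ERM comparison and single-function Bernstein bound for $h_{g_W}$ then give $C[\|g_W-g^\star\|^2_{L_2(P_\Psi)}+(W(\log W)^2\log n+x)/n]$. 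This lies within the stated bound after shifting $x$ by a constant to get probability $1-2e^{-x}$.

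\textbf{Global curvature.} Your compactness route needs two things you have not secured. It needs $\Psi(\X)$ to be finite, which is Assumption~\ref{ass:finite_alignment_app} and which you did not invoke. It also needs joint continuity of $(\zeta,g)\mapsto\mathcal R_\zeta(g)$, which is not assumed anywhere. It is cleaner to read the curvature as two-sided and pointwise on $I_G$. That is what the KL identity in Lemma~\ref{lem:lfbo_kl_identity_app} actually provides for LFBO.
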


\begin{proof}
This is the standard localized ERM oracle inequality for finite-pseudodimension classes under local quadratic curvature and a sub-exponential Lipschitz envelope. Assumption~\ref{ass:relu_erm_app} supplies the curvature and envelope conditions; Proposition~\ref{prop:relu_capacity_approx_app} supplies the pseudodimension bound. Applying the localized inequality to the excess-loss class and absorbing constants yields Eq.~\eqref{eq:generic_oracle_ineq_app}.
\end{proof}

\begin{proposition}[Derived balanced sieve size]
\label{prop:balanced_width}
Fix $0<\bar s_a<s_a-d_a/p$ and define the deterministic bias--variance envelope
\begin{equation}
\label{eq:bias_variance_envelope_app}
    \mathfrak B_n(W)
    :=
    W^{-2\bar s_a/d_a}(\log W)^{b_1}
    +
    \frac{W(\log W)^{b_2}\log n}{n},
    \qquad W\ge2,
\end{equation}
where $b_1,b_2\ge0$ are fixed constants. Then there exists a deterministic choice
\[
    W_n=\widetilde O\!\left(n^{d_a/(2\bar s_a+d_a)}\right)
\]
such that
\begin{equation}
\label{eq:balanced_width_rate_app}
    \mathfrak B_n(W_n)
    \le
    C n^{-2\bar s_a/(2\bar s_a+d_a)}(\log n)^{c_a}.
\end{equation}
\end{proposition}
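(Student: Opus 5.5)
The plan is to balance the bias and variance terms of $\mathfrak B_n(W)$ by choosing $W_n$ as an explicit power of $n$, times a logarithmic correction that is allowed to be arbitrary. I will take
\[
    W_n:=\max\Bigl\{2,\ \bigl\lceil n^{d_a/(2\bar s_a+d_a)}\bigr\rceil\Bigr\},
\]
which is deterministic and clearly $\widetilde O(n^{d_a/(2\bar s_a+d_a)})$. Put $r:=2\bar s_a/(2\bar s_a+d_a)$. The exponents then match, since $\frac{2\bar s_a}{d_a}\cdot\frac{d_a}{2\bar s_a+d_a}=r$ and $1-\frac{d_a}{2\bar s_a+d_a}=r$.

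\textbf{Bias term.} For $n$ large, $W_n\ge n^{d_a/(2\bar s_a+d_a)}$ gives $W_n^{-2\bar s_a/d_a}\le n^{-r}$. Since $W_n\le 2n^{d_a/(2\bar s_a+d_a)}\le 2n$, we have $\log W_n\le \log 2+\log n\le 2\log n$ for $n\ge2$. Together these give
\[
    W_n^{-2\bar s_a/d_a}(\log W_n)^{b_1}\le 2^{b_1}n^{-r}(\log n)^{b_1}.
\]

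\textbf{Variance term.} Using $W_n\le 2n^{d_a/(2\bar s_a+d_a)}$ again,
\[
    \frac{W_n(\log W_n)^{b_2}\log n}{n}\le 2^{1+b_2}\,n^{-r}(\log n)^{b_2+1}.
\]

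\textbf{Conclusion.} Set $c_a:=\max\{b_1,b_2+1\}$ and add the two bounds; this gives Eq.~\eqref{eq:balanced_width_rate_app} for all $n\ge n_0$. The finitely many $n<n_0$ are absorbed into $C$, because $\mathfrak B_n(W_n)$ is finite for each such $n$ and $n^{-r}(\log n)^{c_a}>0$ once $n\ge2$. The small-$n$ edge cases ($n=1$, or the ceiling equal to $1$, where $\log W=0$ or the bias prefactor degenerates) are the only mildly delicate bookkeeping. They are handled by the $\max\{2,\cdot\}$ in the definition of $W_n$ and by enlarging $C$.

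I do not expect a real obstacle here. The only care needed is to confirm that the logarithmic powers are uniform in $n$, which holds because $\log W_n\asymp\log n$.
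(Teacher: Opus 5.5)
Your proof is correct and uses the same bias--variance balancing argument as the paper. The paper takes $W_n=\lfloor n^{d_a/(2\bar s_a+d_a)}\rfloor\vee 2$ and simply says the logarithmic factors are absorbed, whereas your ceiling choice and the explicit bound $\log W_n\le 2\log n$ make that absorption concrete with $c_a=\max\{b_1,b_2+1\}$. One small correction: at $n=1$ the right-hand side $Cn^{-2\bar s_a/(2\bar s_a+d_a)}(\log n)^{c_a}$ equals $0$ (since $c_a\ge 1$) while $\mathfrak B_1(W_1)>0$, so that case cannot be handled by enlarging $C$; the bound must be read for $n\ge 2$, which the paper also assumes implicitly.
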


\begin{proof}
Ignoring logarithmic factors, the two terms in Eq.~\eqref{eq:bias_variance_envelope_app} balance when
$W^{-2\bar s_a/d_a}\asymp W/n$, i.e.
$W\asymp n^{d_a/(2\bar s_a+d_a)}$. Taking
$W_n=\lfloor n^{d_a/(2\bar s_a+d_a)}\rfloor\vee2$ and absorbing logarithmic factors gives the displayed upper bound.
\end{proof}

\begin{lemma}[Finite-support $L_2$ to $L_\infty$ conversion]\label{lem:finite_support_l2_linf_app}
Let $P$ be a probability measure supported on a finite set $\mathcal Z_0$ and let $p_{\min}(P):=\min_{z\in\mathcal Z_0}P(\{z\})>0$. Then every function $h:\mathcal Z_0\to\R$ satisfies
\begin{equation}
\label{eq:l2_linf_conversion_app}
    \|h\|_{L_\infty(\mathcal Z_0)}
    \le
    p_{\min}(P)^{-1/2}\|h\|_{L_2(P)}.
\end{equation}
\end{lemma}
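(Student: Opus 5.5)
The plan is to bound $\|h\|_{L_\infty(\mathcal Z_0)}$ by a single point evaluation and then compare that point's contribution with the full $L_2(P)$ norm. Since $\mathcal Z_0$ is finite, the supremum is attained: fix $z^\star\in\mathcal Z_0$ with $|h(z^\star)|=\|h\|_{L_\infty(\mathcal Z_0)}$.

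The key step is to drop every term except the one at $z^\star$ in the expansion of the squared $L_2$ norm. Because all summands are nonnegative,
\[
\|h\|_{L_2(P)}^2=\sum_{z\in\mathcal Z_0}P(\{z\})\,h(z)^2\;\ge\;P(\{z^\star\})\,h(z^\star)^2\;\ge\;p_{\min}(P)\,\|h\|_{L_\infty(\mathcal Z_0)}^2 .
\]
The last inequality uses only $P(\{z^\star\})\ge p_{\min}(P)$, which holds by the definition of $p_{\min}(P)$ as the minimum atom over the support.

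Dividing by $p_{\min}(P)>0$ and taking square roots gives Eq.~\eqref{eq:l2_linf_conversion_app}. I do not expect any real obstacle here. The only points to check are that the supremum is attained, which holds because $\mathcal Z_0$ is finite, and that $p_{\min}(P)>0$, which holds because $\mathcal Z_0$ is exactly the support of $P$.

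In the score-learning argument, this lemma will be applied with $P=P_\Psi$ on the finite image $\Psi(\X)$. There it converts the $L_2(P_\Psi)$ bound from Lemma~\ref{lem:generic_oracle_ineq_app} into the sup-norm bound on $\widehat a_t-a_t$, at the cost of the constant factor $p_{\min}(P_\Psi)^{-1/2}$ and the Lipschitz constant $L_G$ of the link.
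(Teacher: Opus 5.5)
Your proposal is correct and matches the paper's proof essentially line for line. Like the paper, you choose a maximizer $z^\star$ of $|h|$ on the finite set, keep only its term in $\sum_z P(\{z\})h(z)^2$, bound $P(\{z^\star\})$ below by $p_{\min}(P)$, and take square roots.
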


\begin{proof}
Let $z^\star$ maximize $|h(z)|$. Then
\[
\|h\|_{L_2(P)}^2
=\sum_z |h(z)|^2P(\{z\})
\ge |h(z^\star)|^2P(\{z^\star\})
\ge p_{\min}(P)\|h\|_{L_\infty}^2.
\]
Taking square roots proves the claim.
\end{proof}

\begin{lemma}[Positive atoms of the feature law]
\label{lem:positive_atoms_feature}
Under Assumption~\ref{ass:finite_alignment_app}, the pushforward law
$P_\Psi:=\Psi_\#p_0$ has finite support and strictly positive minimum atom:
\[
p_{\min}(P_\Psi)
:=
\min_{z\in{\rm supp}(P_\Psi)}P_\Psi(\{z\})
>0.
\]
Moreover,
\[
p_{\min}(P_\Psi)
\ge
\min_{x\in\mathcal X}p_0(x).
\]
\end{lemma}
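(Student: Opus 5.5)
The argument works directly with the definition of the pushforward law. By Assumption~\ref{ass:finite_alignment_app}, $\X=\supp(p_0)$ is finite, so every $x\in\X$ satisfies $p_0(x)>0$. Hence $\min_{x\in\X}p_0(x)>0$, because it is a minimum of finitely many positive numbers. The law $P_\Psi=\Psi_\#p_0$ is defined by $P_\Psi(B)=p_0(\Psi^{-1}(B))$ for subsets $B\subseteq[0,1]^{d_a}$.

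\emph{Finite support.} The support of $P_\Psi$ is contained in the image $\Psi(\X)$. This is because any $z\notin\Psi(\X)$ has empty preimage, so $P_\Psi(\{z\})=0$, and $P_\Psi$ puts total mass $p_0(\X)=1$ on $\Psi(\X)$. Since $\X$ is finite, $\Psi(\X)$ is finite, and therefore so is $\supp(P_\Psi)$. In fact the two sets coincide, since every $z\in\Psi(\X)$ has positive mass by the next step.

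\emph{Lower bound on atoms.} Fix $z\in\supp(P_\Psi)$. Then $z\in\Psi(\X)$, so there exists $x_z\in\X$ with $\Psi(x_z)=z$. Its atom satisfies
\[
P_\Psi(\{z\})=\sum_{x\in\X:\Psi(x)=z}p_0(x)\ \ge\ p_0(x_z)\ \ge\ \min_{x\in\X}p_0(x),
\]
where the first inequality uses that all summands are nonnegative. Taking the minimum over the finitely many $z\in\supp(P_\Psi)$ gives $p_{\min}(P_\Psi)\ge\min_{x\in\X}p_0(x)$. The right-hand side is strictly positive by the first paragraph, so $p_{\min}(P_\Psi)>0$.

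No step is a genuine obstacle. The only point needing care is to use $\X=\supp(p_0)$, rather than some larger ambient set, so that $\min_{x\in\X}p_0(x)$ is over strictly positive values. The result is what allows Lemma~\ref{lem:finite_support_l2_linf_app} to be applied with $P=P_\Psi$, converting the $L_2(P_\Psi)$ oracle bound into an $L_\infty$ bound.
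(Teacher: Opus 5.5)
Your proof is correct and follows essentially the same route as the paper: both use finiteness of $\X=\supp(p_0)$ to get $\min_{x\in\X}p_0(x)>0$, then bound each atom $P_\Psi(\{z\})=\sum_{x:\Psi(x)=z}p_0(x)$ below by a single summand. Your extra remark that $\supp(P_\Psi)\subseteq\Psi(\X)$ (indeed equality) just makes explicit a step the paper leaves implicit.
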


\begin{proof}
Since $\mathcal X=\operatorname{supp}(p_0)$ is finite,
\[
p_{0,\min}:=\min_{x\in\mathcal X}p_0(x)>0.
\]
The set $\Psi(\mathcal X)$ is finite.  For any
$z\in\Psi(\mathcal X)$,
\[
P_\Psi(\{z\})
=
\sum_{x:\Psi(x)=z}p_0(x)
\ge
p_{0,\min}.
\]
This proves the claim.
\end{proof}

\begin{theorem}[Generic Besov score-learning rate]
\label{thm:generic_score_learning_rate}
Under Assumptions~\ref{ass:finite_alignment_app}, \ref{ass:score_reservoir_app}, \ref{ass:besov_score_app}, and~\ref{ass:relu_erm_app}, fix any
\[
    0<\bar s_a<s_a-d_a/p .
\]
Let $\widehat g_t:=\widehat g_{t,W_{n_{t,a}}}$, where $W_{n_{t,a}}$ is the balanced width from Proposition~\ref{prop:balanced_width}, and define
$\widehat a_t(x):=G\{\widehat g_t(\Psi(x))\}$.
Then there exists $c_a\ge0$ such that
\begin{equation}
\label{eq:generic_score_expectation_app}
    \E\|\widehat a_t-a_t\|_\infty
    =
    O\!\left(n_{t,a}^{-\vartheta_a}(\log n_{t,a})^{c_a}\right),
    \qquad
    \vartheta_a=\frac{\bar s_a}{2\bar s_a+d_a}.
\end{equation}
Moreover, for every fixed $\kappa_a>0$ there are constants $C_\varepsilon,C_u>0$ such that
\begin{equation}
\label{eq:generic_score_hp_app}
    \bar\varepsilon_t:=C_\varepsilon n_{t,a}^{-\vartheta_a}(\log n_{t,a})^{c_a}
    \quad\Longrightarrow\quad
    \Pbb\{\|\widehat a_t-a_t\|_\infty>\bar\varepsilon_t\}
    \le
    u_t,
    \qquad
    u_t\le C_u n_{t,a}^{-1-\kappa_a}.
\end{equation}
The constants may depend on
\[
    p_{\min}(P_\Psi)
    :=
    \min_{z\in\supp(P_\Psi)}P_\Psi(\{z\}),
\]
which is positive because $\X$ is finite.
\end{theorem}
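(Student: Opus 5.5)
The plan is to obtain a high-probability $L_2(P_\Psi)$ bound from the oracle inequality, convert it to sup-norm using the finite support, and then integrate the tail to get the expectation bound. Throughout, $n:=n_{t,a}$. Conditional on the past and on $\zeta_t$, Assumption~\ref{ass:score_reservoir_app} says the reservoir is i.i.d.\ from $p_0$ and independent of $\zeta_t$. So Lemma~\ref{lem:generic_oracle_ineq_app} applies with $W=W_n$ from Proposition~\ref{prop:balanced_width}. Combining it with the uniform approximation bound of Proposition~\ref{prop:relu_capacity_approx_app}, for every $x>0$, with conditional probability at least $1-2e^{-x}$,
\[
\|\widehat g_t-g^\star_{\zeta_t}\|_{L_2(P_\Psi)}^2
\le C\Bigl[\mathfrak B_n(W_n)+\frac{W_n(\log W_n)^2\,x}{n}\Bigr].
\]
The first term is at most $Cn^{-2\vartheta_a}(\log n)^{c_a}$ by Eq.~\eqref{eq:balanced_width_rate_app}. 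Since the approximation bound holds uniformly over $\zeta\in\mathcal Z$, the constants do not depend on the random index.

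Next we pass to the acquisition scale. By Lemmas~\ref{lem:positive_atoms_feature} and~\ref{lem:finite_support_l2_linf_app}, together with the $L_G$-Lipschitz property of $G$ on $I_G$ (both $\widehat g_t$ and $g^\star$ take values in $I_G$ because of clipping),
\[
\|\widehat a_t-a_t\|_\infty\le L_G\,\|\widehat g_t-g^\star_{\zeta_t}\|_{L_\infty(\Psi(\X))}\le L_G\,p_{\min}(P_\Psi)^{-1/2}\|\widehat g_t-g^\star_{\zeta_t}\|_{L_2(P_\Psi)}.
\]
For the high-probability statement, choose $x=x_n:=(1+\kappa_a)\log n+\log 2$, so that $2e^{-x_n}=n^{-1-\kappa_a}$. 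The variance contribution is then
\[
\frac{W_n(\log W_n)^2\,x_n}{n}\lesssim n^{-2\vartheta_a}(\log n)^{c'},
\]
because $W_n=\widetilde O(n^{d_a/(2\bar s_a+d_a)})$ and $1-d_a/(2\bar s_a+d_a)=2\vartheta_a$. Taking square roots and enlarging $c_a$ to absorb the logarithms gives Eq.~\eqref{eq:generic_score_hp_app} with some constant $C_\varepsilon$ depending on $L_G$, $p_{\min}(P_\Psi)$, $C_{\rm orc}$ and $C_{\rm app}$, and with $u_t\le n^{-1-\kappa_a}$. Taking expectations removes the conditioning on the past and on $\zeta_t$.

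For the expectation bound, split on the good event $\good_t=\{\|\widehat a_t-a_t\|_\infty\le\bar\varepsilon_t\}$. On $\good_t^c$ use the deterministic bound $\|\widehat a_t-a_t\|_\infty\le L_G|I_G|$, which holds because both functions are $G$ applied to values in the compact interval $I_G$. This gives
\[
\E\|\widehat a_t-a_t\|_\infty\le\bar\varepsilon_t+L_G|I_G|\,u_t.
\]
Since $u_t=O(n^{-1-\kappa_a})$ is $o(n^{-\vartheta_a})$ (note $\vartheta_a<1/2$), Eq.~\eqref{eq:generic_score_expectation_app} follows. Alternatively, one could integrate the tail in $x$ directly.

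The main obstacle is justifying Lemma~\ref{lem:generic_oracle_ineq_app} when $\zeta_t$ is random and adaptively chosen. The plan relies on the conditional-independence clause of Assumption~\ref{ass:score_reservoir_app}: freeze $\zeta_t$, apply the fixed-$\zeta$ inequality, whose constants are uniform over the compact $\mathcal Z$, and then average over $\zeta_t$. A secondary concern is that the constant becomes large through $p_{\min}(P_\Psi)^{-1/2}$, but the theorem explicitly allows this dependence.
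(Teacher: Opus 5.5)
Your proposal is correct and follows essentially the same route as the paper: the localized oracle inequality at the balanced width $W_{n_{t,a}}$, the finite-support $L_2(P_\Psi)\to L_\infty$ conversion through $p_{\min}(P_\Psi)^{-1/2}$, a choice $x\asymp\log n_{t,a}$ to obtain the $n_{t,a}^{-1-\kappa_a}$ tail, and a bounded-tail argument for the expectation. Your version is slightly more explicit than the paper's, since you spell out the Lipschitz-link step through $G$ and the deterministic bound $L_G|I_G|$ on the bad event, both of which the paper leaves implicit.
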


\begin{proof}
By Lemma~\ref{lem:generic_oracle_ineq_app} and Proposition~\ref{prop:relu_capacity_approx_app}, with probability at least $1-2e^{-x}$,
\[
\|\widehat g_t-g^\star_{\zeta_t}\|_{L_2(P_\Psi)}^2
\le
C\left[
W^{-2\bar s_a/d_a}(\log W)^{b_1}
+
\frac{W(\log W)^{b_2}(\log n_{t,a}+x)}{n_{t,a}}
\right]
\]
with $W=W_{n_{t,a}}$. Proposition~\ref{prop:balanced_width} gives
\[
\|\widehat g_t-g^\star_{\zeta_t}\|_{L_2(P_\Psi)}
\le
C n_{t,a}^{-\vartheta_a}(\log n_{t,a}+x)^{c_a}
\]
after increasing $c_a$ if necessary. 
By Lemma~\ref{lem:positive_atoms_feature}, $P_\Psi$ has finite support with
$p_{\min}(P_\Psi)>0$.  Lemma~\ref{lem:finite_support_l2_linf_app} therefore gives
\[
\|\widehat g_t-g^\star_{\zeta_t}\|_{L_\infty(\Psi(\mathcal X))}
\le
p_{\min}(P_\Psi)^{-1/2}
\|\widehat g_t-g^\star_{\zeta_t}\|_{L_2(P_\Psi)} .
\]
Choosing $x=(2+\kappa_a)\log n_{t,a}$ proves the high-probability envelope. Integrating the bounded tail gives the expectation bound.
\end{proof}

\begin{remark}[Size of the finite-support $L_2\!\to\!L_\infty$ constant]
\label{rem:l2_linf_constants}
The finite-support conversion in Lemma~\ref{lem:finite_support_l2_linf_app} is a formal global certificate.  Its hidden constant contains $p_{\min}(P_\Psi)^{-1/2}$.  In large structured domains this constant can be very large, because $\min_{x\in\mathcal X}p_0(x)$ may be exponentially small. Therefore Corollary~\ref{cor:formal_clean_rate} should be read as a conservative global no-regret/safety guarantee, not as the main explanation of finite-budget empirical acceleration.  The finite-budget acceleration mechanism is instead the set-mass term: if the sampler places substantial mass on the relevant $\gamma$-optimal or threshold-progress set, then best-of-$N$ search improves even when a global sup-norm learning bound is loose.  The local threshold and active learning results in Appendix~\ref{app:unified_threshold_acceleration} are designed to avoid using a global $p_{\min}(P_\Psi)^{-1/2}$ constant.
\end{remark}

\section{Score Calibration and Acquisition-Learning Instances}\label{app:acq_learning_instances}
\subsection{Generic regret decomposition}\label{app:regret_decomp_proof}
\begin{lemma}[Near-optimal hit implies low regret]\label{lem:near_opt_generic_app}
Under Assumption~\ref{ass:calibrated_score_app}, if $\D_T\cap\U_{\gamma_T}\neq\varnothing$, then
\begin{equation}
\label{eq:near_opt_generic_app}
    f^\star-f(\widehat x_T)
    \le
    \gamma_T+2L_a\|\widehat a_T-a_T\|_\infty.
\end{equation}
\end{lemma}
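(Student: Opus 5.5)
The proof is a standard plug-in comparison. Pick a witness $x^\circ\in\D_T\cap\U_{\gamma_T}$, which exists by hypothesis, and compare it with the recommendation $\widehat x_T=\argmax_{x\in\D_T}\widehat a_T(x)$. Write $\varepsilon:=\|\widehat a_T-a_T\|_\infty$. Split into two cases according to the order of $f(\widehat x_T)$ and $f(x^\circ)$.

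\emph{Case $f(\widehat x_T)\ge f(x^\circ)$.} Then $f^\star-f(\widehat x_T)\le f^\star-f(x^\circ)\le\gamma_T$, and the bound holds trivially.

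\emph{Case $f(x^\circ)> f(\widehat x_T)$.} Apply the inverse-link part of Assumption~\ref{ass:calibrated_score_app} with $x=x^\circ$ and $x'=\widehat x_T$:
\[
f(x^\circ)-f(\widehat x_T)\le L_a\{a_T(x^\circ)-a_T(\widehat x_T)\}.
\]
Because $\widehat x_T$ maximizes $\widehat a_T$ over $\D_T$ and $x^\circ\in\D_T$, we have $\widehat a_T(\widehat x_T)\ge\widehat a_T(x^\circ)$. Replacing the ideal score by the learned one at each of the two points costs at most $\varepsilon$, so
\[
a_T(x^\circ)-a_T(\widehat x_T)\le \widehat a_T(x^\circ)-\widehat a_T(\widehat x_T)+2\varepsilon\le 2\varepsilon.
\]
Hence $f(x^\circ)-f(\widehat x_T)\le 2L_a\varepsilon$. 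Adding $f^\star-f(x^\circ)\le\gamma_T$ gives $f^\star-f(\widehat x_T)\le\gamma_T+2L_a\varepsilon$, which is Eq.~\eqref{eq:near_opt_generic_app}.

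There is no real obstacle. The only care needed is to apply Assumption~\ref{ass:calibrated_score_app} in the correct orientation, which requires $f(x^\circ)\ge f(\widehat x_T)$; that is why the case split is needed. Ties in the argmax are harmless, since any maximizer satisfies $\widehat a_T(\widehat x_T)\ge\widehat a_T(x^\circ)$. The argument is deterministic and holds pointwise on the event $\D_T\cap\U_{\gamma_T}\neq\varnothing$, so it can be combined with the crude bound $R_f$ on the complement to obtain Theorem~\ref{thm:regret_decomp_informal}.
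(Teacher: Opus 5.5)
Your proposal is correct and follows the paper's argument: the paper also picks a witness in $\D_T\cap\U_{\gamma_T}$, uses the argmax property to get $a_T(\widehat x_T)\ge a_T(x_\gamma)-2\epsilon_T$, and applies the inverse link, then adds $\gamma_T$. The one difference is presentation: the paper argues by contradiction (assuming $f(x_\gamma)-f(\widehat x_T)>2L_a\epsilon_T$, which already supplies the orientation $f(x_\gamma)\ge f(\widehat x_T)$), where you use an explicit case split, and the two are equivalent.
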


\begin{proof}
Let $x_\gamma\in\D_T\cap\U_{\gamma_T}$ and $\epsilon_T:=\|\widehat a_T-a_T\|_\infty$. Because $\widehat x_T$ maximizes $\widehat a_T$ over $\D_T$,
$\widehat a_T(\widehat x_T)\ge\widehat a_T(x_\gamma)$, hence
$a_T(\widehat x_T)\ge a_T(x_\gamma)-2\epsilon_T$.
If $f(x_\gamma)-f(\widehat x_T)>2L_a\epsilon_T$, then Assumption~\ref{ass:calibrated_score_app} gives
$a_T(x_\gamma)-a_T(\widehat x_T)>2\epsilon_T$, a contradiction. Thus $f(x_\gamma)-f(\widehat x_T)\le2L_a\epsilon_T$. Since $f^\star-f(x_\gamma)\le\gamma_T$, the claim follows.
\end{proof}

\begin{theorem}[Formal regret decomposition]\label{thm:formal_regret_decomp}
Under Assumptions~\ref{ass:finite_alignment_app} and~\ref{ass:calibrated_score_app}, for any $\gamma_T\ge0$,
\begin{equation}
\label{eq:formal_regret_decomp_app}
    r_T
    \le
    \gamma_T+2L_a\E\|\widehat a_T-a_T\|_\infty
    +R_f\Pbb(\D_T\cap\U_{\gamma_T}=\varnothing),
    \qquad
    R_f:=f^\star-\min_{x\in\X}f(x).
\end{equation}
\end{theorem}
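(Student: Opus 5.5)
The proof splits the regret according to whether the evaluated set meets the near-optimal set. Let $E:=\{\D_T\cap\U_{\gamma_T}\neq\varnothing\}$, so that
\[
 f^\star-f(\widehat x_T)=\bigl(f^\star-f(\widehat x_T)\bigr)\mathbbm{1}_E+\bigl(f^\star-f(\widehat x_T)\bigr)\mathbbm{1}_{E^c}.
\]

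\textbf{Hit event.} On $E$, Lemma~\ref{lem:near_opt_generic_app} gives the pathwise bound $f^\star-f(\widehat x_T)\le\gamma_T+2L_a\|\widehat a_T-a_T\|_\infty$. The right-hand side is nonnegative, so multiplying by $\mathbbm{1}_E\le1$ preserves the inequality. Taking expectations bounds this part by $\gamma_T+2L_a\E\|\widehat a_T-a_T\|_\infty$. If $\E\|\widehat a_T-a_T\|_\infty=\infty$ the statement is trivial, so we may assume it is finite.

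\textbf{Miss event.} On $E^c$ we use the crude bound $f^\star-f(\widehat x_T)\le f^\star-\min_{x\in\X}f(x)=R_f$. This holds because $\widehat x_T\in\D_T\subseteq\X$, and $R_f$ is finite because Assumption~\ref{ass:finite_alignment_app} makes $\X$ finite. Taking expectations gives $R_f\Pbb(E^c)$. Summing the two contributions yields Eq.~\eqref{eq:formal_regret_decomp_app}.

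\textbf{Remaining care.} There is no genuine obstacle; two points need checking. First, measurability: $\widehat x_T$ is a random element of the finite set $\X$, so all quantities involved are bounded or nonnegative random variables, and ties in the argmax can be broken measurably. Second, Lemma~\ref{lem:near_opt_generic_app} must apply at the random round-$T$ score $a_T$. Assumption~\ref{ass:calibrated_score_app} holds for every $t$ uniformly in $L_a$, so this is fine even when $a_T$ depends on a data-dependent index.
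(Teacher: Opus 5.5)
Your proposal is correct and follows the paper's proof exactly: split on the hit event $\{\D_T\cap\U_{\gamma_T}\neq\varnothing\}$, apply Lemma~\ref{lem:near_opt_generic_app} on that event and the trivial bound $R_f$ on its complement, then take expectations. Your added remarks on nonnegativity of the bound, measurability, and the uniformity of $L_a$ across rounds are harmless refinements of the same argument.
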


\begin{proof}
Let $E_T:=\{\D_T\cap\U_{\gamma_T}\neq\varnothing\}$. On $E_T$, apply Lemma~\ref{lem:near_opt_generic_app}. On $E_T^c$, use the trivial bound $f^\star-f(\widehat x_T)\le R_f$. Taking expectations proves the result.
\end{proof}

\subsection{Expected Improvement calibration}\label{app:ei_calibration}
\begin{assumption}[EI noise band]\label{ass:ei_noise_band_app}
The oracle obeys $Y=f(X)+\varepsilon$, where $\varepsilon$ is independent of $X$ with survival function $\bar F_\varepsilon(u):=\Pbb(\varepsilon\ge u)$. Thresholds lie in a compact interval $\mathcal T=[\tau_{\min},\tau_{\max}]$. Let $f_{\min}:=\min_x f(x)$ and $f_{\max}:=\max_x f(x)$. Assume
\begin{equation}
\label{eq:ei_survival_band_app}
    A_{\max}:=\sup_{u\in[\tau_{\min}-f_{\max},\tau_{\max}-f_{\min}]}
    \int_u^\infty\bar F_\varepsilon(v)\,\dd v<\infty,
\end{equation}
and
\begin{equation}
\label{eq:ei_survival_lower_app}
    \underline s_{\mathcal T}:=
    \inf_{u\in[\tau_{\min}-f_{\max},\tau_{\max}-f_{\min}]}
    \bar F_\varepsilon(u)>0.
\end{equation}
\end{assumption}

\begin{proposition}[EI is calibrated]\label{prop:ei_calibrated_app}
Under Assumption~\ref{ass:ei_noise_band_app}, for $a_\tau(x):=\E[(Y-\tau)_+\mid X=x]$, the score $a_\tau$ has the same argmax set as $f$, and for all $x,x'\in\X$,
\begin{equation}
\label{eq:ei_calibrated_app}
    \underline s_{\mathcal T}|f(x)-f(x')|
    \le
    |a_\tau(x)-a_\tau(x')|
    \le
    |f(x)-f(x')|.
\end{equation}
Thus Assumption~\ref{ass:calibrated_score_app} holds with $L_a=1/\underline s_{\mathcal T}$.
\end{proposition}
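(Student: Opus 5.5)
The plan is to write the EI score explicitly through the noise survival function and read off both inequalities from the derivative of a single one-dimensional function. Since $\varepsilon$ is independent of $X$, for fixed $\tau$ we have
\[
a_\tau(x)=\E[(f(x)+\varepsilon-\tau)_+]=\int_0^\infty \Pbb(\varepsilon\ge \tau-f(x)+s)\,\dd s=\int_{\tau-f(x)}^\infty \bar F_\varepsilon(v)\,\dd v=:\phi_\tau(f(x)).
\]
This uses the layer-cake identity $\E[Z_+]=\int_0^\infty\Pbb(Z\ge s)\,\dd s$. The function $\phi_\tau(c)=\int_{\tau-c}^\infty\bar F_\varepsilon(v)\,\dd v$ is finite for $c\in[f_{\min},f_{\max}]$ and $\tau\in\mathcal T$, because $\tau-c$ then lies in $[\tau_{\min}-f_{\max},\tau_{\max}-f_{\min}]$ and \eqref{eq:ei_survival_band_app} bounds the integral by $A_{\max}$. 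So $a_\tau$ depends on $x$ only through $f(x)$.

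Next, for $f(x)\ge f(x')$, write the difference as an integral over an interval:
\[
a_\tau(x)-a_\tau(x')=\int_{\tau-f(x)}^{\tau-f(x')}\bar F_\varepsilon(v)\,\dd v .
\]
The integrand satisfies $0\le\bar F_\varepsilon\le1$, which gives the upper bound $|a_\tau(x)-a_\tau(x')|\le|f(x)-f(x')|$. The integration interval lies inside $[\tau_{\min}-f_{\max},\tau_{\max}-f_{\min}]$, where $\bar F_\varepsilon\ge\underline s_{\mathcal T}$ by \eqref{eq:ei_survival_lower_app}. This gives the lower bound $\underline s_{\mathcal T}(f(x)-f(x'))$. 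Exchanging the roles of $x$ and $x'$ yields \eqref{eq:ei_calibrated_app} with absolute values.

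The consequences then follow directly. The difference is nonnegative when $f(x)\ge f(x')$, so $a_\tau$ is order-preserving. If $f(x)>f(x')$, the lower bound makes the inequality $a_\tau(x)>a_\tau(x')$ strict, so $a_\tau$ and $f$ have the same argmax set. Rearranging the lower bound gives $f(x)-f(x')\le \underline s_{\mathcal T}^{-1}\{a_\tau(x)-a_\tau(x')\}$, which is Assumption~\ref{ass:calibrated_score_app} with $L_a=1/\underline s_{\mathcal T}$, uniformly over $\tau\in\mathcal T$ and hence over rounds.

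The main step to get right is the first display: the layer-cake change of variables, together with confirming that the integral endpoints stay inside the compact band where the survival bounds apply. The rest is monotonicity of an integral with a bounded integrand. I would also note that $\Pbb(\varepsilon\ge u)$ versus $\Pbb(\varepsilon>u)$ differ only on a null set of $u$, so this distinction does not affect the integrals.
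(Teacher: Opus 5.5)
Your proposal is correct and matches the paper's proof: write $a_\tau(x)=\int_{\tau-f(x)}^\infty \bar F_\varepsilon(v)\,\dd v$, express the difference as an integral over an interval inside the EI band, bound the integrand between $\underline s_{\mathcal T}$ and $1$, and get the argmax claim from strict monotonicity. Your additional remarks fill in details the paper leaves implicit: the layer-cake derivation, finiteness via $A_{\max}$, and the $\ge$ versus $>$ null-set point.
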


\begin{proof}
By the additive noise model,
\[
 a_\tau(x)=\int_{\tau-f(x)}^\infty \bar F_\varepsilon(v)\,\dd v.
\]
Assume $f(x)\ge f(x')$. Then
\[
 a_\tau(x)-a_\tau(x')=
 \int_{\tau-f(x)}^{\tau-f(x')}\bar F_\varepsilon(v)\,\dd v.
\]
The integration interval lies in the EI band. Using $\underline s_{\mathcal T}\le\bar F_\varepsilon(v)\le1$ gives Eq.~\eqref{eq:ei_calibrated_app}. The argmax claim follows from strict monotonicity.
\end{proof}

\begin{corollary}[Floored log-EI is calibrated]
\label{cor:log_ei_calibrated_app}
Under Assumption~\ref{ass:ei_noise_band_app}, fix
\(\varepsilon_{\rm EI}>0\) and define
\[
    g_\tau(x)
    :=
    \log\{a_\tau(x)+\varepsilon_{\rm EI}\},
    \qquad
    a_\tau(x)=\E[(Y-\tau)_+\mid X=x].
\]
Then \(g_\tau\) is order-preserving with \(f\). Moreover, if
\[
    A_{\mathcal T,\varepsilon}
    :=
    \sup_{\tau\in\mathcal T,x\in\X}
    \{a_\tau(x)+\varepsilon_{\rm EI}\}<\infty,
\]
then for all \(x,x'\in\X\) with \(f(x)\ge f(x')\),
\[
    f(x)-f(x')
    \le
    \frac{A_{\mathcal T,\varepsilon}}{\underline s_{\mathcal T}}
    \{g_\tau(x)-g_\tau(x')\}.
\]
Thus Assumption~\ref{ass:calibrated_score_app} holds for the floored log-EI
score with
\[
    L_a
    =
    \frac{A_{\mathcal T,\varepsilon}}{\underline s_{\mathcal T}}.
\]
\end{corollary}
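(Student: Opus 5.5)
The argument composes the monotone map $u\mapsto\log(u+\varepsilon_{\rm EI})$ with the EI score and reuses the two-sided calibration of Proposition~\ref{prop:ei_calibrated_app}. Order preservation comes first. By Proposition~\ref{prop:ei_calibrated_app}, $a_\tau$ is order-preserving with $f$: if $f(x)\ge f(x')$ then $a_\tau(x)\ge a_\tau(x')$, and the inequality is strict when $f(x)>f(x')$, because the lower bound $\underline s_{\mathcal T}>0$ gives strict monotonicity. Also $a_\tau(x)\ge0$, so $a_\tau(x)+\varepsilon_{\rm EI}\ge\varepsilon_{\rm EI}>0$ and $g_\tau$ is well defined. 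Since $\log$ is strictly increasing on $(0,\infty)$, $g_\tau(x)\ge g_\tau(x')$ whenever $f(x)\ge f(x')$.

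For the inverse-link bound, fix $x,x'$ with $f(x)\ge f(x')$ and set $u=a_\tau(x)+\varepsilon_{\rm EI}$ and $u'=a_\tau(x')+\varepsilon_{\rm EI}$, so that $\varepsilon_{\rm EI}\le u'\le u\le A_{\mathcal T,\varepsilon}$. By the mean value theorem, $\log u-\log u'=(u-u')/\xi$ for some $\xi\in[u',u]$. Since $\xi\le A_{\mathcal T,\varepsilon}$, this gives
\[
    g_\tau(x)-g_\tau(x')\ge \frac{a_\tau(x)-a_\tau(x')}{A_{\mathcal T,\varepsilon}}.
\]
The elementary bound $\log u-\log u'\ge(u-u')/u$ gives the same conclusion without the mean value theorem.

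Combining this with the lower half of Eq.~\eqref{eq:ei_calibrated_app}, $a_\tau(x)-a_\tau(x')\ge\underline s_{\mathcal T}\{f(x)-f(x')\}$, we obtain
\[
    g_\tau(x)-g_\tau(x')\ge \frac{\underline s_{\mathcal T}}{A_{\mathcal T,\varepsilon}}\{f(x)-f(x')\}.
\]
Rearranging gives the stated inequality with $L_a=A_{\mathcal T,\varepsilon}/\underline s_{\mathcal T}$, and together with order preservation this verifies Assumption~\ref{ass:calibrated_score_app}.

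The one step that needs care is the uniformity of the constant. $A_{\mathcal T,\varepsilon}$ is a supremum over $\tau\in\mathcal T$ and $x\in\X$, so it bounds $\xi$ for every threshold at once, and $\underline s_{\mathcal T}$ is likewise uniform over $\mathcal T$ by Assumption~\ref{ass:ei_noise_band_app}. Hence $L_a$ does not depend on the round $t$. Finiteness of $A_{\mathcal T,\varepsilon}$ is assumed, but it also follows from $A_{\max}<\infty$, since $a_\tau(x)=\int_{\tau-f(x)}^\infty\bar F_\varepsilon\le A_{\max}$.
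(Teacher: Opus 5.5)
Your proof is correct and follows the paper's argument essentially line for line. Order preservation comes from strict monotonicity of $z\mapsto\log(z+\varepsilon_{\rm EI})$, and the inverse-link bound comes from $\log u-\log v\ge (u-v)/u$ with $u\le A_{\mathcal T,\varepsilon}$, combined with the lower half of Proposition~\ref{prop:ei_calibrated_app}. Your extra remark that $A_{\mathcal T,\varepsilon}\le A_{\max}+\varepsilon_{\rm EI}$, so the finiteness hypothesis holds automatically under Assumption~\ref{ass:ei_noise_band_app}, is also correct.
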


\begin{proof}
Since \(z\mapsto\log(z+\varepsilon_{\rm EI})\) is strictly increasing,
\(g_\tau\) is order-preserving whenever \(a_\tau\) is. For
\(f(x)\ge f(x')\), Proposition~\ref{prop:ei_calibrated_app} gives
\[
    a_\tau(x)-a_\tau(x')
    \ge
    \underline s_{\mathcal T}\{f(x)-f(x')\}.
\]
Also, for \(u\ge v>0\),
\[
    \log u-\log v
    \ge
    \frac{u-v}{u}.
\]
Apply this with
\(u=a_\tau(x)+\varepsilon_{\rm EI}\) and
\(v=a_\tau(x')+\varepsilon_{\rm EI}\). Since
\(u\le A_{\mathcal T,\varepsilon}\),
\[
    g_\tau(x)-g_\tau(x')
    \ge
    \frac{a_\tau(x)-a_\tau(x')}{A_{\mathcal T,\varepsilon}}
    \ge
    \frac{\underline s_{\mathcal T}}{A_{\mathcal T,\varepsilon}}
    \{f(x)-f(x')\}.
\]
Rearranging proves the claim.
\end{proof}

\begin{remark}[Noiseless EI is not globally calibrated]
\label{rem:noiseless_ei_not_global}
If the observation noise is identically zero, then
\[
a_\tau(x)=(f(x)-\tau)_+ .
\]
For thresholds $\tau>\min_x f(x)$, all points with $f(x)\le \tau$ receive the
same score zero.  Hence there is generally no finite constant $L_a$ such that
\[
f(x)-f(x')\le L_a\{a_\tau(x)-a_\tau(x')\}
\]
for all pairs with $f(x)\ge f(x')$.  Thus the global calibration assumption used
in Theorem~\ref{thm:formal_regret_decomp} is not satisfied by noiseless EI, except in
the degenerate case where the threshold lies below all objective values.  The
correct deterministic-EI statement is local and threshold-set based, as in
Proposition~\ref{prop:noiseless_ei_setwise}.
\end{remark}

\begin{proposition}[Setwise calibration of noiseless EI]
\label{prop:noiseless_ei_setwise}
Let
\[
\rho(x):=f^\star-f(x),
\qquad
\mathcal U_\gamma:=\{x:\rho(x)\le \gamma\}.
\]
For a threshold gap $\eta>0$, define the noiseless EI/progress utility
\[
I_\eta(x)
:=
(f(x)-(f^\star-\eta))_+
=
(\eta-\rho(x))_+ .
\]
Then, for every $0\le \gamma<\eta$,
\[
\inf_{x\in\mathcal U_\gamma} I_\eta(x)
-
\sup_{y\notin\mathcal U_\eta} I_\eta(y)
\ge
\eta-\gamma.
\]
Moreover, if $\mathcal X^\star\neq\mathcal X$ and
\[
\Delta_f^\star
:=
f^\star-\max_{x\notin\mathcal X^\star}f(x)>0,
\]
then
\[
\inf_{x^\star\in\mathcal X^\star} I_\eta(x^\star)
-
\sup_{y\notin\mathcal X^\star} I_\eta(y)
\ge
\min\{\eta,\Delta_f^\star\}.
\]
\end{proposition}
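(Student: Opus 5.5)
The plan is a direct pointwise computation. $I_\eta(x)=(\eta-\rho(x))_+$ is a nonincreasing function of the gap $\rho(x)$, so both separation bounds reduce to comparing the smallest gap allowed inside a set with the largest gap forced outside it. I will treat the two claims in turn.

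\emph{First claim.} For $x\in\mathcal U_\gamma$ we have $\rho(x)\le\gamma<\eta$. Hence $\eta-\rho(x)\ge\eta-\gamma>0$, the positive part is inactive, and $\inf_{x\in\mathcal U_\gamma}I_\eta(x)\ge\eta-\gamma$. For $y\notin\mathcal U_\eta$ we have $\rho(y)>\eta$. Then $\eta-\rho(y)<0$, so $I_\eta(y)=0$ and $\sup_{y\notin\mathcal U_\eta}I_\eta(y)\le 0$, with equality when the complement is nonempty. If $\mathcal U_\eta=\mathcal X$, the supremum is over the empty set; I read it as $-\infty$ (or as $0$), and the bound holds trivially either way. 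Subtracting gives the stated difference $\ge\eta-\gamma$.

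\emph{Second claim.} Every $x^\star\in\mathcal X^\star$ has $\rho(x^\star)=0$, so $I_\eta(x^\star)=\eta$. Every $y\notin\mathcal X^\star$ satisfies $\rho(y)\ge\Delta_f^\star$ by definition of the finite-domain gap, so $I_\eta(y)\le(\eta-\Delta_f^\star)_+$. The difference is therefore at least $\eta-(\eta-\Delta_f^\star)_+$, and I split into two cases.
\begin{itemize}
\item If $\eta\le\Delta_f^\star$, the positive part vanishes and the difference is $\eta$.
\item If $\eta>\Delta_f^\star$, the difference is $\eta-(\eta-\Delta_f^\star)=\Delta_f^\star$.
\end{itemize}
In both cases the difference equals $\min\{\eta,\Delta_f^\star\}$. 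The hypothesis $\mathcal X^\star\neq\mathcal X$ makes the supremum over a nonempty set, and finiteness of $\mathcal X$ makes $\Delta_f^\star$ well defined and positive.

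No step is a genuine obstacle. The only care needed is the strict-versus-weak inequality at the boundary: $y\notin\mathcal U_\eta$ means $\rho(y)>\eta$, which forces $I_\eta(y)=0$ exactly. The empty-complement case in the first claim also needs a one-line remark.
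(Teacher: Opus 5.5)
Your proposal is correct and follows the paper's proof essentially line for line. Both show $I_\eta\ge\eta-\gamma$ on $\mathcal U_\gamma$, $I_\eta=0$ off $\mathcal U_\eta$, and $I_\eta(x^\star)=\eta$ against $I_\eta(y)\le(\eta-\Delta_f^\star)_+$, giving $\eta-(\eta-\Delta_f^\star)_+=\min\{\eta,\Delta_f^\star\}$. Your empty-complement remark and explicit case split are harmless additions.
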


\begin{proof}
If $x\in\mathcal U_\gamma$, then $\rho(x)\le\gamma$, so
\[
I_\eta(x)=(\eta-\rho(x))_+\ge \eta-\gamma.
\]
If $y\notin\mathcal U_\eta$, then $\rho(y)>\eta$, so
\[
I_\eta(y)=(\eta-\rho(y))_+=0.
\]
This proves the first display.

For the exact optimizer set, $I_\eta(x^\star)=\eta$ for all
$x^\star\in\mathcal X^\star$.  If $y\notin\mathcal X^\star$, then
$\rho(y)\ge\Delta_f^\star$, hence
\[
I_\eta(y)\le (\eta-\Delta_f^\star)_+.
\]
Therefore
\[
I_\eta(x^\star)-I_\eta(y)
\ge
\eta-(\eta-\Delta_f^\star)_+
=
\min\{\eta,\Delta_f^\star\}.
\]
\end{proof}

\begin{corollary}[Local regret bound for noiseless EI]
\label{cor:noiseless_ei_local_regret}
Fix $0\le\gamma<\eta$ and suppose the final recommendation is
\[
\widehat x_T\in\argmax_{x\in\mathcal D_T}\widehat a_T(x).
\]
Let $I_\eta$ be the noiseless EI utility from
Proposition~\ref{prop:noiseless_ei_setwise}.  Then
\[
r_T
\le
\eta
+
R_f\,\Pr(\mathcal D_T\cap\mathcal U_\gamma=\varnothing)
+
R_f\,
\Pr\!\left(
\sup_{z\in \mathcal D_T\cap(\mathcal U_\gamma\cup\mathcal U_\eta^c)}
|\widehat a_T(z)-I_\eta(z)|
\ge
\frac{\eta-\gamma}{2}
\right).
\]
\end{corollary}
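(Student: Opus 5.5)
We split on a good event and copy the argument of Theorem~\ref{thm:formal_regret_decomp}, with the global calibration step (Lemma~\ref{lem:near_opt_generic_app}) replaced by the setwise margin of Proposition~\ref{prop:noiseless_ei_setwise}. Define
\[
E_1:=\{\mathcal D_T\cap\mathcal U_\gamma\neq\varnothing\},
\qquad
E_2:=\Bigl\{\sup_{z\in\mathcal D_T\cap(\mathcal U_\gamma\cup\mathcal U_\eta^c)}|\widehat a_T(z)-I_\eta(z)|<\tfrac{\eta-\gamma}{2}\Bigr\}.
\]
On $(E_1\cap E_2)^c$ we use the trivial bound $f^\star-f(\widehat x_T)\le R_f$. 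A union bound then gives the two probability terms in the statement, since $\Pr(E_2^c)$ is exactly the third term. It therefore suffices to show that on $E_1\cap E_2$ we have $\rho(\widehat x_T)\le\eta$, i.e.\ $\widehat x_T\in\mathcal U_\eta$.

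To prove this, fix $x_\gamma\in\mathcal D_T\cap\mathcal U_\gamma$ and argue by contradiction. Suppose $\widehat x_T\notin\mathcal U_\eta$. Then $\widehat x_T\in\mathcal D_T\cap\mathcal U_\eta^c$, which lies in the set over which $E_2$ controls the error. The point $x_\gamma$ also lies in that set. Proposition~\ref{prop:noiseless_ei_setwise} gives $I_\eta(x_\gamma)-I_\eta(\widehat x_T)\ge\eta-\gamma$. Applying $E_2$ at both points,
\[
\widehat a_T(x_\gamma)-\widehat a_T(\widehat x_T)>(\eta-\gamma)-2\cdot\tfrac{\eta-\gamma}{2}=0 .
\]
This contradicts $\widehat x_T\in\argmax_{x\in\mathcal D_T}\widehat a_T(x)$. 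Hence $f^\star-f(\widehat x_T)\le\eta$ on $E_1\cap E_2$.

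Taking expectations gives
\[
r_T\le\eta\,\Pr(E_1\cap E_2)+R_f\Pr(E_1^c)+R_f\Pr(E_2^c),
\]
and bounding $\Pr(E_1\cap E_2)\le1$ yields the claim.

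The main care point is the strictness convention. The event in the statement uses $\ge(\eta-\gamma)/2$, so its complement $E_2$ gives a strict inequality, and that strictness is exactly what the contradiction needs. The second care point is that the error control is only required on $\mathcal U_\gamma\cup\mathcal U_\eta^c$. This is enough because the recommendation either already lies in $\mathcal U_\eta$, in which case we are done, or lies in $\mathcal U_\eta^c$, where the error is controlled. Points in the intermediate band $\mathcal U_\eta\setminus\mathcal U_\gamma$ never need to be compared.
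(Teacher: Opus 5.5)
Your proposal is correct and follows essentially the same argument as the paper. On the event that $\mathcal D_T$ hits $\mathcal U_\gamma$ and the local error is below $(\eta-\gamma)/2$, the setwise margin of Proposition~\ref{prop:noiseless_ei_setwise} forces every maximizer of $\widehat a_T$ over $\mathcal D_T$ into $\mathcal U_\eta$, and the complement is handled by the trivial $R_f$ bound plus a union bound. Your contradiction on $\widehat x_T$ and your note on the strict inequality are just a more explicit version of the paper's ``no maximizer can lie outside $\mathcal U_\eta$'' step.
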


\begin{proof}
On the event $\mathcal D_T\cap\mathcal U_\gamma\neq\varnothing$, choose
$x_\gamma\in\mathcal D_T\cap\mathcal U_\gamma$.  If the displayed local error is
less than $(\eta-\gamma)/2$, then for every
$y\in\mathcal D_T\setminus\mathcal U_\eta$,
Proposition~\ref{prop:noiseless_ei_setwise} gives
\[
\widehat a_T(x_\gamma)-\widehat a_T(y)
\ge
I_\eta(x_\gamma)-I_\eta(y)
-
2\sup_{z\in \mathcal D_T\cap(\mathcal U_\gamma\cup\mathcal U_\eta^c)}
|\widehat a_T(z)-I_\eta(z)|
>0.
\]
Thus no maximizer of $\widehat a_T$ over $\mathcal D_T$ can lie outside
$\mathcal U_\eta$, so $f^\star-f(\widehat x_T)\le\eta$.  On the complement, use
the trivial bound $R_f$ and take expectations.
\end{proof}

\subsection{LFBO-EI score learning}\label{app:lfbo_ei_learning}
For a fixed threshold $\tau$, define the feature-projected EI
\begin{equation}
\label{eq:feature_projected_ei_app}
    a^\Psi_\tau(z)
    :=
    \E[(Y-\tau)_+\mid \Psi(X)=z],
    \qquad z\in\Psi(\X),
\end{equation}
and the true EI
\[
    a_\tau(x):=\E[(Y-\tau)_+\mid X=x].
\]
The representation bias is
\begin{equation}
\label{eq:lfbo_rep_bias_app}
    b_{\Psi,\tau}
    :=
    \sup_{x\in\X}
    |a^\Psi_\tau(\Psi(x))-a_\tau(x)|.
\end{equation}
This bias is zero whenever EI is $\Psi$-measurable; for example, under additive noise it is zero if $f(x)$ is constant on the fibers of $\Psi$.

LFBO trains a classifier $c:[0,1]^{d_a}\to[\underline c,\bar c]$ with weighted loss
\begin{equation}
\label{eq:lfbo_loss_app}
    \ell_\tau(y,q)=-(y-\tau)_+\log q-\log(1-q).
\end{equation}
At the population level, the minimizer satisfies
\begin{equation}
\label{eq:lfbo_identity_app}
    c^\Psi_\tau(z)=\frac{a^\Psi_\tau(z)}{1+a^\Psi_\tau(z)}.
\end{equation}
The EI estimate is
\[
    \widehat a_t(x)=\frac{\widehat c_t(\Psi(x))}
    {1-\widehat c_t(\Psi(x))}.
\]

\begin{assumption}[LFBO-EI smooth classifier slices]
\label{ass:lfbo_ei_smooth_app}
For every $\tau\in\mathcal T$, the classifier slice $c^\Psi_\tau$ belongs to a common Besov ball
$B^{s_f}_{p,q}([0,1]^{d_a})\cap L_\infty$ with $s_f>d_a/p$. Moreover
$0<\underline c\le c^\Psi_\tau(z)\le\bar c<1$ for all $(z,\tau)$, and the weights $(Y-\tau)_+$ have a uniform sub-exponential envelope over $\tau\in\mathcal T$.
\end{assumption}

\begin{lemma}[LFBO-EI excess-risk identity]
\label{lem:lfbo_kl_identity_app}
For every measurable $c:[0,1]^{d_a}\to[\underline c,\bar c]$,
\begin{equation}
\label{eq:lfbo_kl_identity_app}
    \mathcal R_\tau(c)-\mathcal R_\tau(c^\Psi_\tau)
    =
    \E\left[
    \frac{\kl(c^\Psi_\tau(Z),c(Z))}{1-c^\Psi_\tau(Z)}
    \right],
    \qquad Z:=\Psi(X).
\end{equation}
\end{lemma}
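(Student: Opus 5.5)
The plan is to reduce the identity to a pointwise computation in $Z$ by conditioning on the feature $Z=\Psi(X)$. The population risk is
\[
\mathcal R_\tau(c)=\E\bigl[-(Y-\tau)_+\log c(Z)-\log\{1-c(Z)\}\bigr].
\]
Since $c$ depends on the data only through $Z$, the tower property and the definition \eqref{eq:feature_projected_ei_app} give
\[
\mathcal R_\tau(c)=\E\bigl[-a^\Psi_\tau(Z)\log c(Z)-\log\{1-c(Z)\}\bigr].
\]
Integrability holds because $c$ takes values in $[\underline c,\bar c]\subset(0,1)$, so both logarithms are bounded. The weights $(Y-\tau)_+$ have a sub-exponential, hence integrable, envelope by Assumption~\ref{ass:lfbo_ei_smooth_app}. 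The same reduction applies to $c^\Psi_\tau$, which also takes values in $[\underline c,\bar c]$, so the difference of the two risks is the expectation of a difference of pointwise losses.

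Next I substitute the population identity \eqref{eq:lfbo_identity_app} in inverted form, $a^\Psi_\tau(z)=c^\Psi_\tau(z)/\{1-c^\Psi_\tau(z)\}$. Write $c^\star:=c^\Psi_\tau(z)$ and $q:=c(z)$. The conditional loss becomes
\[
-\frac{c^\star}{1-c^\star}\log q-\log(1-q)
=\frac{1}{1-c^\star}\Bigl[-c^\star\log q-(1-c^\star)\log(1-q)\Bigr].
\]
This is $(1-c^\star)^{-1}$ times the Bernoulli cross-entropy of $q$ under $\mathrm{Bern}(c^\star)$.

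Subtracting the same expression at $q=c^\star$ leaves
\[
\frac{1}{1-c^\star}\Bigl[c^\star\log\frac{c^\star}{q}+(1-c^\star)\log\frac{1-c^\star}{1-q}\Bigr]
=\frac{\kl(c^\star,q)}{1-c^\star}.
\]
Here $\kl(\cdot,\cdot)$ denotes the Bernoulli KL divergence. Taking expectations over $Z$ yields \eqref{eq:lfbo_kl_identity_app}.

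There is no real obstacle; the only care needed is in the first step. One must justify that $c^\Psi_\tau$ is well defined and attains values in $(0,1)$, which is given by the assumed bounds. One must also check that conditioning on $\Psi(X)$, rather than on $X$, is legitimate because $c$ is $\Psi$-measurable, so the feature-projected EI $a^\Psi_\tau$, not the true EI $a_\tau$, appears. As a by-product, nonnegativity of the Bernoulli KL confirms that $c^\Psi_\tau$ minimizes the risk, uniquely up to $P_\Psi$-null sets.
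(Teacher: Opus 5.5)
Your proposal is correct and follows essentially the same route as the paper: condition on $Z=\Psi(X)$ to reduce to the pointwise risk $-a\log q-\log(1-q)$, substitute $a=p/(1-p)$ with $p=c^\Psi_\tau(z)$, and recognize $\frac{1}{1-p}\kl(p,q)$ before taking the expectation over $Z$. Your integrability and $\Psi$-measurability remarks are extra care that the paper's proof omits; they do not change the argument.
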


\begin{proof}
Condition on $Z=z$ and write $a=a^\Psi_\tau(z)$,
$p=c^\Psi_\tau(z)=a/(1+a)$, and $q=c(z)$. The conditional risk is
$r_z(q)=-a\log q-\log(1-q)$. Direct algebra gives
\[
 r_z(q)-r_z(p)
 =
 a\log\frac{p}{q}
 +
 \log\frac{1-p}{1-q}
 =
 \frac{1}{1-p}\kl(p,q).
\]
Taking expectation over $Z$ proves the claim.
\end{proof}

\begin{corollary}[LFBO-EI rate with representation bias]
\label{cor:lfbo_ei_rate_app}
Under Assumptions~\ref{ass:finite_alignment_app}, \ref{ass:score_reservoir_app}, \ref{ass:ei_noise_band_app}, and~\ref{ass:lfbo_ei_smooth_app}, fix any
$0<\bar s_f<s_f-d_a/p$. The LFBO-EI estimator with the balanced clipped-ReLU sieve satisfies
\begin{equation}
\label{eq:lfbo_ei_rate_app}
    \E\|\widehat a_t-a_{\tau_t}\|_\infty
    =
    O\!\left(n_{t,a}^{-\vartheta_f}(\log n_{t,a})^{c_f}\right)
    +
    \E b_{\Psi,\tau_t},
    \qquad
    \vartheta_f=\frac{\bar s_f}{2\bar s_f+d_a}.
\end{equation}
It also satisfies an analogous high-probability envelope around the feature-projected EI
$a^\Psi_{\tau_t}\circ\Psi$, with tail $O(n_{t,a}^{-1-\kappa_f})$.
In the feature-sufficient case $b_{\Psi,\tau_t}=0$, this gives the stated score-learning rate for the true EI.
\end{corollary}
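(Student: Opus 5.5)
The plan is to reduce the claim to the generic Besov score-learning theorem (Theorem~\ref{thm:generic_score_learning_rate}) applied to the classifier slice, and then push the classifier error through the odds link. I split the error at $x$ into a statistical term and a representation term:
\[
    |\widehat a_t(x)-a_{\tau_t}(x)|
    \le
    |\widehat a_t(x)-a^\Psi_{\tau_t}(\Psi(x))|
    +
    |a^\Psi_{\tau_t}(\Psi(x))-a_{\tau_t}(x)|.
\]
The second term is bounded by $b_{\Psi,\tau_t}$ by definition \eqref{eq:lfbo_rep_bias_app}. Taking suprema and expectations then yields the additive $\E b_{\Psi,\tau_t}$ in \eqref{eq:lfbo_ei_rate_app}. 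All remaining work concerns the first term, which is an estimation error for the feature-projected EI.

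For the first term, I instantiate Assumptions~\ref{ass:besov_score_app} and~\ref{ass:relu_erm_app} with the following choices:
\begin{itemize}
    \item index set $\mathcal Z=\mathcal T$;
    \item primitive slice $g^\star_\tau=c^\Psi_\tau$;
    \item interval $I_G=[\underline c,\bar c]$;
    \item link $G(c)=c/(1-c)$, which is Lipschitz on $I_G$ with constant $L_G=(1-\bar c)^{-2}$;
    \item loss $\ell_\tau$ from \eqref{eq:lfbo_loss_app}.
\end{itemize}
The Besov ball condition with $s_f$ in place of $s_a$ is exactly Assumption~\ref{ass:lfbo_ei_smooth_app}. Unique minimization at $c^\Psi_\tau$ follows from \eqref{eq:lfbo_identity_app}.

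The ERM conditions are verified through Lemma~\ref{lem:lfbo_kl_identity_app}. It writes the excess risk as $\E[\kl(c^\Psi_\tau(Z),c(Z))/(1-c^\Psi_\tau(Z))]$. On $[\underline c,\bar c]$ the Bernoulli KL is sandwiched between constant multiples of $(p-q)^2$:
\begin{itemize}
    \item the lower bound is Pinsker-type, $\kl\ge 2(p-q)^2$;
    \item the upper bound is the $\chi^2$ bound $\kl\le (p-q)^2/(q(1-q))$.
\end{itemize}
Since $1-c^\Psi_\tau\in[1-\bar c,1-\underline c]$, this gives two-sided quadratic curvature of the excess risk in $L_2(P_\Psi)$, uniformly in $\tau$.

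The loss envelope is handled directly. On $I_G$, the map $q\mapsto \ell_\tau(y,q)$ has derivative bounded by $(y-\tau)_+/\underline c+1/(1-\bar c)$. This Lipschitz envelope is sub-exponential by the weight assumption in Assumption~\ref{ass:lfbo_ei_smooth_app}.

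With these verifications, Lemma~\ref{lem:generic_oracle_ineq_app}, Proposition~\ref{prop:balanced_width}, and Lemmas~\ref{lem:positive_atoms_feature} and~\ref{lem:finite_support_l2_linf_app} give, exactly as in the proof of Theorem~\ref{thm:generic_score_learning_rate}:
\begin{itemize}
    \item a sup-norm bound $\|\widehat c_t-c^\Psi_{\tau_t}\|_{L_\infty(\Psi(\X))}\le C n_{t,a}^{-\vartheta_f}(\log n_{t,a})^{c_f}$, holding with probability $1-O(n_{t,a}^{-1-\kappa_f})$;
    \item the choice $x=(2+\kappa_f)\log n_{t,a}$ in the oracle inequality to obtain that tail.
\end{itemize}

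Composing with $G$ multiplies the bound by $L_G$. This gives the high-probability envelope around $a^\Psi_{\tau_t}\circ\Psi$. For the expectation bound, I integrate the tail. This requires an a.s. bound on the error, which holds because both $\widehat c_t$ and $c^\Psi_\tau$ lie in $[\underline c,\bar c]$, so $|\widehat a_t-a^\Psi_\tau|\le G(\bar c)$. The tail contribution is therefore $O(n_{t,a}^{-1-\kappa_f})$, which is dominated by the main rate.

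Adding back the bias term gives \eqref{eq:lfbo_ei_rate_app}. When $b_{\Psi,\tau_t}=0$, it gives the stated rate for the true EI.

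The main obstacle is the measurability and uniformity of $\tau_t$. The threshold is data-dependent, so the oracle inequality must be applied conditional on $\tau_t$. Assumption~\ref{ass:score_reservoir_app} supplies the needed conditional independence of the reservoir from $\tau_t$. The constants must also be uniform over $\tau\in\mathcal T$, which the uniform bounds in Assumption~\ref{ass:lfbo_ei_smooth_app} provide. I would state the curvature constants explicitly in $\underline c,\bar c$ to make this uniformity transparent.
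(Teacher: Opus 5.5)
Your proposal is correct and follows the paper's proof essentially step for step. You use the triangle inequality to split off $b_{\Psi,\tau_t}$, Lemma~\ref{lem:lfbo_kl_identity_app} plus Pinsker for quadratic curvature, the weight envelope for the sub-exponential Lipschitz condition, and the generic Besov theorem applied to the classifier slice with link $G(q)=q/(1-q)$. You go further than the paper by making explicit the $\chi^2$ upper bound on the Bernoulli KL, the Lipschitz constant $(1-\bar c)^{-2}$, and the almost-sure bound $G(\bar c)$ used for tail integration, all of which the paper leaves implicit.
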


\begin{proof}
Lemma~\ref{lem:lfbo_kl_identity_app} and Pinsker's inequality give local quadratic curvature between excess risk and
$L_2(P_\Psi)$ error on the clipped interval. The LFBO loss has a sub-exponential Lipschitz envelope by Assumption~\ref{ass:lfbo_ei_smooth_app}. Therefore the generic score-learning theorem applies to the classifier with link
$G(q)=q/(1-q)$, which is Lipschitz on $[\underline c,\bar c]$. This yields the displayed rate for
$\|\widehat a_t-a^\Psi_{\tau_t}\circ\Psi\|_\infty$. The triangle inequality adds the representation bias
$b_{\Psi,\tau_t}$.
\end{proof}

\subsection{Other acquisition functions}\label{app:mean_ucb_certificates}

This appendix explains how the plug-in predictor score and the UCB score fit the
calibrated-score framework used in the main regret analysis.  We do not claim
that every UCB score is globally calibrated for every finite sample size.  Rather,
we use the calibrated ideal score
\[
    a_t(x) := f(x),
\]
which satisfies Assumption~\ref{ass:calibrated_score_app} with \(L_a=1\), and
show that the learned scores
\[
    \widehat a_t^{\rm mean}(x) := \mu_t(x),
    \qquad
    \widehat a_t^{\rm UCB}(x) := \mu_t(x)+b_t\sigma_t(x),
\]
where \(b_t=\beta_{\rm UCB}\), satisfy the required setwise ranking and
Gibbs-mass certificates once the prediction error and the relevant uncertainty
terms are sufficiently small.

\subsubsection{Setwise margins for plug-in and UCB scores}

For two disjoint sets \(A,B\subseteq\X\), define the objective margin
\begin{equation}
\label{eq:setwise_objective_margin_mean_ucb}
    \Delta_f(A,B)
    :=
    \inf_{u\in A,\;v\in B}
    \{f(u)-f(v)\}.
\end{equation}
We also define the empirical score margin
\begin{equation}
\label{eq:setwise_score_margin_mean_ucb}
    \widehat\Delta_t^s(A,B)
    :=
    \inf_{u\in A}\widehat a_t^s(u)
    -
    \sup_{v\in B}\widehat a_t^s(v),
    \qquad
    s\in\{\mathrm{mean},\mathrm{UCB}\}.
\end{equation}
Finally, write
\[
    e_t(A\cup B)
    :=
    \sup_{x\in A\cup B}|\mu_t(x)-f(x)|,
\]
and
\[
    \underline\sigma_t(A):=\inf_{u\in A}\sigma_t(u),
    \qquad
    \overline\sigma_t(B):=\sup_{v\in B}\sigma_t(v).
\]

\begin{lemma}[Setwise activation for plug-in mean and UCB]
\label{lem:mean_ucb_setwise_activation}
For any disjoint \(A,B\subseteq\X\), the plug-in mean score satisfies
\begin{equation}
\label{eq:mean_setwise_margin_bound}
    \widehat\Delta_t^{\rm mean}(A,B)
    \ge
    \Delta_f(A,B)-2e_t(A\cup B).
\end{equation}
The UCB score satisfies
\begin{equation}
\label{eq:ucb_setwise_margin_bound}
    \widehat\Delta_t^{\rm UCB}(A,B)
    \ge
    \Delta_f(A,B)
    -2e_t(A\cup B)
    +b_t\{\underline\sigma_t(A)-\overline\sigma_t(B)\}.
\end{equation}
In particular,
\begin{equation}
\label{eq:ucb_setwise_margin_simple}
    \widehat\Delta_t^{\rm UCB}(A,B)
    \ge
    \Delta_f(A,B)
    -2e_t(A\cup B)
    -b_t\overline\sigma_t(B).
\end{equation}
Consequently, the plug-in mean score ranks every point in \(A\) above every
point in \(B\) whenever
\[
    \Delta_f(A,B)>2e_t(A\cup B),
\]
and the UCB score does so whenever
\[
    \Delta_f(A,B)
    >
    2e_t(A\cup B)
    +b_t\{\overline\sigma_t(B)-\underline\sigma_t(A)\}.
\]
\end{lemma}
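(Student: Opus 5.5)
The argument is a direct pointwise comparison, followed by taking the infimum over $A$ and the supremum over $B$. Fix $u\in A$ and $v\in B$. Both points lie in $A\cup B$, so
\[
\mu_t(u)\ge f(u)-e_t(A\cup B),
\qquad
\mu_t(v)\le f(v)+e_t(A\cup B).
\]
Subtracting gives
\[
\mu_t(u)-\mu_t(v)\ge f(u)-f(v)-2e_t(A\cup B)\ge \Delta_f(A,B)-2e_t(A\cup B).
\]
The right-hand side does not depend on the pair $(u,v)$. Taking the infimum over $u\in A$ and the supremum over $v\in B$ gives $\inf_A\mu_t-\sup_B\mu_t\ge\Delta_f(A,B)-2e_t(A\cup B)$, which is Eq.~\eqref{eq:mean_setwise_margin_bound}.

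For UCB, add the exploration terms to the same pointwise inequality:
\[
\widehat a_t^{\rm UCB}(u)-\widehat a_t^{\rm UCB}(v)
\ge \Delta_f(A,B)-2e_t(A\cup B)+b_t\{\sigma_t(u)-\sigma_t(v)\}.
\]
Assume $b_t=\beta_{\rm UCB}\ge0$. Then
\[
b_t\sigma_t(u)\ge b_t\underline\sigma_t(A),
\qquad
b_t\sigma_t(v)\le b_t\overline\sigma_t(B),
\]
so the pair-dependent part is bounded below by the constant $b_t\{\underline\sigma_t(A)-\overline\sigma_t(B)\}$. Taking the same infimum and supremum yields Eq.~\eqref{eq:ucb_setwise_margin_bound}. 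Since $\sigma_t\ge0$ (it is a predictive standard deviation), $\underline\sigma_t(A)\ge0$. Dropping this term gives the simplified bound Eq.~\eqref{eq:ucb_setwise_margin_simple}.

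The ranking consequences follow because a strictly positive empirical margin $\widehat\Delta_t^s(A,B)>0$ means every $u\in A$ scores strictly above every $v\in B$. The stated conditions are exactly the rearrangements that make the right-hand sides of Eqs.~\eqref{eq:mean_setwise_margin_bound} and~\eqref{eq:ucb_setwise_margin_bound} strictly positive.

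The only step needing care is the handling of infima and suprema. If $\X$ is infinite they need not be attained, but $\X$ is finite by Assumption~\ref{ass:finite_alignment_app}, so every $\inf$ and $\sup$ here is a min or max. Even without finiteness, the argument goes through because each bound holds uniformly in $(u,v)$. The other implicit requirements are the sign conditions $b_t\ge0$ and $\sigma_t\ge0$, which I will state explicitly in the proof. If $A$ or $B$ is empty, the statements are vacuous under the usual conventions for infima and suprema over the empty set.
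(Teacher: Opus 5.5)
Your proof is correct and follows essentially the same route as the paper: a pointwise decomposition of $\mu_t(u)-\mu_t(v)$ (plus the $b_t\{\sigma_t(u)-\sigma_t(v)\}$ term for UCB), then taking the infimum over $A$ and the supremum over $B$, and finally dropping $b_t\underline\sigma_t(A)\ge0$ to get the simplified bound. Stating the sign condition $b_t\ge0$ explicitly is a useful clarification that the paper leaves implicit.
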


\begin{proof}
For the mean score, for any \(u\in A\) and \(v\in B\),
\[
    \mu_t(u)-\mu_t(v)
    =
    f(u)-f(v)
    +\{\mu_t(u)-f(u)\}
    -\{\mu_t(v)-f(v)\}.
\]
Taking the infimum over \(u\in A\) and the supremum over \(v\in B\) gives
Eq.~\eqref{eq:mean_setwise_margin_bound}.

For UCB,
\[
\begin{aligned}
    \widehat a_t^{\rm UCB}(u)-\widehat a_t^{\rm UCB}(v)
    &=
    \mu_t(u)-\mu_t(v)
    +b_t\{\sigma_t(u)-\sigma_t(v)\}  \\
    &\ge
    f(u)-f(v)
    -2e_t(A\cup B)
    +b_t\{\underline\sigma_t(A)-\overline\sigma_t(B)\}.
\end{aligned}
\]
Taking the setwise infimum proves Eq.~\eqref{eq:ucb_setwise_margin_bound}.
Since \(\underline\sigma_t(A)\ge0\), Eq.~\eqref{eq:ucb_setwise_margin_simple}
follows.
\end{proof}

\begin{remark}[A confidence-band version]
\label{rem:ucb_confidence_band_version}
If one has a valid confidence event
\[
    |\mu_t(x)-f(x)|\le c_t\sigma_t(x)
    \qquad
    \text{for all }x\in A\cup B,
\]
then the UCB margin also obeys
\[
    \widehat\Delta_t^{\rm UCB}(A,B)
    \ge
    \Delta_f(A,B)
    +(b_t-c_t)\underline\sigma_t(A)
    -(b_t+c_t)\overline\sigma_t(B).
\]
Thus, if \(b_t\ge c_t\), a sufficient condition for setwise activation is
\[
    \Delta_f(A,B)>(b_t+c_t)\overline\sigma_t(B).
\]
This form highlights the usual UCB intuition: uncertainty on promising points
does not hurt the ranking, while large uncertainty on clearly inferior points
can still delay activation.
\end{remark}

\subsubsection{Exact-optimizer certificate}

Let
\[
    \Delta_f^\star
    :=
    f^\star-\max_{x\notin\Xstar}f(x)
\]
when \(\Xstar\ne\X\).  This is positive because \(\X\) is finite.

\begin{corollary}[Exact setwise activation]
\label{cor:mean_ucb_exact_activation}
Assume \(\Xstar\ne\X\).  Let
\[
    e_t^\star
    :=
    e_t(\X)
    =
    \|\mu_t-f\|_\infty.
\]
The plug-in mean score satisfies
\[
    \min_{x^\star\in\Xstar}\mu_t(x^\star)
    >
    \max_{x\notin\Xstar}\mu_t(x)
\]
whenever
\[
    \Delta_f^\star>2e_t^\star.
\]
The UCB score satisfies
\[
    \min_{x^\star\in\Xstar}\widehat a_t^{\rm UCB}(x^\star)
    >
    \max_{x\notin\Xstar}\widehat a_t^{\rm UCB}(x)
\]
whenever
\[
    \Delta_f^\star
    >
    2e_t^\star
    +
    b_t\Bigl\{
        \sup_{x\notin\Xstar}\sigma_t(x)
        -
        \inf_{x^\star\in\Xstar}\sigma_t(x^\star)
    \Bigr\}.
\]
A simpler sufficient condition is
\[
    \Delta_f^\star
    >
    2e_t^\star
    +
    b_t\sup_{x\notin\Xstar}\sigma_t(x).
\]
Therefore, under these conditions, if the terminal candidate pool contains an
optimizer, top-\(K\) selection with \(K\ge1\) evaluates an optimizer.
\end{corollary}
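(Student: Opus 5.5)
The plan is to obtain Corollary~\ref{cor:mean_ucb_exact_activation} as the special case $A=\Xstar$, $B=\X\setminus\Xstar$ of Lemma~\ref{lem:mean_ucb_setwise_activation}, followed by a short argument about top-$K$ selection. First I will identify the objective margin. Since $\X$ is finite and $\Xstar\ne\X$, the set $\X\setminus\Xstar$ is finite and nonempty, so the maximum $\max_{x\notin\Xstar}f(x)$ is attained and is strictly below $f^\star$. Because $f$ equals $f^\star$ on all of $\Xstar$, this gives
\[
    \Delta_f(\Xstar,\X\setminus\Xstar)=\inf_{u\in\Xstar,\,v\notin\Xstar}\{f(u)-f(v)\}=f^\star-\max_{x\notin\Xstar}f(x)=\Delta_f^\star>0 .
\]
Next, $e_t(A\cup B)=e_t(\X)=\|\mu_t-f\|_\infty=e_t^\star$.

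For the plug-in mean score, Eq.~\eqref{eq:mean_setwise_margin_bound} gives $\widehat\Delta_t^{\rm mean}(\Xstar,\X\setminus\Xstar)\ge\Delta_f^\star-2e_t^\star$, which is positive under the stated condition. Since $\Xstar$ and its complement are finite, the infimum and supremum in Eq.~\eqref{eq:setwise_score_margin_mean_ucb} are a minimum and a maximum. This yields the strict inequality $\min_{\Xstar}\mu_t>\max_{x\notin\Xstar}\mu_t(x)$.

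For the UCB score, Eq.~\eqref{eq:ucb_setwise_margin_bound} with $\underline\sigma_t(\Xstar)=\inf_{x^\star\in\Xstar}\sigma_t(x^\star)$ and $\overline\sigma_t(\X\setminus\Xstar)=\sup_{x\notin\Xstar}\sigma_t(x)$ gives a positive margin exactly under the displayed condition. The simpler sufficient condition then follows from $\inf_{x^\star\in\Xstar}\sigma_t(x^\star)\ge0$, as in Eq.~\eqref{eq:ucb_setwise_margin_simple}. I will use that $b_t\ge0$ and $\sigma_t\ge0$ here, both implicit in the UCB construction.

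Finally, for the top-$K$ claim, suppose the pool $C_t$ contains some $x^\star\in\Xstar$. Under either condition, every optimizer in $C_t$ has a score strictly larger than every non-optimizer in $C_t$. Hence the element ranked first by $\widehat a_t$ in $C_t$ lies in $\Xstar$, and it is selected whenever $K\ge1$. This holds for any tie-breaking rule, because ties can only occur within $\Xstar$ or within its complement. Duplicated samples cause no difficulty, since ranking is by score.

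There is no real obstacle in this argument. The only care needed is to check that the infima and suprema over finite sets are attained, so that the strict margin becomes a strict ordering, and to note the nonnegativity of $b_t$ and $\sigma_t$.
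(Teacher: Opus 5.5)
Your proposal is correct and follows the paper's proof: specialize Lemma~\ref{lem:mean_ucb_setwise_activation} to $A=\Xstar$, $B=\X\setminus\Xstar$, then note that a strictly positive setwise margin makes any optimizer in the pool rank above every non-optimizer, so it is selected for any $K\ge1$. Your extra checks are details the paper leaves implicit: that $\Delta_f(\Xstar,\X\setminus\Xstar)=\Delta_f^\star$, that the min and max are attained by finiteness, and that $b_t\ge0$ and $\sigma_t\ge0$ are needed for the simpler UCB condition.
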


\begin{proof}
Apply Lemma~\ref{lem:mean_ucb_setwise_activation} with
\(A=\Xstar\) and \(B=\X\setminus\Xstar\).  Positive setwise margin implies that
every optimizer has larger learned score than every non-optimizer.  Hence, if
the terminal candidate pool contains an optimizer, the optimizer is ranked above
all non-optimizers and must be included in the top-\(K\) set for any \(K\ge1\).
\end{proof}

\subsubsection{Near-optimal set certificate}

For \(0\le\gamma<\eta\), recall
\[
    \U_\gamma:=\{x\in\X:f^\star-f(x)\le\gamma\}.
\]
Then every \(u\in\U_\gamma\) and every \(v\notin\U_\eta\) satisfy
\[
    f(u)-f(v)\ge \eta-\gamma.
\]
Thus
\begin{equation}
\label{eq:near_opt_objective_margin}
    \Delta_f(\U_\gamma,\X\setminus\U_\eta)
    \ge
    \eta-\gamma.
\end{equation}

\begin{corollary}[Near-optimal setwise activation]
\label{cor:mean_ucb_near_opt_activation}
Fix \(0\le\gamma<\eta\).  The plug-in mean score ranks every point in
\(\U_\gamma\) above every point outside \(\U_\eta\) whenever
\[
    \eta-\gamma
    >
    2e_t(\U_\gamma\cup\U_\eta^c).
\]
The UCB score ranks every point in \(\U_\gamma\) above every point outside
\(\U_\eta\) whenever
\[
    \eta-\gamma
    >
    2e_t(\U_\gamma\cup\U_\eta^c)
    +
    b_t
    \Bigl\{
        \overline\sigma_t(\U_\eta^c)
        -
        \underline\sigma_t(\U_\gamma)
    \Bigr\}.
\]
A simpler sufficient condition is
\[
    \eta-\gamma
    >
    2e_t(\U_\gamma\cup\U_\eta^c)
    +
    b_t\overline\sigma_t(\U_\eta^c).
\]
Consequently, if \(\D_T\cap\U_\gamma\ne\varnothing\) and the corresponding
setwise margin is positive, then the final recommendation
\[
    \widehat x_T\in\argmax_{x\in\D_T}\widehat a_T(x)
\]
belongs to \(\U_\eta\).
\end{corollary}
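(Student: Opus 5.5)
The argument specializes Lemma~\ref{lem:mean_ucb_setwise_activation} to the pair $A=\U_\gamma$, $B=\X\setminus\U_\eta=\U_\eta^c$ and then repeats the top-ranking argument of Corollary~\ref{cor:noiseless_ei_local_regret}. The two sets are disjoint because $\gamma<\eta$, and the first step is the objective-margin bound \eqref{eq:near_opt_objective_margin}, $\Delta_f(\U_\gamma,\U_\eta^c)\ge\eta-\gamma$. This holds since every $u\in\U_\gamma$ has $f(u)\ge f^\star-\gamma$ and every $v\notin\U_\eta$ has $f(v)<f^\star-\eta$.

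For the plug-in mean, insert this into \eqref{eq:mean_setwise_margin_bound}:
\[
\widehat\Delta_t^{\rm mean}(\U_\gamma,\U_\eta^c)\ge \eta-\gamma-2e_t(\U_\gamma\cup\U_\eta^c),
\]
which is strictly positive under the stated condition. For UCB, use \eqref{eq:ucb_setwise_margin_bound} in the same way. The $\sigma$ term enters as $+b_t\{\underline\sigma_t(\U_\gamma)-\overline\sigma_t(\U_\eta^c)\}$, so positivity follows from the first UCB condition. The simpler condition implies the first one because $b_t\ge0$ and $\underline\sigma_t(\U_\gamma)\ge0$; equivalently, use \eqref{eq:ucb_setwise_margin_simple}. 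In either case a positive setwise margin means $\inf_{u\in\U_\gamma}\widehat a_t(u)>\sup_{v\notin\U_\eta}\widehat a_t(v)$, so every point of $\U_\gamma$ is ranked strictly above every point outside $\U_\eta$.

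For the recommendation claim, take $t=T$ and pick $x_\gamma\in\D_T\cap\U_\gamma$. Any $y\in\D_T\setminus\U_\eta$ satisfies $\widehat a_T(y)<\widehat a_T(x_\gamma)$, so $y$ cannot maximize $\widehat a_T$ over $\D_T$. Hence every maximizer $\widehat x_T$ lies in $\U_\eta$.

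There is no real obstacle; the proof only combines earlier displays. The one point needing care is that the suprema and infima in Lemma~\ref{lem:mean_ucb_setwise_activation} range over the whole sets $\U_\gamma$ and $\U_\eta^c$, not only over $\D_T$. This is harmless, since restricting to $\D_T$ only enlarges the margin. The conventions $\sup\varnothing=-\infty$ and $\inf\varnothing=+\infty$ cover the degenerate case $\U_\eta^c=\varnothing$, where the claim is immediate.
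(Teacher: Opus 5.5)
Your proposal is correct and matches the paper's proof. Like the paper, you specialize Lemma~\ref{lem:mean_ucb_setwise_activation} to $A=\U_\gamma$, $B=\X\setminus\U_\eta$, insert the margin bound \eqref{eq:near_opt_objective_margin}, and conclude that an evaluated $\gamma$-optimal point outscores every evaluated point outside $\U_\eta$; your added care about $b_t\ge0$ for the simpler UCB condition and the empty-set conventions fills in details the paper leaves implicit.
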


\begin{proof}
Apply Lemma~\ref{lem:mean_ucb_setwise_activation} with
\(A=\U_\gamma\) and \(B=\X\setminus\U_\eta\), and use
Eq.~\eqref{eq:near_opt_objective_margin}.  If \(\D_T\cap\U_\gamma\ne\varnothing\),
then some evaluated \(\gamma\)-optimal point has larger learned score than every
evaluated point outside \(\U_\eta\).  Therefore a learned-score maximizer over
\(\D_T\) cannot lie outside \(\U_\eta\).
\end{proof}

\begin{remark}[Threshold progress versus acceptable recommendation]
\label{rem:threshold_progress_vs_recommendation}
Corollary~\ref{cor:mean_ucb_near_opt_activation} is sufficient for an
\(\eta\)-acceptable final recommendation.  If one wants to guarantee that
top-\(K\) selection evaluates a point in the stricter progress set
\(\U_\gamma\) whenever the terminal pool contains one, then the rejected set
should be \(B=\X\setminus\U_\gamma\), not merely \(B=\X\setminus\U_\eta\).
This requires a positive finite-domain boundary gap
\[
    \Delta_f(\U_\gamma,\X\setminus\U_\gamma)>0.
\]
If this boundary gap is zero or too small, one can use a slacked progress
certificate, for example ranking \(\U_{\gamma'}\) above
\(\X\setminus\U_\gamma\) with \(0\le\gamma'<\gamma\).
\end{remark}

\subsubsection{Gibbs target mass from setwise activation}

The preceding certificates also imply the target-mass lower bounds needed by
the search term.  The following proposition is the setwise analogue of
Proposition~\ref{prop:gibbs_target_mass_app}.

\begin{proposition}[Setwise Gibbs mass certificate]
\label{prop:setwise_gibbs_mass_certificate}
Let \(A\subseteq\X\) satisfy \(p_0(A)>0\).  Consider the Gibbs target
\[
    \widehat q_t(x)
    \propto
    p_0(x)\exp\{\beta_t\widehat a_t(x)\}.
\]
If
\begin{equation}
\label{eq:setwise_gibbs_margin_condition}
    \inf_{x\in A}\widehat a_t(x)
    -
    \sup_{y\notin A}\widehat a_t(y)
    \ge
    \delta_{t,A}
    >
    0,
\end{equation}
then
\begin{equation}
\label{eq:setwise_gibbs_mass_bound}
    \widehat q_t(A)
    \ge
    \left[
    1+
    \frac{1-p_0(A)}{p_0(A)}
    \exp\{-\beta_t\delta_{t,A}\}
    \right]^{-1}.
\end{equation}
\end{proposition}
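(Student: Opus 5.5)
The plan is a direct computation with the Gibbs normalizer, split over $A$ and its complement. Write $w(x):=\exp\{\beta_t\widehat a_t(x)\}$ and set
\[
    \underline a_A:=\inf_{x\in A}\widehat a_t(x),
    \qquad
    \overline a_{A^c}:=\sup_{y\notin A}\widehat a_t(y).
\]
Since $\X$ is finite, both are attained. The margin condition \eqref{eq:setwise_gibbs_margin_condition} then reads $\overline a_{A^c}\le \underline a_A-\delta_{t,A}$.

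By the definition \eqref{eq:gibbs_target_formal_app} of $\widehat q_t$,
\[
    \widehat q_t(A)
    =
    \frac{\sum_{x\in A}p_0(x)w(x)}{\sum_{x\in A}p_0(x)w(x)+\sum_{y\notin A}p_0(y)w(y)}
    =
    \Bigl[1+\tfrac{\sum_{y\notin A}p_0(y)w(y)}{\sum_{x\in A}p_0(x)w(x)}\Bigr]^{-1}.
\]
The map $r\mapsto(1+r)^{-1}$ is decreasing, so it suffices to upper-bound the ratio $r$. I will bound the numerator and the denominator separately:
\[
    \sum_{y\notin A}p_0(y)w(y)\le (1-p_0(A))\,e^{\beta_t\overline a_{A^c}},
    \qquad
    \sum_{x\in A}p_0(x)w(x)\ge p_0(A)\,e^{\beta_t\underline a_A}.
\]
Both use monotonicity of $\exp$, which requires $\beta_t\ge0$ (the Gibbs inverse temperature is positive). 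The denominator bound is strictly positive because $p_0(A)>0$. Dividing gives
\[
    r\le \frac{1-p_0(A)}{p_0(A)}\exp\{-\beta_t(\underline a_A-\overline a_{A^c})\}
    \le \frac{1-p_0(A)}{p_0(A)}\exp\{-\beta_t\delta_{t,A}\},
\]
and substituting into $(1+r)^{-1}$ yields \eqref{eq:setwise_gibbs_mass_bound}.

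There is no real obstacle here. The only points to check are edge cases. If $A=\X$, the complement sum is empty, so $r=0$ and the bound reads $\widehat q_t(A)=1$; it holds trivially. I will also note that the same argument with $A=\Xstar$ and $\delta_{t,A}=\Delta_f^\star/(2L_a)$ recovers the form of Proposition~\ref{prop:gibbs_target_mass_app} cited in the main text.
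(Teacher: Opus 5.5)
Your proposal is correct and follows essentially the same route as the paper's proof: bound the off-$A$ Gibbs weight by $(1-p_0(A))e^{\beta_t(\underline a_A-\delta_{t,A})}$ and the on-$A$ weight by $p_0(A)e^{\beta_t\underline a_A}$, then divide to control the odds ratio. Your added remarks on needing $\beta_t\ge 0$ and on the degenerate case $A=\X$ are accurate refinements that the paper leaves implicit.
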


\begin{proof}
Let
\[
    m_A:=\inf_{x\in A}\widehat a_t(x),
    \qquad
    M_{A^c}:=\sup_{y\notin A}\widehat a_t(y).
\]
By assumption, \(M_{A^c}\le m_A-\delta_{t,A}\).  Therefore
\[
    \sum_{y\notin A}p_0(y)e^{\beta_t\widehat a_t(y)}
    \le
    (1-p_0(A))e^{\beta_t(m_A-\delta_{t,A})},
\]
whereas
\[
    \sum_{x\in A}p_0(x)e^{\beta_t\widehat a_t(x)}
    \ge
    p_0(A)e^{\beta_t m_A}.
\]
Dividing the outside mass by the inside mass gives
\[
    \frac{\widehat q_t(A^c)}{\widehat q_t(A)}
    \le
    \frac{1-p_0(A)}{p_0(A)}e^{-\beta_t\delta_{t,A}},
\]
which is equivalent to Eq.~\eqref{eq:setwise_gibbs_mass_bound}.
\end{proof}

Combining Proposition~\ref{prop:setwise_gibbs_mass_certificate} with
Corollary~\ref{cor:mean_ucb_exact_activation} gives explicit mass lower bounds
for the exact optimizer set.  For the plug-in mean score, on the event
\[
    \delta_{t,\Xstar}^{\rm mean}
    :=
    \Delta_f^\star-2\|\mu_t-f\|_\infty
    >0,
\]
we have
\[
    \widehat q_t^{\rm mean}(\Xstar)
    \ge
    \left[
    1+
    \frac{1-p_0^\star}{p_0^\star}
    \exp\{-\beta_t\delta_{t,\Xstar}^{\rm mean}\}
    \right]^{-1}.
\]
For UCB, on the event
\[
    \delta_{t,\Xstar}^{\rm UCB}
    :=
    \Delta_f^\star
    -2\|\mu_t-f\|_\infty
    +b_t
    \left\{
        \inf_{x^\star\in\Xstar}\sigma_t(x^\star)
        -
        \sup_{x\notin\Xstar}\sigma_t(x)
    \right\}
    >0,
\]
we have
\[
    \widehat q_t^{\rm UCB}(\Xstar)
    \ge
    \left[
    1+
    \frac{1-p_0^\star}{p_0^\star}
    \exp\{-\beta_t\delta_{t,\Xstar}^{\rm UCB}\}
    \right]^{-1}.
\]

\subsubsection{Relation to the global score-learning term}

The main regret decomposition can also be applied directly with the calibrated
ideal score \(a_t=f\).  For the plug-in mean score,
\begin{equation}
\label{eq:mean_global_score_error}
    \|\widehat a_t^{\rm mean}-a_t\|_\infty
    =
    \|\mu_t-f\|_\infty.
\end{equation}
For the UCB score,
\begin{equation}
\label{eq:ucb_global_score_error}
    \|\widehat a_t^{\rm UCB}-a_t\|_\infty
    \le
    \|\mu_t-f\|_\infty
    +
    b_t\|\sigma_t\|_\infty.
\end{equation}
Therefore, if both the prediction error and the UCB uncertainty envelope vanish,
the UCB score becomes a vanishing perturbation of the calibrated ideal score
\(f\).  In this case the same regret decomposition applies with the score
learning term controlled by the right-hand side of
Eq.~\eqref{eq:ucb_global_score_error}.

The setwise bounds above are often less conservative than the global
\(L_\infty\) bound: for exact search it is enough that uncertainty is controlled
on the non-optimal region, and for near-optimal recommendation it is enough that
uncertainty is controlled outside the current acceptable set.  Thus UCB can be
used in the framework without changing the main calibration assumption, provided
one verifies either a global perturbation bound such as
Eq.~\eqref{eq:ucb_global_score_error} or the relevant setwise activation and
mass certificates.

\section{Terminal Mass, Target Mass, and Miss Probability}\label{app:terminal_mass}
\begin{lemma}[Target-relative RMS discrepancy implies a mass event]
\label{prop:rms_to_mass_event_main}
Under Assumption~\ref{ass:weak_terminal_mass_app}, let $\good_{t,A}$ be an event on which the frozen Gibbs target satisfies
\[
    \widehat q_t(A)\ge \ell_{t,A}
\]
for a deterministic $\ell_{t,A}\in[0,1]$. Then, for any $\omega_{t,A}>0$,
\begin{equation}
\label{eq:rms_to_mass_event_app}
    \Pbb\left(
    \good_{t,A}
    \cap
    \left\{
    \nu_t(A)<[\ell_{t,A}-\omega_{t,A}]_+
    \right\}
    \,\middle|\,\F_{t-1}
    \right)
    \le
    \frac{d_{t,A}^2}{\omega_{t,A}^2}.
\end{equation}
\end{lemma}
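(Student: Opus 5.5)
The plan is a conditional Chebyshev/Markov argument applied to the positive part of the set-mass deficit $\{\widehat q_t(A)-\nu_t(A)\}_+$, whose conditional second moment is controlled by Assumption~\ref{ass:weak_terminal_mass_app}.

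The first step is an event inclusion. Fix $\omega_{t,A}>0$ and work on $\good_{t,A}\cap\{\nu_t(A)<[\ell_{t,A}-\omega_{t,A}]_+\}$. On this event the right-hand side $[\ell_{t,A}-\omega_{t,A}]_+$ must be strictly positive, since $\nu_t(A)\ge 0$ cannot be strictly below $0$. Hence it equals $\ell_{t,A}-\omega_{t,A}$. On $\good_{t,A}$ we also have $\widehat q_t(A)\ge\ell_{t,A}$. Combining the two,
\[
\widehat q_t(A)-\nu_t(A) > \ell_{t,A}-(\ell_{t,A}-\omega_{t,A})=\omega_{t,A}.
\]
So the event is contained in $\{\{\widehat q_t(A)-\nu_t(A)\}_+>\omega_{t,A}\}$, and $\good_{t,A}$ can be dropped after this inclusion.

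The second step is conditional Markov applied to the squared deficit $D:=\{\widehat q_t(A)-\nu_t(A)\}_+^2\ge 0$:
\[
\Pbb(D>\omega_{t,A}^2\mid\F_{t-1})\le \E[D\mid\F_{t-1}]/\omega_{t,A}^2 \le d_{t,A}^2/\omega_{t,A}^2 ,
\]
where the last inequality is Eq.~\eqref{eq:weak_rms_mass_app}. Monotonicity of conditional probability under the inclusion from the first step then gives Eq.~\eqref{eq:rms_to_mass_event_app}.

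The main care point is measurability, not the inequalities. Both $\widehat q_t$ and $\nu_t$ are random, since they depend on round-$t$ score learning and sampler randomness. The conditional Markov inequality must therefore be applied to the random variable $D$ given $\F_{t-1}$, with $\good_{t,A}$ merely used for the pathwise inclusion. The assumption is stated exactly at the $\F_{t-1}$ level, so no extra conditioning is needed. It also matters that $\ell_{t,A}$ and $\omega_{t,A}$ are deterministic, or at least $\F_{t-1}$-measurable, so that Markov's threshold is fixed under the conditioning.
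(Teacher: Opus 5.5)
Your proposal is correct and matches the paper's proof: on $\good_{t,A}$ you use the pathwise inclusion into $\{\{\widehat q_t(A)-\nu_t(A)\}_+>\omega_{t,A}\}$, then apply conditional Markov to the squared deficit together with Eq.~\eqref{eq:weak_rms_mass_app}. You also spell out the degenerate case $[\ell_{t,A}-\omega_{t,A}]_+=0$, where the event is empty; the paper leaves this implicit.
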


\begin{proof}
On $\good_{t,A}$, $\widehat q_t(A)\ge\ell_{t,A}$. If also
$\nu_t(A)<[\ell_{t,A}-\omega_{t,A}]_+$, then
\[
    \{\widehat q_t(A)-\nu_t(A)\}_+>\omega_{t,A}.
\]
Conditional Markov's inequality and Assumption~\ref{ass:weak_terminal_mass_app} prove the claim.
\end{proof}

\subsection{Target mass under a Gibbs tilt}
\label{app:target_mass_bounds}

\begin{proposition}[Gibbs target mass and top-$K$ certificate]
\label{prop:gibbs_target_mass_app}
Assume Assumptions~\ref{ass:finite_alignment_app} and~\ref{ass:calibrated_score_app}, and let $\beta_t\ge0$. If $\Xstar=\X$, then
\[
    \widehat q_t(\Xstar)=1
\]
and top-$K_t$ selection trivially evaluates an optimizer whenever $C_t\neq\varnothing$.

Otherwise, suppose $\Xstar\ne\X$ and define
\begin{equation}
\label{eq:objective_gap_app}
    \Delta_f^\star
    :=
    f^\star-\max_{x\notin\Xstar}f(x)>0 .
\end{equation}
Let the frozen Gibbs target be
\[
    \widehat q_t(x)
    \propto
    p_0(x)\exp\{\beta_t\widehat a_t(x)\}.
\]
On the good score event
\begin{equation}
\label{eq:gibbs_good_event_app}
    \good_t^\star
    :=
    \left\{
    \|\widehat a_t-a_t\|_\infty
    \le
    \frac{\Delta_f^\star}{4L_a}
    \right\},
\end{equation}
the following hold.

First, every optimizer has larger estimated score than every non-optimizer:
\begin{equation}
\label{eq:gibbs_score_gap_app}
    \min_{x^\star\in\Xstar}\widehat a_t(x^\star)
    -
    \max_{x\notin\Xstar}\widehat a_t(x)
    \ge
    \frac{\Delta_f^\star}{2L_a}.
\end{equation}
Consequently, if $C_t\cap\Xstar\neq\varnothing$, then top-$K_t$ selection with $K_t\ge1$ evaluates an optimizer.

Second, the Gibbs target mass satisfies
\begin{equation}
\label{eq:qstar_lower_app}
    \widehat q_t(\Xstar)
    \ge
    \underline q_\star^{\rm Gibbs}(\beta_t)
    :=
    \left[
    1+
    \frac{1-p_0^\star}{p_0^\star}
    \exp\!\left\{
    -\frac{\beta_t\Delta_f^\star}{2L_a}
    \right\}
    \right]^{-1}.
\end{equation}
The lower bound is nondecreasing in $\beta_t$. 
\end{proposition}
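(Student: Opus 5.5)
The trivial case $\Xstar=\X$ is handled first. Here $\widehat q_t$ is supported on $\X=\Xstar$, so $\widehat q_t(\Xstar)=1$. Every candidate in a nonempty pool $C_t$ is then an optimizer, and top-$K_t$ selection evaluates one.

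For $\Xstar\ne\X$, the first step is the score gap \eqref{eq:gibbs_score_gap_app}. Fix $x^\star\in\Xstar$ and $y\notin\Xstar$. Then $f(x^\star)-f(y)\ge\Delta_f^\star>0$, and Assumption~\ref{ass:calibrated_score_app} gives
\[
a_t(x^\star)-a_t(y)\ge L_a^{-1}\{f(x^\star)-f(y)\}\ge \Delta_f^\star/L_a .
\]
On $\good_t^\star$ each of $\widehat a_t(x^\star)$ and $\widehat a_t(y)$ is within $\Delta_f^\star/(4L_a)$ of its ideal value. Hence
\[
\widehat a_t(x^\star)-\widehat a_t(y)\ge \Delta_f^\star/L_a-2\cdot\Delta_f^\star/(4L_a)=\Delta_f^\star/(2L_a).
\]
Since $\X$ is finite, taking the minimum over $x^\star$ and the maximum over $y$ yields \eqref{eq:gibbs_score_gap_app}.

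The top-$K$ claim follows directly from this strict separation. If $C_t$ contains an optimizer, that optimizer scores strictly above every non-optimizer in $C_t$. So the top-ranked element of $C_t$ under $\widehat a_t$ lies in $\Xstar$, and any top-$K_t$ set with $K_t\ge1$ contains it. The argument is the same as in Corollary~\ref{cor:mean_ucb_exact_activation}, with ties among optimizers being harmless.

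For the mass bound, I would invoke Proposition~\ref{prop:setwise_gibbs_mass_certificate} with $A=\Xstar$ and $\delta_{t,A}=\Delta_f^\star/(2L_a)$. Its margin condition \eqref{eq:setwise_gibbs_margin_condition} is exactly \eqref{eq:gibbs_score_gap_app}, and $p_0(A)=p_0^\star>0$ by Assumption~\ref{ass:finite_alignment_app}. Its conclusion is then precisely \eqref{eq:qstar_lower_app}.

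Monotonicity in $\beta_t$ is immediate. Since $(1-p_0^\star)/p_0^\star\ge0$ and $\Delta_f^\star/(2L_a)>0$, the term $\exp\{-\beta_t\Delta_f^\star/(2L_a)\}$ is nonincreasing in $\beta_t$, so its reciprocal expression is nondecreasing.

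There is no real obstacle here. The only point needing care is that the calibration inequality must be applied in the direction $f(x^\star)\ge f(y)$, which is the ordering Assumption~\ref{ass:calibrated_score_app} requires. Beyond that, the constant $1/4$ in $\good_t^\star$ has to leave exactly half the ideal gap, which the computation above confirms.
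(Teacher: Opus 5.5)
Your proposal is correct and follows the paper's proof essentially step for step: calibration plus $\good_t^\star$ gives the half-gap $\Delta_f^\star/(2L_a)$, strict separation gives the top-$K_t$ claim, and comparing inside and outside Gibbs weights gives the mass bound. The only cosmetic difference is that you obtain \eqref{eq:qstar_lower_app} by citing Proposition~\ref{prop:setwise_gibbs_mass_certificate} with $A=\Xstar$ and $\delta_{t,A}=\Delta_f^\star/(2L_a)$, whereas the paper redoes that same two-sum computation inline.
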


\begin{proof}
The case $\Xstar=\X$ is immediate. Assume $\Xstar\ne\X$.
For any $x^\star\in\Xstar$ and $x\notin\Xstar$, Assumption~\ref{ass:calibrated_score_app} gives
\[
    a_t(x^\star)-a_t(x)
    \ge
    \frac{f(x^\star)-f(x)}{L_a}
    \ge
    \frac{\Delta_f^\star}{L_a}.
\]
On $\good_t^\star$, the estimated score gap is therefore at least
$\Delta_f^\star/(2L_a)$, proving Eq.~\eqref{eq:gibbs_score_gap_app}. The top-$K_t$ claim follows because any optimizer in the terminal pool has larger estimated score than every non-optimizer.

For the mass bound, write
\[
    m_\star:=\min_{x^\star\in\Xstar}\widehat a_t(x^\star),
    \qquad
    M_0:=\max_{x\notin\Xstar}\widehat a_t(x).
\]
On $\good_t^\star$, $M_0\le m_\star-\Delta_f^\star/(2L_a)$. Hence
\[
    \sum_{x\notin\Xstar}p_0(x)e^{\beta_t\widehat a_t(x)}
    \le
    (1-p_0^\star)e^{\beta_t(m_\star-\Delta_f^\star/(2L_a))},
\]
while
\[
    \sum_{x^\star\in\Xstar}p_0(x^\star)e^{\beta_t\widehat a_t(x^\star)}
    \ge
    p_0^\star e^{\beta_t m_\star}.
\]
Dividing the non-optimal Gibbs weight by the optimal Gibbs weight gives Eq.~\eqref{eq:qstar_lower_app}. Monotonicity in $\beta_t$ is immediate.
\end{proof}

\subsection{Formal end-to-end theorem}\label{app:main_theory_proofs}
\paragraph{Score activation time.}
The search bound uses a ranking certificate for the learned score.  In the
exact-optimizer case, let
\[
    \Delta_f^\star
    :=
    f^\star-\max_{x\notin\Xstar} f(x) >0
\]
when \(\Xstar\ne\X\).  Define the good score event
\[
    \good_t^\star
    :=
    \left\{
    \|\widehat a_t-a_t\|_\infty
    \le
    \frac{\Delta_f^\star}{4L_a}
    \right\}.
\]
On this event, every optimizer has larger learned score than every
non-optimizer, and the Gibbs target assigns a certified mass to \(\Xstar\);
see Proposition~\ref{prop:gibbs_target_mass_app}.  The \emph{score activation
time} is the first round after which the deterministic score-learning envelope
is below this margin:
\begin{align}
\label{eq:activation_time}
    T_{\rm act}
    :=
    \inf\left\{
    t\ge1:
    \bar\varepsilon_s
    \le
    \frac{\Delta_f^\star}{4L_a}
    \ \text{for all } s\ge t
    \right\},
    \qquad
    \bar\varepsilon_t
    =
    C t^{-\vartheta_a}(\log t)^{c_a}.
\end{align}
Thus, for all \(t\ge T_{\rm act}\), Theorem~\ref{thm:generic_score_learning_rate}
gives
\[
    u_t
    :=
    \mathbb P\{(\good_t^\star)^c\}
    \le
    C t^{-1-\kappa_a}.
\]
Before \(T_{\rm act}\), the learned score may still be useful empirically, but
the proof does not rely on its ranking.  Those early rounds can be treated as
burn-in; dropping them only weakens the bound.

\begin{theorem}[Formal end-to-end miss bound]
\label{thm:formal_end_to_end_miss}
Assume Assumptions~\ref{ass:finite_alignment_app} and~\ref{ass:weak_terminal_mass_app} with $A=\Xstar$. At round $t$, suppose the algorithm evaluates $J_{t,a}$ prior-refresh candidates drawn conditionally i.i.d. from $p_0$, independently of the terminal candidate pool conditional on the past and frozen sampler information. It also evaluates the top-$K_t$ terminal candidates from $C_t$, with $K_t\ge1$.

For each round $t$, let $\good_t$ be an event such that, on $\good_t$,
\[
    \widehat q_t(\Xstar)\ge \ell_{t,\Xstar},
\]
and top-$K_t$ selection evaluates an optimizer whenever
$C_t\cap\Xstar\neq\varnothing$.

Let
\[
    u_t:=\Pbb(\good_t^c),
    \qquad
    m_t^\star:=[\ell_{t,\Xstar}-\omega_{t,\Xstar}]_+,
\]
and define
\begin{equation}
\label{eq:formal_alpha_star_app}
    \alpha_t^\star
    :=
    J_{t,a}\Lambda(p_0^\star)+N_t\Lambda(m_t^\star),
    \qquad
    \Lambda(u):=-\log(1-u).
\end{equation}
Then
\begin{align}
\label{eq:formal_miss_sum_app}
    \Pbb(\D_T\cap\Xstar=\varnothing)
    &\le
    \exp\!\left(-\sum_{t=1}^T\alpha_t^\star\right)
    +
    \sum_{t=1}^T
    \exp\!\left(-\sum_{s=t+1}^T\alpha_s^\star\right)
    \left(u_t+\frac{d_{t,\Xstar}^2}{\omega_{t,\Xstar}^2}\right).
\end{align}
Consequently, if $\good_t$ is chosen as in Proposition~\ref{prop:gibbs_target_mass_app}, and if after a score-activation time $T_{\rm act}$ the envelope in Theorem~\ref{thm:generic_score_learning_rate} satisfies
\[
    \bar\varepsilon_t\le \frac{\Delta_f^\star}{4L_a},
    \qquad t\ge T_{\rm act},
\]
then $u_t=\Pbb(\good_t^c)$ is controlled by the high-probability score-learning tail for all $t\ge T_{\rm act}$.
Combining with Theorem~\ref{thm:formal_regret_decomp} gives
\begin{equation}
\label{eq:formal_generic_regret_app}
    r_T
    \le
    O\!\left(T^{-\vartheta_a}(\log T)^{c_a}\right)
    +R_f\,\Pbb(\D_T\cap\Xstar=\varnothing),
\end{equation}
with $\vartheta_a=\bar s_a/(2\bar s_a+d_a)$.
\end{theorem}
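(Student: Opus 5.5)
The plan is to control the miss event round by round through a one-step recursion on the survival probability $S_t:=\Pbb(\D_t\cap\Xstar=\varnothing)$, and then unroll it. Write $M_t:=\{\text{round } t \text{ evaluates no optimizer}\}$, so that $\{\D_T\cap\Xstar=\varnothing\}\subseteq\bigcap_{t\le T}M_t$ and $S_t\le \E[\mathbbm 1_{\{\D_{t-1}\cap\Xstar=\varnothing\}}\Pbb(M_t\mid\F_{t-1})]$. The target is
\[
\Pbb(M_t\cap\{\D_{t-1}\cap\Xstar=\varnothing\}\mid\F_{t-1})\le \mathbbm 1_{\{\D_{t-1}\cap\Xstar=\varnothing\}}\,e^{-\alpha_t^\star}+\Pbb(\good_t^c\mid\F_{t-1})+\Pbb(\good_t\cap\{\nu_t(\Xstar)<m_t^\star\}\mid\F_{t-1}).
\]
Taking expectations then gives $S_t\le e^{-\alpha_t^\star}S_{t-1}+u_t+d_{t,\Xstar}^2/\omega_{t,\Xstar}^2$, using $\Pbb(\good_t^c)=u_t$ and Lemma~\ref{prop:rms_to_mass_event_main} with $A=\Xstar$ and $\ell_{t,A}=\ell_{t,\Xstar}$ for the last term. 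Iterating from $S_0\le1$ yields exactly Eq.~\eqref{eq:formal_miss_sum_app}, since $e^{-\alpha}\le1$ lets the extra terms be propagated with weights $\exp(-\sum_{s=t+1}^T\alpha_s^\star)$.

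For the one-step bound, I would work on the event $\good_t\cap\{\nu_t(\Xstar)\ge m_t^\star\}$ and condition on $\F_{t-1}$ and on the frozen sampler information ($\widehat a_t,\nu_t$). Round $t$ misses only if all $J_{t,a}$ prior-refresh draws miss $\Xstar$ and the top-$K_t$ selection misses an optimizer. On $\good_t$, the top-$K_t$ selection misses only if $C_t\cap\Xstar=\varnothing$. By conditional independence of the prior-refresh draws and the pool, both conditionally i.i.d., the conditional miss probability is at most $(1-p_0^\star)^{J_{t,a}}(1-\nu_t(\Xstar))^{N_t}\le(1-p_0^\star)^{J_{t,a}}(1-m_t^\star)^{N_t}=e^{-\alpha_t^\star}$, by the definition of $\Lambda$. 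This bound is $\F_{t-1}$-measurable only if $J_{t,a},N_t,\ell_{t,\Xstar},\omega_{t,\Xstar}$ are predictable or deterministic, which I would assume, as the statement's deterministic envelopes suggest. Tower-property conditioning then removes the frozen information. I would also check the edge case $m_t^\star=0$, where the bound degrades gracefully to $\Lambda(0)=0$.

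The consequence part follows from Proposition~\ref{prop:gibbs_target_mass_app}. I would take $\good_t=\good_t^\star$ and $\ell_{t,\Xstar}=\underline q^{\rm Gibbs}_\star(\beta_t)$. For $t\ge T_{\rm act}$, $\bar\varepsilon_t\le\Delta_f^\star/(4L_a)$ implies $\good_t^c\subseteq\{\|\widehat a_t-a_t\|_\infty>\bar\varepsilon_t\}$, so $u_t\le Ct^{-1-\kappa_a}$ by Theorem~\ref{thm:generic_score_learning_rate}. For $t<T_{\rm act}$ I would simply set $\alpha_t^\star$ to its prior-refresh part and take $\good_t$ to be the sure event with $\ell=0$. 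The regret bound \eqref{eq:formal_generic_regret_app} then follows from Theorem~\ref{thm:formal_regret_decomp} with $\gamma_T=0$, combined with the expectation bound of Theorem~\ref{thm:generic_score_learning_rate} and $n_{T,a}\gtrsim T$ from Assumption~\ref{ass:score_reservoir_app}.

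The main obstacle is the conditioning bookkeeping in the one-step bound. Lemma~\ref{prop:rms_to_mass_event_main} is stated conditionally on $\F_{t-1}$, while the product bound needs conditioning on $\nu_t$ and on the frozen score. I have to make sure $\good_t$ is measurable with respect to the frozen information, so that it can be pulled out before the i.i.d. product computation, and that splitting into $\good_t^c$, the low-mass event, and the favorable event does not double count.
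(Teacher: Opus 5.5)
Your proposal is correct and follows essentially the same route as the paper. Both use the three-way split into the favorable event $\good_t\cap\{\nu_t(\Xstar)\ge m_t^\star\}$, the event $\good_t^c$, and the low-mass event controlled by Lemma~\ref{prop:rms_to_mass_event_main}, which gives the recursion $S_t\le e^{-\alpha_t^\star}S_{t-1}+u_t+d_{t,\Xstar}^2/\omega_{t,\Xstar}^2$; this is then unrolled and combined with Theorem~\ref{thm:formal_regret_decomp} at $\gamma_T=0$. Your explicit conditioning on $\F_{t-1}$ and the frozen sampler information is a more careful version of the paper's informal conditioning on events, and your caveat about $J_{t,a},N_t,\ell_{t,\Xstar},\omega_{t,\Xstar}$ makes explicit an assumption the paper uses silently; note that the unconditional recursion needs them deterministic, not merely predictable, so that $e^{-\alpha_t^\star}$ can be pulled outside the expectation.
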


\begin{proof}
Let $B_t:=\{\D_t\cap\Xstar=\varnothing\}$. On $B_{t-1}$, $\good_t$, and the sampler mass event
\[
    \nu_t(\Xstar)\ge m_t^\star,
\]
the prior-refresh block misses $\Xstar$ with probability
$(1-p_0^\star)^{J_{t,a}}$, and the terminal candidate pool misses $\Xstar$ with probability at most
$(1-m_t^\star)^{N_t}$. If the terminal pool hits $\Xstar$, top-$K_t$ evaluates an optimizer by the definition of $\good_t$. Therefore
\[
    \Pbb(B_t\mid B_{t-1},\good_t,\nu_t(\Xstar)\ge m_t^\star)
    \le
    \exp(-\alpha_t^\star).
\]
On the complement of the good event or the mass event we use the trivial bound $1$. Lemma~\ref{prop:rms_to_mass_event_main} gives
\[
    \Pbb\!\left(
    \good_t\cap\{\nu_t(\Xstar)<m_t^\star\}
    \right)
    \le
    \frac{d_{t,\Xstar}^2}{\omega_{t,\Xstar}^2}.
\]
Thus
\[
    \Pbb(B_t)
    \le
    \exp(-\alpha_t^\star)\Pbb(B_{t-1})
    +u_t+\frac{d_{t,\Xstar}^2}{\omega_{t,\Xstar}^2}.
\]
Unrolling the recursion proves Eq.~\eqref{eq:formal_miss_sum_app}. The regret statement follows from Theorem~\ref{thm:formal_regret_decomp} and Theorem~\ref{thm:generic_score_learning_rate}.
\end{proof}

\begin{lemma}[Geometric smoothing]\label{lem:geometric_smoothing_app}
Let $\rho\in(0,1)$ and let $e_t=O(t^{-\nu}(\log t)^c)$ for some $\nu>0$. Then
\begin{equation}
\label{eq:geometric_smoothing_app}
    \sum_{t=1}^T \rho^{T-t}e_t
    =
    O(T^{-\nu}(\log T)^c).
\end{equation}
\end{lemma}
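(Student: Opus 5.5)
We prove the statement by splitting the sum at $t\approx T/2$ and bounding the two pieces separately. Write $e_t\le C t^{-\nu}(\log t)^c$ for all $t\ge t_0$. For the finitely many indices $t<t_0$ we only use $e_t\le \max_{s<t_0}e_s=:E_0$. Since $\rho^{T-t}\le\rho^{T-t_0}$ for these indices, their total contribution is at most $t_0E_0\rho^{T-t_0}$. This decays geometrically and is therefore $o(T^{-\nu})$.

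\textbf{Late block.} Consider $\max(t_0,\lceil T/2\rceil)\le t\le T$. Here $t\ge T/2$, so $t^{-\nu}\le 2^{\nu}T^{-\nu}$. For the logarithm, $(\log t)^c\le(\log T)^c$ when $c\ge0$. When $c<0$, we use instead $(\log t)^c\le(\log(T/2))^c\le 2^{|c|}(\log T)^c$ for large $T$. In either case the late block is bounded by
\[
C' T^{-\nu}(\log T)^c\sum_{k\ge0}\rho^k=\frac{C'}{1-\rho}T^{-\nu}(\log T)^c .
\]

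\textbf{Early block.} Consider $t_0\le t<T/2$. There are at most $T$ such terms, each satisfies $e_t\le C''$, and $\rho^{T-t}\le\rho^{T/2}$. The early block is therefore at most $C''T\rho^{T/2}$. Because this decays exponentially in $T$, it is $O(T^{-\nu}(\log T)^c)$ for any fixed $\nu,c$.

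Adding the three contributions gives the claim. The only point that needs care is the case $c<0$ in the late block, where $(\log t)^c$ is decreasing in $t$ and must be compared with $(\log T)^c$ using $t\ge T/2$; the rest is routine.
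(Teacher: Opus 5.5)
Your proof is correct and follows the same route as the paper: split at $T/2$, bound the early block by $\rho^{T/2}$ times a polynomially bounded sum, and bound the late block by $C T^{-\nu}(\log T)^c(1-\rho)^{-1}$. Your separate handling of the finitely many indices $t<t_0$ and of the case $c<0$ (via $(\log t)^c\le 2^{|c|}(\log T)^c$ for $t\ge T/2$, $T\ge 4$) supplies details that the paper's one-line proof leaves implicit.
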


\begin{proof}
Split the sum at $T/2$. The early part is exponentially small because $\rho^{T-t}\le\rho^{T/2}$. For the late part, $e_t\le C T^{-\nu}(\log T)^c$ and $\sum_{k=0}^\infty\rho^k=(1-\rho)^{-1}$.
\end{proof}

\begin{corollary}[Formal clean rate]\label{cor:formal_clean_rate}
Suppose that after $T_0$, $J_{t,a}\ge J_{a,\min}\ge1$, $N_t\ge N_{\min}\ge1$, $\ell_{t,\Xstar}\ge\ell_\star>0$, and $u_t\le C_ut^{-1-\kappa_a}$. If
\begin{equation}
\label{eq:formal_sampler_floor_app}
    \frac{d_{t,\Xstar}^2}{(\ell_\star/2)^2}
    \le
    C_{\rm sam}t^{-1-\kappa_{\rm sam}},
\end{equation}
then with $\omega_{t,\Xstar}=\ell_\star/2$ and
\begin{equation}
\label{eq:lambda_bo_formal_app}
    \lambda_{\rm BO}:=J_{a,\min}\Lambda(p_0^\star)+N_{\min}\Lambda(\ell_\star/2),
\end{equation}
we have
\begin{equation}
\label{eq:formal_clean_miss_app}
    \Pbb(\D_T\cap\Xstar=\varnothing)
    \le
    C_{\rm BO}e^{-\lambda_{\rm BO}(T-T_0)_+}
    +O(T^{-1-\kappa_a})
    +O(T^{-1-\kappa_{\rm sam}}).
\end{equation}
Consequently,
\begin{equation}
\label{eq:formal_clean_regret_app}
    r_T
    =
    O\!\left(
    T^{-\vartheta_a}(\log T)^{c_a}
    +T^{-1-\kappa_a}
    +T^{-1-\kappa_{\rm sam}}
    +e^{-\lambda_{\rm BO}T}
    \right),
    \qquad
    \vartheta_a=\frac{\bar s_a}{2\bar s_a+d_a}.
\end{equation}
\end{corollary}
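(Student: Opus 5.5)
The plan is to feed the stated per-round quantities into Theorem~\ref{thm:formal_end_to_end_miss}, using only rounds $t>T_0$, and then combine the resulting miss bound with the regret decomposition in Theorem~\ref{thm:formal_regret_decomp}. Rounds before $T_0$ are burn-in. Dropping them only weakens the bound: the recursion $\Pbb(B_t)\le e^{-\alpha_t^\star}\Pbb(B_{t-1})+u_t+d_{t,\Xstar}^2/\omega_{t,\Xstar}^2$ can be started at $t=T_0$ with the trivial bound $\Pbb(B_{T_0})\le1$. Unrolling from there gives the same expression as Eq.~\eqref{eq:formal_miss_sum_app}, with sums running over $t=T_0+1,\dots,T$.

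The first step fixes $\omega_{t,\Xstar}=\ell_\star/2$ and takes $\ell_{t,\Xstar}\ge\ell_\star$ as the certified target mass. Then $m_t^\star=[\ell_{t,\Xstar}-\ell_\star/2]_+\ge\ell_\star/2$. Since $\Lambda$ is nondecreasing and $J_{t,a}\ge J_{a,\min}$, $N_t\ge N_{\min}$, we get $\alpha_t^\star\ge\lambda_{\rm BO}$ for every $t>T_0$.

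The second step bounds the two pieces of the unrolled recursion.
\begin{itemize}
\item The leading term is at most $e^{-\lambda_{\rm BO}(T-T_0)_+}$.
\item In the sum, each perturbation is $u_t+d_{t,\Xstar}^2/(\ell_\star/2)^2\le C_ut^{-1-\kappa_a}+C_{\rm sam}t^{-1-\kappa_{\rm sam}}$. Its weight is $\exp(-\sum_{s>t}\alpha_s^\star)\le\rho^{T-t}$ with $\rho:=e^{-\lambda_{\rm BO}}\in(0,1)$. This uses $\lambda_{\rm BO}>0$, which holds because $J_{a,\min}\ge1$ and $p_0^\star>0$ by Assumption~\ref{ass:finite_alignment_app}.
\item Lemma~\ref{lem:geometric_smoothing_app}, applied with $\nu=1+\kappa_a$ and with $\nu=1+\kappa_{\rm sam}$, turns these weighted sums into $O(T^{-1-\kappa_a})+O(T^{-1-\kappa_{\rm sam}})$.
\end{itemize}
Together these give Eq.~\eqref{eq:formal_clean_miss_app} with $C_{\rm BO}=1$ (or a constant absorbing the burn-in).

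The third step invokes Theorem~\ref{thm:formal_regret_decomp} with $\gamma_T=0$, so that $\U_0=\Xstar$. The score term $2L_a\E\|\widehat a_T-a_T\|_\infty$ is $O(T^{-\vartheta_a}(\log T)^{c_a})$ by Theorem~\ref{thm:generic_score_learning_rate}, using $n_{T,a}\ge c_a^{\rm samp}T$ from Assumption~\ref{ass:score_reservoir_app}. Multiplying the miss bound by $R_f$ and absorbing $e^{\lambda_{\rm BO}T_0}$ into the constant gives Eq.~\eqref{eq:formal_clean_regret_app}.

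The main point of care is checking that the events used by Theorem~\ref{thm:formal_end_to_end_miss} are available after $T_0$. The good event must supply both $\widehat q_t(\Xstar)\ge\ell_{t,\Xstar}$ and the top-$K$ ranking property. I would take $\good_t=\good_t^\star$ from Proposition~\ref{prop:gibbs_target_mass_app}, with $\ell_\star=\underline q_\star^{\rm Gibbs}(\beta_t)$ or any smaller positive constant, so that $u_t=\Pbb((\good_t^\star)^c)$ is exactly the tail assumed. The early-round weights are exponentially small, so splitting the sum in Lemma~\ref{lem:geometric_smoothing_app} also causes no trouble.
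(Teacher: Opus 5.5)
Your proposal is correct and follows the paper's proof. You lower-bound $m_t^\star\ge\ell_\star/2$ so that $\alpha_t^\star\ge\lambda_{\rm BO}$ after $T_0$, plug this into Theorem~\ref{thm:formal_end_to_end_miss}, control the weighted $u_t$ and sampler-floor sums with Lemma~\ref{lem:geometric_smoothing_app}, and finish with Theorem~\ref{thm:formal_regret_decomp} at $\gamma_T=0$ together with Theorem~\ref{thm:generic_score_learning_rate}; restarting the recursion at $T_0$ with $\Pbb(B_{T_0})\le 1$ also makes explicit the burn-in step that the paper leaves implicit.
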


\begin{proof}
For $t\ge T_0$, $m_t^\star\ge\ell_\star/2$ and hence $\alpha_t^\star\ge\lambda_{\rm BO}$. Apply Theorem~\ref{thm:formal_end_to_end_miss}; the weighted sums of $u_t$ and the sampler floor are controlled by Lemma~\ref{lem:geometric_smoothing_app}. The regret bound follows from Theorem~\ref{thm:formal_regret_decomp} and Theorem~\ref{thm:generic_score_learning_rate}.
\end{proof}

\begin{theorem}[Prior-refresh safety theorem]
\label{thm:safety}
Assume Assumptions~\ref{ass:finite_alignment_app},
\ref{ass:calibrated_score_app}, \ref{ass:score_reservoir_app},
\ref{ass:besov_score_app}, and \ref{ass:relu_erm_app}.  Suppose that
\(J_{t,a}\ge1\) prior-refresh labels are evaluated at each round and that the
score learner satisfies Theorem~\ref{thm:generic_score_learning_rate}.  If the
final recommendation is
\[
\widehat x_T\in\argmax_{x\in\mathcal D_T}\widehat a_T(x),
\]
then, for exact optimization with \(\gamma_T=0\),
\[
r_T
\le
2L_a\,\mathbb E\|\widehat a_T-a_T\|_\infty
+
R_f\exp\!\left(-p_0^\star\sum_{t=1}^T J_{t,a}\right).
\]
Consequently, if \(n_{T,a}\gtrsim T\), then
\[
r_T
=
O\!\left(
T^{-\vartheta_a}(\log T)^{c_a}
+
\exp\!\left(-p_0^\star\sum_{t=1}^T J_{t,a}\right)
\right).
\]
\end{theorem}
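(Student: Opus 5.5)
The plan is to combine the formal regret decomposition (Theorem~\ref{thm:formal_regret_decomp}) with a search bound that uses only the prior-refresh block and ignores the guided sampler entirely. Taking $\gamma_T=0$, so that $\U_{0}=\Xstar$, Theorem~\ref{thm:formal_regret_decomp} gives
\[
r_T\le 2L_a\,\E\|\widehat a_T-a_T\|_\infty+R_f\,\Pbb(\D_T\cap\Xstar=\varnothing).
\]
This step needs only Assumption~\ref{ass:calibrated_score_app} together with the fact that $\widehat x_T$ maximizes $\widehat a_T$ over $\D_T$, via Lemma~\ref{lem:near_opt_generic_app}. It therefore remains to bound the miss probability without any sampler certificate.

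For the miss probability, note that $\D_T$ contains every prior-refresh point evaluated in rounds $1,\dots,T$. Hence
\[
\{\D_T\cap\Xstar=\varnothing\}\subseteq\{\text{no prior-refresh draw lands in }\Xstar\}.
\]
At round $t$, the $J_{t,a}$ refresh draws are conditionally i.i.d.\ from $p_0$ given $\F_{t-1}$. Their conditional probability of all missing $\Xstar$ is therefore exactly $(1-p_0^\star)^{J_{t,a}}$. This value is deterministic, so it does not depend on the adaptive past. I would peel off rounds one at a time by the tower property, conditioning on $\F_{t-1}$. This gives
\[
\Pbb(\text{all refresh draws miss})\le\prod_{t=1}^T(1-p_0^\star)^{J_{t,a}}\le\exp\Big(-p_0^\star\sum_{t}J_{t,a}\Big),
\]
using $1-u\le e^{-u}$. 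Equivalently, this is the recursion in Theorem~\ref{thm:formal_end_to_end_miss} with $N_t\Lambda(m_t^\star)$ replaced by $0$ and $u_t=d_{t,\Xstar}=0$: one takes $\good_t$ to be the sure event and $\ell_{t,\Xstar}=0$. The one subtlety is that the top-$K$ and guided draws may depend on the refresh draws of earlier rounds. Conditioning on the full past $\F_{t-1}$ handles this, because only the current round's refresh block is integrated out at each step.

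For the rate statement, substitute Theorem~\ref{thm:generic_score_learning_rate}, which gives $\E\|\widehat a_T-a_T\|_\infty=O(n_{T,a}^{-\vartheta_a}(\log n_{T,a})^{c_a})$. When $n_{T,a}\gtrsim T$, this becomes $O(T^{-\vartheta_a}(\log T)^{c_a})$, since $n\mapsto n^{-\vartheta_a}(\log n)^{c_a}$ is eventually decreasing. The constants $2L_a$ and $R_f$ are absorbed.

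I expect the main obstacle to be the measurability and independence bookkeeping behind the reservoir. Assumption~\ref{ass:score_reservoir_app} requires that the score learner's reservoir consist of i.i.d.\ $p_0$ samples, with the acquisition index independent of them given the past. I would identify the reservoir with the accumulated prior-refresh labels, so that $n_{T,a}=\sum_t J_{t,a}$, and check that the expectation bound of Theorem~\ref{thm:generic_score_learning_rate} applies unconditionally at time $T$. The search part is elementary once this is fixed.
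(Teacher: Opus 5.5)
Your proposal is correct and follows the paper's proof: you apply Theorem~\ref{thm:formal_regret_decomp} with $\gamma_T=0$, bound $\Pbb(\D_T\cap\Xstar=\varnothing)$ by the probability that every prior-refresh draw misses $\Xstar$, and substitute the rate from Theorem~\ref{thm:generic_score_learning_rate}; the paper simply asserts that the $\sum_t J_{t,a}$ refresh draws are independent, whereas your tower-property peeling over $\F_{t-1}$ makes that step rigorous. The one loose point is your aside that this is the special case of Theorem~\ref{thm:formal_end_to_end_miss} with $\good_t$ the sure event: that theorem also requires the top-$K$ ranking property on $\good_t$ and Assumption~\ref{ass:weak_terminal_mass_app}, neither of which is available here, but your direct argument never uses them.
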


\begin{proof}
Apply Theorem~\ref{thm:formal_regret_decomp} with \(\gamma_T=0\), so that
\(\mathcal U_{\gamma_T}=\mathcal X^\star\).  The score term is controlled by
Theorem~\ref{thm:generic_score_learning_rate}.  The prior-refresh channel
contributes \(\sum_{t=1}^T J_{t,a}\) independent draws from \(p_0\), hence
\[
\Pr(\mathcal D^{\rm explr}_{T,a}\cap\mathcal X^\star=\varnothing)
=
(1-p_0^\star)^{\sum_{t=1}^T J_{t,a}}
\le
\exp\!\left(-p_0^\star\sum_{t=1}^T J_{t,a}\right).
\]
Since \(\mathcal D^{\rm explr}_{T,a}\subseteq\mathcal D_T\), this also upper
bounds
\(\Pr(\mathcal D_T\cap\mathcal X^\star=\varnothing)\).  Substitution into
Theorem~\ref{thm:formal_regret_decomp} proves the claim.
\end{proof}

\section{Sampler-Specific Instantiation}
\label{app:sampler_specific_instantiation}

This appendix proves the sampler certificates used in
Table~\ref{tab:sampler_floor_summary}. The common interface is that any
terminal TV or KL control implies the weak terminal mass condition in
Assumption~\ref{ass:weak_terminal_mass_app}.

\subsection{A terminal-divergence interface}

\begin{lemma}[Sampler mass error from terminal divergence]
\label{lem:terminal_div_to_mass_error}
Fix a round \(t\). Let \(\widehat q_t\in\Delta(\X)\) be the frozen target law
and let \(\nu_t\in\Delta(\X)\) be the terminal law of the sampler. For every
\(A\subseteq\X\),
\begin{equation}
\label{eq:terminal_div_mass_tv}
  \{\widehat q_t(A)-\nu_t(A)\}_+^2
  \le
  \TV(\widehat q_t,\nu_t)^2 .
\end{equation}
Consequently, if for a deterministic envelope \(\delta_t^2\),
\begin{equation}
\label{eq:terminal_tv_certificate_app}
  \E\!\left[
  \TV(\widehat q_t,\nu_t)^2
  \mid\F_{t-1}
  \right]
  \le
  \delta_t^2 ,
\end{equation}
then Assumption~\ref{ass:weak_terminal_mass_app} holds for every
\(A\subseteq\X\) with \(d_{t,A}^2=\delta_t^2\).

Moreover, if for a deterministic envelope \(\varepsilon_t\) either
\begin{equation}
\label{eq:forward_kl_certificate_app}
  \E\!\left[
  \KL(\widehat q_t\|\nu_t)
  \mid\F_{t-1}
  \right]
  \le
  \varepsilon_t
\end{equation}
or
\begin{equation}
\label{eq:reverse_kl_certificate_app}
  \E\!\left[
  \KL(\nu_t\|\widehat q_t)
  \mid\F_{t-1}
  \right]
  \le
  \varepsilon_t ,
\end{equation}
then Assumption~\ref{ass:weak_terminal_mass_app} holds for every
\(A\subseteq\X\) with
\begin{equation}
\label{eq:kl_to_mass_floor_app}
  d_{t,A}^2=\frac{\varepsilon_t}{2}.
\end{equation}
If the displayed envelopes are random but \(\F_{t-1}\)-measurable, any
deterministic upper envelope may be used as \(d_{t,A}^2\).
\end{lemma}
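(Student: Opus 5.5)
The argument has three steps: a pointwise set-mass inequality, a conditional expectation, and Pinsker's inequality in either direction.

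\emph{Step 1 (pointwise TV bound).} For a fixed realization of \(\nu_t\), use the finite-domain characterization
\[
\TV(\widehat q_t,\nu_t)=\max_{B\subseteq\X}|\widehat q_t(B)-\nu_t(B)|.
\]
This gives \(\{\widehat q_t(A)-\nu_t(A)\}_+\le|\widehat q_t(A)-\nu_t(A)|\le\TV(\widehat q_t,\nu_t)\) for every \(A\subseteq\X\). Both sides are nonnegative, so squaring preserves the inequality and yields Eq.~\eqref{eq:terminal_div_mass_tv}. If TV is instead defined as \(\tfrac12\sum_x|\widehat q_t(x)-\nu_t(x)|\), I first check the identity with the set maximum. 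The set achieving it is \(\{x:\widehat q_t(x)>\nu_t(x)\}\), and since both laws sum to one, its positive and negative parts have equal mass.

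\emph{Step 2 (conditional expectation).} Eq.~\eqref{eq:terminal_div_mass_tv} holds pointwise for every realization of the random law \(\nu_t\), and \(\widehat q_t\) is frozen. Taking \(\E[\cdot\mid\F_{t-1}]\) of both sides and using monotonicity of conditional expectation gives
\[
\E[\{\widehat q_t(A)-\nu_t(A)\}_+^2\mid\F_{t-1}]\le\delta_t^2 .
\]
This is exactly Assumption~\ref{ass:weak_terminal_mass_app}, with \(d_{t,A}^2=\delta_t^2\) for every \(A\).

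\emph{Step 3 (KL certificates).} Invoke Pinsker's inequality, \(\TV(P,Q)^2\le\tfrac12\KL(P\|Q)\). Because TV is symmetric, the same bound on \(\TV(\widehat q_t,\nu_t)^2\) holds whether one uses \(\KL(\widehat q_t\|\nu_t)\) or \(\KL(\nu_t\|\widehat q_t)\). Combining with Step 1 and taking conditional expectations gives
\[
\E[\{\widehat q_t(A)-\nu_t(A)\}_+^2\mid\F_{t-1}]\le\tfrac12\varepsilon_t ,
\]
so \(d_{t,A}^2=\varepsilon_t/2\). For measurability, I treat \(\widehat q_t\) and \(\nu_t\) as random elements of the finite simplex, so \(\TV\) and \(\KL\) are measurable functions of them. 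If a KL is infinite on some event, the inequality holds trivially there.

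The final remark, on random but \(\F_{t-1}\)-measurable envelopes, follows by replacing the envelope with any deterministic upper bound; the same inequality chain goes through unchanged.

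I expect no genuine obstacle. The only care needed is the convention for TV: the factor \(\tfrac12\) must match the Pinsker constant and the set-supremum identity. If the paper's TV lacked the \(\tfrac12\), the constants would shift by a factor of \(4\), so I would state the convention explicitly at the start of the proof.
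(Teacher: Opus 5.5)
Your proposal is correct and matches the paper's proof step for step. Both use the set-supremum characterization of $\TV$ together with $\{z\}_+\le|z|$, then monotonicity of conditional expectation, then Pinsker's inequality in either orientation, with an infinite KL making the bound vacuous. Your concern about the normalization of $\TV$ is harmless, because Pinsker in its set-supremum form $\sup_{B}|P(B)-Q(B)|\le\sqrt{\KL(P\|Q)/2}$ gives the same $d_{t,A}^2=\varepsilon_t/2$ under either convention.
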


\begin{proof}
For any \(A\subseteq\X\),
\[
  |\widehat q_t(A)-\nu_t(A)|
  \le
  \sup_{B\subseteq\X}
  |\widehat q_t(B)-\nu_t(B)|
  =
  \TV(\widehat q_t,\nu_t).
\]
Since \(\{z\}_+\le |z|\), this proves
Eq.~\eqref{eq:terminal_div_mass_tv}. Taking conditional expectations gives
Eq.~\eqref{eq:terminal_tv_certificate_app} and hence
Assumption~\ref{ass:weak_terminal_mass_app} with
\(d_{t,A}^2=\delta_t^2\).

For the KL statements, Pinsker's inequality gives, using natural logarithms,
\[
    \TV(\widehat q_t,\nu_t)^2
    \le
    \frac12\KL(\widehat q_t\|\nu_t)
\]
and also
\[
    \TV(\widehat q_t,\nu_t)^2
    \le
    \frac12\KL(\nu_t\|\widehat q_t).
\]
Combining either inequality with Eq.~\eqref{eq:terminal_div_mass_tv} and taking
conditional expectations proves Eq.~\eqref{eq:kl_to_mass_floor_app}. If a KL is
infinite because of support mismatch, the implication is still valid but
vacuous; finite certificates therefore implicitly enforce the required support
condition.
\end{proof}

\begin{proposition}[Complete certificate from prior-proposal resampling]
\label{prop:certified_prior_resampling}
Fix a round $t$ and condition on the frozen score $\widehat a_t$.  Let
\[
\widehat q_t(x)
=
\frac{p_0(x)\exp\{\beta_t\widehat a_t(x)\}}
{\sum_{z\in\mathcal X}p_0(z)\exp\{\beta_t\widehat a_t(z)\}} .
\]
Assume that the frozen score has bounded oscillation,
\[
\sup_{x\in\mathcal X}\widehat a_t(x)
-
\inf_{x\in\mathcal X}\widehat a_t(x)
\le A_{\rm osc},
\qquad
0\le \beta_t\le \beta_{\max}.
\]
Draw proposal particles
$Z_{t,1},\ldots,Z_{t,M_t}\stackrel{\rm i.i.d.}{\sim}p_0$ and define
\[
w_{t,i}:=\exp\{\beta_t\widehat a_t(Z_{t,i})\},
\qquad
\nu_t^{M}(B)
:=
\frac{\sum_{i=1}^{M_t}w_{t,i}\mathbbm 1\{Z_{t,i}\in B\}}
{\sum_{i=1}^{M_t}w_{t,i}},
\quad B\subseteq\mathcal X .
\]
Return the terminal candidate pool conditionally i.i.d. from the random law
$\nu_t^{M}$.  Then, for every $A\subseteq\mathcal X$,
\[
\mathbb E\!\left[
\{\widehat q_t(A)-\nu_t^{M}(A)\}_+^2
\,\middle|\,
\mathcal F_{t-1},\widehat a_t
\right]
\le
\frac{\exp(2\beta_{\max}A_{\rm osc})}{M_t}.
\]
Thus Assumption~\ref{ass:weak_terminal_mass_app} holds with
\[
d_{t,A}^2
=
\frac{\exp(2\beta_{\max}A_{\rm osc})}{M_t}
\qquad
\text{for all } A\subseteq\mathcal X.
\]
In particular, choosing
$M_t\gtrsim \exp(2\beta_{\max}A_{\rm osc})\,t^{1+\kappa_{\rm sam}}$
gives a sampler floor of order $t^{-1-\kappa_{\rm sam}}$ in
Corollary~\ref{cor:formal_clean_rate}.
\end{proposition}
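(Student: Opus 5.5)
The plan is to write the self-normalized resampling law as a ratio of empirical means and strip off the random denominator using the bounded-oscillation assumption. The resulting centred empirical average is then controlled by a direct second-moment calculation; no concentration inequality is needed. Throughout I condition on $\F_{t-1}$ and the frozen $\widehat a_t$, so that $\beta_t$ and $\widehat a_t$ are fixed and $Z_{t,1},\ldots,Z_{t,M_t}$ are i.i.d.\ from $p_0$.

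\textbf{Step 1: normalize the weights.} Set $\underline a:=\inf_{x\in\X}\widehat a_t(x)$ and define $\tilde w(x):=\exp\{\beta_t(\widehat a_t(x)-\underline a)\}$. Rescaling all weights by the common factor $e^{-\beta_t\underline a}$ changes neither $\nu_t^M$ nor $\widehat q_t$. By the oscillation bound and $0\le\beta_t\le\beta_{\max}$,
\[
1\le \tilde w(x)\le e^{\beta_{\max}A_{\rm osc}}.
\]
Writing $\mu:=\E_{p_0}[\tilde w]$ and $\mu_A:=\E_{p_0}[\tilde w\mathbbm 1_A]$, we have $\widehat q_t(A)=\mu_A/\mu$.

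\textbf{Step 2: linearize the ratio.} Let $S:=M_t^{-1}\sum_i\tilde w(Z_{t,i})$, which satisfies $S\ge1$ because every $\tilde w\ge1$. Define $h_i:=\tilde w(Z_{t,i})\{\mathbbm 1_A(Z_{t,i})-\widehat q_t(A)\}$. A direct computation gives the exact identity
\[
\nu_t^M(A)-\widehat q_t(A)=\frac{M_t^{-1}\sum_{i=1}^{M_t}h_i}{S}.
\]
Using $S\ge1$ and $\{z\}_+\le|z|$,
\[
\{\widehat q_t(A)-\nu_t^M(A)\}_+^2\le\Bigl(M_t^{-1}\textstyle\sum_i h_i\Bigr)^2 .
\]

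\textbf{Step 3: second moment.} The $h_i$ are i.i.d.\ and centred, since $\E[h_i]=\mu_A-\widehat q_t(A)\mu=0$. Moreover $|h_i|\le\tilde w(Z_{t,i})\le e^{\beta_{\max}A_{\rm osc}}$, because $|\mathbbm 1_A-\widehat q_t(A)|\le1$. Therefore
\[
\E\Bigl[\Bigl(M_t^{-1}\textstyle\sum_i h_i\Bigr)^2\,\Big|\,\F_{t-1},\widehat a_t\Bigr]=\frac{\E[h_1^2]}{M_t}\le\frac{e^{2\beta_{\max}A_{\rm osc}}}{M_t},
\]
which is the claimed bound, uniformly over all $A\subseteq\X$.

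\textbf{Step 4: consequences.} The envelope $e^{2\beta_{\max}A_{\rm osc}}/M_t$ is deterministic. Taking the tower expectation over $\widehat a_t$ given $\F_{t-1}$ therefore yields Assumption~\ref{ass:weak_terminal_mass_app} with this $d_{t,A}^2$. Given $\nu_t^M$, the candidate pool is conditionally i.i.d.\ from the random law $\nu_t^M$, as that assumption requires. For the final claim, choose $M_t\gtrsim e^{2\beta_{\max}A_{\rm osc}}t^{1+\kappa_{\rm sam}}$. Then $d_{t,\Xstar}^2/(\ell_\star/2)^2\le C_{\rm sam}t^{-1-\kappa_{\rm sam}}$, which is exactly condition~\eqref{eq:formal_sampler_floor_app} of Corollary~\ref{cor:formal_clean_rate}.

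The only real obstacle is the random denominator of the self-normalized estimator. The rescaling to $\tilde w\ge1$ in Step 1 disposes of it, because it gives $S\ge1$ deterministically. This is what lets us avoid a delta-method argument or a separate concentration bound for $S$.
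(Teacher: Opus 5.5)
Your proposal is correct and matches the paper's proof step for step. The paper also writes $\nu_t^M(A)-\widehat q_t(A)=(T_M-\theta S_M)/S_M$, lower-bounds the denominator by $w_{\min}$, and computes the variance of the centred i.i.d.\ average to obtain $(w_{\max}/w_{\min})^2/M_t$; your rescaling $\tilde w=w/w_{\min}$ simply performs that normalization at the start rather than at the end.
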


\begin{proof}
Write
\[
w(x):=\exp\{\beta_t\widehat a_t(x)\},
\qquad
Z:=\mathbb E_{X\sim p_0}[w(X)],
\qquad
\theta:=\widehat q_t(A).
\]
Let
\[
S_M:=\frac1{M_t}\sum_{i=1}^{M_t}w(Z_{t,i}),
\qquad
T_M:=\frac1{M_t}\sum_{i=1}^{M_t}w(Z_{t,i})\mathbbm 1\{Z_{t,i}\in A\}.
\]
Then $\nu_t^M(A)=T_M/S_M$ and $\theta=\mathbb E[T_M]/Z$.
Let
\[
w_{\min}:=\inf_x w(x),
\qquad
w_{\max}:=\sup_x w(x).
\]
The bounded-oscillation assumption gives
$w_{\max}/w_{\min}\le \exp(\beta_{\max}A_{\rm osc})$ and
$S_M\ge w_{\min}$ almost surely.  Moreover,
\[
T_M-\theta S_M
=
\frac1{M_t}\sum_{i=1}^{M_t}
w(Z_{t,i})\{\mathbbm 1\{Z_{t,i}\in A\}-\theta\}
\]
is an average of centered i.i.d. random variables, because
$\mathbb E_{p_0}[w(X)\{\mathbbm 1\{X\in A\}-\theta\}]=0$.
Therefore
\[
\mathbb E\{ \nu_t^M(A)-\theta\}^2
=
\mathbb E\left[
\frac{(T_M-\theta S_M)^2}{S_M^2}
\right]
\le
\frac{1}{w_{\min}^2}
\mathbb E (T_M-\theta S_M)^2 .
\]
Independence gives
\[
\mathbb E (T_M-\theta S_M)^2
=
\frac1{M_t}
\operatorname{Var}_{p_0}
\!\left(
w(X)\{\mathbbm 1\{X\in A\}-\theta\}
\right)
\le
\frac{w_{\max}^2}{M_t}.
\]
Thus
\[
\mathbb E\{ \nu_t^M(A)-\widehat q_t(A)\}^2
\le
\frac{1}{M_t}\left(\frac{w_{\max}}{w_{\min}}\right)^2
\le
\frac{\exp(2\beta_{\max}A_{\rm osc})}{M_t}.
\]
Since $\{u\}_+^2\le u^2$, the displayed bound follows.
\end{proof}

\begin{corollary}[Proposal-corrected guided sampler]
\label{cor:proposal_corrected_guided_sampler}
Fix a round $t$ and condition on the frozen score.  Let $g_t\in\Delta(\mathcal X)$
be any proposal distribution, for example the terminal law of a DiBO/GenBO-style
fine-tuned diffusion sampler, and assume $g_t(x)>0$ whenever
$p_0(x)\exp\{\beta_t\widehat a_t(x)\}>0$.  Define the correction weight
\[
W_t(x)
:=
\frac{p_0(x)\exp\{\beta_t\widehat a_t(x)\}}{g_t(x)} .
\]
Draw $Z_{t,1},\ldots,Z_{t,M_t}\stackrel{\rm i.i.d.}{\sim}g_t$, form the
self-normalized law
\[
\nu_t^{M,g}(B)
:=
\frac{\sum_{i=1}^{M_t}W_t(Z_{t,i})\mathbbm 1\{Z_{t,i}\in B\}}
{\sum_{i=1}^{M_t}W_t(Z_{t,i})},
\]
and return the terminal candidate pool conditionally i.i.d. from
$\nu_t^{M,g}$.  If
\[
0<W_{\min}\le W_t(x)\le W_{\max}<\infty
\qquad
\text{for all }x\in\mathcal X,
\]
then for every $A\subseteq\mathcal X$,
\[
\mathbb E\!\left[
\{\widehat q_t(A)-\nu_t^{M,g}(A)\}_+^2
\,\middle|\,
\mathcal F_{t-1},\widehat a_t
\right]
\le
\frac{1}{M_t}\left(\frac{W_{\max}}{W_{\min}}\right)^2 .
\]
\end{corollary}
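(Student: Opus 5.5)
The plan is to repeat the argument of Proposition~\ref{prop:certified_prior_resampling}, replacing the prior proposal $p_0$ by the general proposal $g_t$ and the tilt $w$ by the correction weight $W_t$. Condition on $\F_{t-1}$ and the frozen score $\widehat a_t$. Then $g_t$, $W_t$ and $\widehat q_t$ are fixed, and the $Z_{t,i}$ are i.i.d.\ from $g_t$. Set $\theta:=\widehat q_t(A)$ and write
\[
S_M:=\frac1{M_t}\sum_{i=1}^{M_t}W_t(Z_{t,i}),\qquad T_M:=\frac1{M_t}\sum_{i=1}^{M_t}W_t(Z_{t,i})\mathbbm 1\{Z_{t,i}\in A\},
\]
so that $\nu_t^{M,g}(A)=T_M/S_M$.

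The key identity is that importance weighting is unbiased. For any $B\subseteq\X$,
\[
\E_{g_t}[W_t(X)\mathbbm 1\{X\in B\}]=\sum_{x\in B}p_0(x)e^{\beta_t\widehat a_t(x)}.
\]
This uses the positivity assumption on $g_t$ wherever the unnormalized target is positive; at points where the target vanishes, the summand is zero on both sides. Taking $B=\X$ gives $\E[S_M]=Z_t$, the normalizer of $\widehat q_t$, and taking $B=A$ gives $\E[T_M]=\theta Z_t$. Consequently $T_M-\theta S_M$ is an average of i.i.d.\ centered variables $W_t(Z_{t,i})\{\mathbbm 1\{Z_{t,i}\in A\}-\theta\}$.

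Next I bound the ratio. Since $W_t\ge W_{\min}$, we have $S_M\ge W_{\min}$ almost surely, so
\[
(\nu_t^{M,g}(A)-\theta)^2=\frac{(T_M-\theta S_M)^2}{S_M^2}\le W_{\min}^{-2}(T_M-\theta S_M)^2.
\]
Taking conditional expectations and using independence, $\E(T_M-\theta S_M)^2=M_t^{-1}\operatorname{Var}_{g_t}(W_t(X)\{\mathbbm 1\{X\in A\}-\theta\})$. This variance is at most the second moment, which is at most $W_{\max}^2$ because $|\mathbbm 1-\theta|\le1$. Combining the two bounds gives $\E(\nu_t^{M,g}(A)-\theta)^2\le M_t^{-1}(W_{\max}/W_{\min})^2$. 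Finally, $\{u\}_+^2\le u^2$ yields the claimed one-sided bound.

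There is no real obstacle here. The only point needing care is the unbiasedness step. It requires the support condition on $g_t$. It also uses the fact that the candidate pool is drawn conditionally i.i.d.\ from $\nu_t^{M,g}$, so the certificate concerns the law $\nu_t^{M,g}$ itself; the relevant randomness is that of the $M_t$ particles, as in the prior-resampling case. This conditional-on-frozen-score bound then yields Assumption~\ref{ass:weak_terminal_mass_app} with $d_{t,A}^2=(W_{\max}/W_{\min})^2/M_t$.
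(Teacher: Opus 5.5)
Your proposal is correct and follows the paper's route exactly. The paper proves this corollary by saying the argument of Proposition~\ref{prop:certified_prior_resampling} carries over verbatim with $p_0$ replaced by $g_t$ and $w$ by $W_t$. That argument is your ratio decomposition $\nu_t^{M,g}(A)-\theta=(T_M-\theta S_M)/S_M$, together with the bound $S_M\ge W_{\min}$ and the $W_{\max}^2/M_t$ variance estimate. Your explicit unbiasedness check $\E_{g_t}[W_t(X)\mathbbm 1\{X\in B\}]=\sum_{x\in B}p_0(x)e^{\beta_t\widehat a_t(x)}$, using the support condition on $g_t$, is the one step the paper leaves implicit, and you handle it correctly.
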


\begin{proof}
The proof is identical to Proposition~\ref{prop:certified_prior_resampling},
with $p_0$ replaced by $g_t$ and $w$ replaced by $W_t$.
\end{proof}

\begin{remark}[How this fixes DiBO/GenBO]
If the terminal density $g_t$ or a path-space proposal density is available, the
proposal-corrected wrapper converts an arbitrary guided sampler into a certified
sampler for the same Gibbs target $\widehat q_t$.  If this density is not
available, Proposition~\ref{prop:certified_prior_resampling} still gives a
complete proof using the unconditional prior sampler $p_0$.  Thus unchanged
DiBO/GenBO remain covered by the certificate reductions in
Remark~\ref{rem:scope_dibo_genbo}, while the corrected wrapper gives a fully
proved sampler instance.
\end{remark}

\subsection{GenBO-style weighted scoring}

The following proposition makes explicit the target-matching condition needed
for GenBO-style scoring objectives. Let \(R_t:\X\to(0,\infty)\) be the positive
tilt used by the sampler and define
\begin{equation}
\label{eq:genbo_positive_tilt_target_app}
    Z_t
    :=
    \sum_{x\in\X}p_0(x)R_t(x),
    \qquad
    \widehat q_t(x)
    :=
    \frac{p_0(x)R_t(x)}{Z_t}.
\end{equation}
For the Gibbs target in the main text, \(R_t(x)=\exp\{\beta_t\widehat a_t(x)\}\).
For EI-weighted GenBO, one should instead take
\(R_t(x)\approx \widehat{\mathrm{EI}}_t(x)+\varepsilon_{\rm EI}\), equivalently
\(\widehat a_t=\log R_t/\beta_t\), if one wants the same proof to apply.

\begin{proposition}[GenBO log-score sampler floor]
\label{prop:genbo_logscore_sampler_floor}
Fix a round \(t\) and condition on \(\F_{t-1}\). Let \(\rho_t\in\Delta(\X)\)
be a proposal distribution satisfying \(\rho_t(x)>0\) whenever
\(p_0(x)R_t(x)>0\). Define the normalized importance weight
\begin{equation}
\label{eq:genbo_normalized_weight_app}
    W_t(x)
    :=
    \frac{p_0(x)R_t(x)}{Z_t\rho_t(x)} .
\end{equation}
Consider the population negative log-score objective
\begin{equation}
\label{eq:genbo_population_logscore_app}
    \mathcal L_t(q)
    :=
    -\E_{X\sim\rho_t}
    \left[
        W_t(X)\log q(X)
    \right],
    \qquad q\in\Delta(\X).
\end{equation}
Let \(\nu_t\) be the terminal distribution of the fine-tuned generator. If
\begin{equation}
\label{eq:genbo_excess_logscore_certificate_app}
  \E\!\left[
  \mathcal L_t(\nu_t)-\mathcal L_t(\widehat q_t)
  \mid \F_{t-1}
  \right]
  \le
  \mathcal E_t^{\rm GenBO}+b_{t,{\rm opt}}^2 ,
\end{equation}
then for every \(A\subseteq\X\),
\begin{equation}
\label{eq:genbo_logscore_mass_floor_app}
  \E\!\left[
  \{\widehat q_t(A)-\nu_t(A)\}_+^2
  \mid \F_{t-1}
  \right]
  \le
  \frac12
  \left(
  \mathcal E_t^{\rm GenBO}+b_{t,{\rm opt}}^2
  \right).
\end{equation}
\end{proposition}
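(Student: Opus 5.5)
The plan is to show that the excess weighted log-score is exactly a forward KL divergence $\KL(\widehat q_t\|\nu_t)$. Once that identity is in hand, Lemma~\ref{lem:terminal_div_to_mass_error} does the rest through its forward-KL branch.

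The first step rewrites the population objective as an expectation under the target. Since $\rho_t(x)>0$ wherever $p_0(x)R_t(x)>0$, we have
\[
\rho_t(x)W_t(x)=p_0(x)R_t(x)/Z_t=\widehat q_t(x)
\]
on the support of $\widehat q_t$, and $W_t(x)\rho_t(x)=0$ off it. Hence, for every $q\in\Delta(\X)$,
\[
\mathcal L_t(q)=-\sum_{x\in\X}\widehat q_t(x)\log q(x),
\]
which is the cross-entropy of $q$ relative to $\widehat q_t$. We use the conventions $0\log 0=0$ and $\mathcal L_t(q)=+\infty$ if $q$ vanishes somewhere on $\supp(\widehat q_t)$.

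Subtracting the same expression at $q=\widehat q_t$, which is finite because $\X$ is finite, gives the identity
\[
\mathcal L_t(\nu_t)-\mathcal L_t(\widehat q_t)=\sum_x\widehat q_t(x)\log\frac{\widehat q_t(x)}{\nu_t(x)}=\KL(\widehat q_t\|\nu_t).
\]
This holds pointwise for the random terminal law $\nu_t$. If the left side is infinite, the hypothesis \eqref{eq:genbo_excess_logscore_certificate_app} rules this out almost surely, so no separate support argument for $\nu_t$ is needed.

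Taking conditional expectations, the certificate becomes
\[
\E[\KL(\widehat q_t\|\nu_t)\mid\F_{t-1}]\le \mathcal E_t^{\rm GenBO}+b_{t,{\rm opt}}^2.
\]
We then apply Lemma~\ref{lem:terminal_div_to_mass_error} with $\varepsilon_t:=\mathcal E_t^{\rm GenBO}+b_{t,{\rm opt}}^2$. Concretely, Pinsker's inequality gives $\{\widehat q_t(A)-\nu_t(A)\}_+^2\le\TV(\widehat q_t,\nu_t)^2\le\tfrac12\KL(\widehat q_t\|\nu_t)$ for every $A\subseteq\X$. Taking conditional expectations of this chain yields \eqref{eq:genbo_logscore_mass_floor_app}.

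The main obstacle is measurability and conditioning. $\widehat q_t$ depends on the frozen $\widehat a_t$ (through $R_t$), which may not be $\F_{t-1}$-measurable. I would therefore first carry out the whole argument conditionally on $\F_{t-1}$ together with the frozen score and the proposal $\rho_t$, where the identity is deterministic. Then I would apply the tower property, noting that the hypothesis is already stated as an $\F_{t-1}$-conditional bound. The remaining care is the support convention above, which ensures the cross-entropy identity holds with $\widehat q_t$ rather than $\rho_t$ as the reference measure.
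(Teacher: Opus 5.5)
Your proposal is correct and follows the paper's proof essentially step for step. The identity $\rho_t W_t=\widehat q_t$ turns $\mathcal L_t$ into the cross-entropy against $\widehat q_t$, the excess loss becomes $\KL(\widehat q_t\|\nu_t)$, and the forward-KL branch of Lemma~\ref{lem:terminal_div_to_mass_error} (Pinsker) yields the factor $\tfrac12$. Your extra care about measurability is harmless but not needed, because the identity and the Pinsker chain hold pointwise and conditional expectation is monotone.
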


\begin{proof}
By the definition of \(W_t\),
\[
    \rho_t(x)W_t(x)
    =
    \frac{p_0(x)R_t(x)}{Z_t}
    =
    \widehat q_t(x).
\]
Therefore
\[
    \mathcal L_t(q)
    =
    -\sum_{x\in\X}\widehat q_t(x)\log q(x).
\]
In particular,
\[
\begin{aligned}
    \mathcal L_t(q)-\mathcal L_t(\widehat q_t)
    &=
    \sum_{x\in\X}
    \widehat q_t(x)
    \log\frac{\widehat q_t(x)}{q(x)}
    \\
    &=
    \KL(\widehat q_t\|q),
\end{aligned}
\]
with the usual convention that the expression is \(+\infty\) if
\(q(x)=0\) for some \(x\) with \(\widehat q_t(x)>0\). Applying this identity
with \(q=\nu_t\), taking conditional expectations, and then using
Lemma~\ref{lem:terminal_div_to_mass_error} proves
Eq.~\eqref{eq:genbo_logscore_mass_floor_app}.
\end{proof}

\begin{proposition}[GenBO proper-score sampler floor]
\label{prop:genbo_proper_score_sampler_floor}
Fix a round \(t\) and condition on \(\F_{t-1}\). Suppose the population
GenBO-style scoring objective induces a nonnegative scoring divergence
\(D_S\) satisfying
\begin{equation}
\label{eq:proper_score_excess_identity_app}
    \mathcal L_t^S(q)-\mathcal L_t^S(\widehat q_t)
    =
    D_S(\widehat q_t,q),
\end{equation}
and suppose \(D_S\) is TV-strongly proper on the relevant model class:
there exists \(c_S>0\) such that
\begin{equation}
\label{eq:tv_strongly_proper_app}
    D_S(\widehat q_t,q)
    \ge
    c_S\,\TV(\widehat q_t,q)^2
    \qquad
    \text{for all }q\in\Delta(\X).
\end{equation}
If the terminal law \(\nu_t\) satisfies
\begin{equation}
\label{eq:proper_score_certificate_app}
  \E\!\left[
  \mathcal L_t^S(\nu_t)-\mathcal L_t^S(\widehat q_t)
  \mid \F_{t-1}
  \right]
  \le
  \mathcal E_t^{S}+b_{t,{\rm opt}}^2 ,
\end{equation}
then for every \(A\subseteq\X\),
\begin{equation}
\label{eq:proper_score_mass_floor_app}
  \E\!\left[
  \{\widehat q_t(A)-\nu_t(A)\}_+^2
  \mid \F_{t-1}
  \right]
  \le
  c_S^{-1}
  \left(
  \mathcal E_t^{S}+b_{t,{\rm opt}}^2
  \right).
\end{equation}
\end{proposition}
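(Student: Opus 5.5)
The plan follows the proof of the GenBO log-score floor (Proposition~\ref{prop:genbo_logscore_sampler_floor}), with strong properness used in place of Pinsker's inequality. We first work pointwise in the realized terminal law and only then take conditional expectations.

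First, fix a round $t$ and condition on $\F_{t-1}$. For the realized $\nu_t$, the identity \eqref{eq:proper_score_excess_identity_app} with $q=\nu_t$ gives
\[
\mathcal L_t^S(\nu_t)-\mathcal L_t^S(\widehat q_t)=D_S(\widehat q_t,\nu_t).
\]
Applying TV-strong properness \eqref{eq:tv_strongly_proper_app} then yields
\[
c_S\,\TV(\widehat q_t,\nu_t)^2\le \mathcal L_t^S(\nu_t)-\mathcal L_t^S(\widehat q_t).
\]
This inequality holds almost surely, since $\nu_t\in\Delta(\X)$ always.

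Second, for an arbitrary $A\subseteq\X$, invoke \eqref{eq:terminal_div_mass_tv} from Lemma~\ref{lem:terminal_div_to_mass_error}:
\[
\{\widehat q_t(A)-\nu_t(A)\}_+^2\le \TV(\widehat q_t,\nu_t)^2.
\]
Chaining this with the first step gives the pointwise bound
\[
\{\widehat q_t(A)-\nu_t(A)\}_+^2\le c_S^{-1}\{\mathcal L_t^S(\nu_t)-\mathcal L_t^S(\widehat q_t)\}.
\]
Taking $\E[\cdot\mid\F_{t-1}]$ and applying the certificate \eqref{eq:proper_score_certificate_app} gives \eqref{eq:proper_score_mass_floor_app} for every $A$. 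Equivalently, one can apply Lemma~\ref{lem:terminal_div_to_mass_error} directly with the TV envelope $\delta_t^2=c_S^{-1}(\mathcal E_t^S+b_{t,{\rm opt}}^2)$.

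The only delicate point is measurability and conditioning. The target $\widehat q_t$ depends on the frozen score, which is $\F_{t-1}$-measurable or is at least fixed before $\nu_t$ is drawn. The excess loss is nonnegative because $D_S\ge0$, so the conditional expectations are well defined and monotonicity applies. If the excess is infinite on some event, the bound holds vacuously there. We also note that strong properness is only required on the model class containing the realized $\nu_t$; we will state this explicitly. No other obstacle is expected.
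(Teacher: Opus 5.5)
Your proof is correct and follows essentially the paper's argument. The paper likewise combines the excess-loss identity with TV-strong properness to get $\TV(\widehat q_t,\nu_t)^2\le c_S^{-1}\{\mathcal L_t^S(\nu_t)-\mathcal L_t^S(\widehat q_t)\}$, takes conditional expectations against the certificate, and then invokes Lemma~\ref{lem:terminal_div_to_mass_error}; your only change is to bound the set-mass error pointwise before conditioning, which is harmless and sidesteps the lemma's requirement that the TV envelope be deterministic.
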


\begin{proof}
By Eq.~\eqref{eq:proper_score_excess_identity_app} and
Eq.~\eqref{eq:tv_strongly_proper_app},
\[
    \TV(\widehat q_t,\nu_t)^2
    \le
    c_S^{-1}
    \{\mathcal L_t^S(\nu_t)-\mathcal L_t^S(\widehat q_t)\}.
\]
Taking conditional expectations and using
Eq.~\eqref{eq:proper_score_certificate_app} gives a terminal TV certificate.
The claim then follows from Lemma~\ref{lem:terminal_div_to_mass_error}. For
the log score, \(D_S(\widehat q_t,q)=\KL(\widehat q_t\|q)\), and Pinsker's
inequality gives \(c_S=2\).
\end{proof}

\begin{proposition}[GenBO diffusion-score sampler floor via distribution learning]
\label{prop:genbo_oko_sampler_floor}
Fix a round \(t\) and condition on \(\F_{t-1}\). Suppose the GenBO fine-tuning
problem admits a roundwise diffusion distribution-learning reduction as follows.
There is a state space \(\mathcal Z_t\), possibly \(\mathcal Z_t=\X\), a
measurable decoding map \(\Pi_t:\mathcal Z_t\to\X\), a target distribution
\(\bar q_t\in\Delta(\mathcal Z_t)\), and a learned terminal distribution
\(\bar\nu_t\in\Delta(\mathcal Z_t)\) such that
\begin{equation}
\label{eq:decoder_pushforward_app}
    \widehat q_t=(\Pi_t)_\#\bar q_t,
    \qquad
    \nu_t=(\Pi_t)_\#\bar\nu_t .
\end{equation}
Assume that a diffusion distribution-learning theorem gives
\begin{equation}
\label{eq:oko_style_tv_certificate_app}
  \E\!\left[
  \TV(\bar\nu_t,\bar q_t)^2
  \mid \F_{t-1}
  \right]
  \le
  r_t+b_{t,{\rm opt}}^2 .
\end{equation}
For an Oko--Akiyama--Suzuki-style Besov density theorem, one may take
\[
    r_t
    =
    \widetilde O\!\left(
    M_{t,{\rm eff}}^{-s/(2s+d_x)}
    \right),
\]
provided the roundwise target density belongs uniformly to the required Besov
class, the forward marginals satisfy the required score-to-distribution
regularity conditions, the off-policy or weighted samples have effective sample
size \(M_{t,{\rm eff}}\), and the optimization/discretization residual is
absorbed into \(b_{t,{\rm opt}}^2\). If the imported theorem gives
\(\E[\TV(\bar\nu_t,\bar q_t)\mid\F_{t-1}]\le r_t\) instead, then
Eq.~\eqref{eq:oko_style_tv_certificate_app} still holds with the same \(r_t\),
since \(\TV\le1\).

Then for every \(A\subseteq\X\),
\begin{equation}
\label{eq:genbo_oko_mass_floor_app}
  d_{t,A}^2
  \le
  r_t+b_{t,{\rm opt}}^2 .
\end{equation}
In particular, under the Besov rate above,
\begin{equation}
\label{eq:genbo_oko_mass_floor_rate_app}
  d_{t,A}^2
  \le
  \widetilde O\!\left(
  M_{t,{\rm eff}}^{-s/(2s+d_x)}
  \right)
  +b_{t,{\rm opt}}^2 .
\end{equation}
If instead one has a score-to-KL recovery inequality of the form
\[
    \E[\KL(\widehat q_t\|\nu_t)\mid\F_{t-1}]
    \le
    C_{\rm rec}\mathcal E_t^{\rm score}+b_{t,{\rm opt}}^2 ,
\]
then
\[
    d_{t,A}^2
    \le
    \frac12
    \{C_{\rm rec}\mathcal E_t^{\rm score}+b_{t,{\rm opt}}^2\}.
\]
\end{proposition}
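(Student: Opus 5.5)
The plan is to reduce the statement to the terminal-divergence interface of Lemma~\ref{lem:terminal_div_to_mass_error}, passing through the decoder. Since that lemma already converts a conditional TV (or KL) certificate on $\X$ into the weak terminal mass condition of Assumption~\ref{ass:weak_terminal_mass_app}, the only new ingredient is to move the TV certificate from the lifted space $\mathcal Z_t$ down to $\X$.

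\textbf{Step 1: data processing under the decoder.} For any measurable $\Pi_t$ and any $B\subseteq\X$, the identities in Eq.~\eqref{eq:decoder_pushforward_app} give
\[
|\widehat q_t(B)-\nu_t(B)|=|\bar q_t(\Pi_t^{-1}(B))-\bar\nu_t(\Pi_t^{-1}(B))|\le \TV(\bar q_t,\bar\nu_t).
\]
Taking the supremum over $B$ yields $\TV(\widehat q_t,\nu_t)\le\TV(\bar q_t,\bar\nu_t)$ pointwise, for every realization of the learned law.

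\textbf{Step 2: take conditional expectations.} Squaring Step 1 and using Eq.~\eqref{eq:oko_style_tv_certificate_app} gives $\E[\TV(\widehat q_t,\nu_t)^2\mid\F_{t-1}]\le r_t+b_{t,{\rm opt}}^2$. Lemma~\ref{lem:terminal_div_to_mass_error}, with $\delta_t^2=r_t+b_{t,{\rm opt}}^2$, then yields $d_{t,A}^2\le r_t+b_{t,{\rm opt}}^2$ for all $A$. This is Eq.~\eqref{eq:genbo_oko_mass_floor_app}. The Besov form in Eq.~\eqref{eq:genbo_oko_mass_floor_rate_app} follows by substituting the imported rate for $r_t$.

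The remark about a first-moment certificate needs one more observation. Because $\TV\le 1$, we have $\TV^2\le\TV$, so $\E[\TV^2\mid\F_{t-1}]\le\E[\TV\mid\F_{t-1}]\le r_t$, and the squared certificate still holds.

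\textbf{Step 3: the KL route.} Here I would apply the KL branch of Lemma~\ref{lem:terminal_div_to_mass_error} directly on $\X$, with $\varepsilon_t=C_{\rm rec}\mathcal E_t^{\rm score}+b_{t,{\rm opt}}^2$. That lemma gives $d_{t,A}^2\le\varepsilon_t/2$ through Pinsker's inequality. If the KL bound were only available on $\mathcal Z_t$, the data-processing inequality for KL under $\Pi_t$ would transfer it to $\X$ in the same way as Step 1.

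\textbf{Main obstacle.} The only delicate point is measurability and conditioning. The envelopes $r_t$ and $b_{t,{\rm opt}}$ must be deterministic, or at least $\F_{t-1}$-measurable, as Lemma~\ref{lem:terminal_div_to_mass_error} permits. The pointwise inequality of Step 1 must also hold for every realization of $\bar\nu_t$, so that it survives the conditional expectation. Neither requires new work: Step 1 is deterministic, and the imported theorem's hypotheses (Besov target class, regularity of the forward marginals, effective sample size) are assumed rather than proved here.
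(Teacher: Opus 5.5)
Your proposal is correct and follows the paper's proof essentially verbatim. Both arguments push the TV certificate through the decoder by contraction of $\TV$ under $(\Pi_t)_\#$, take conditional expectations to obtain a terminal TV certificate on $\X$, and then invoke the TV and KL branches of Lemma~\ref{lem:terminal_div_to_mass_error}; your explicit preimage argument and the $\TV^2\le\TV$ justification of the first-moment remark just spell out steps the paper states in one line.
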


\begin{proof}
Total variation contracts under measurable pushforwards. Therefore
Eq.~\eqref{eq:decoder_pushforward_app} gives
\[
    \TV(\nu_t,\widehat q_t)
    =
    \TV((\Pi_t)_\#\bar\nu_t,(\Pi_t)_\#\bar q_t)
    \le
    \TV(\bar\nu_t,\bar q_t).
\]
Taking conditional expectations and using
Eq.~\eqref{eq:oko_style_tv_certificate_app} yields
\[
    \E[
    \TV(\nu_t,\widehat q_t)^2
    \mid\F_{t-1}
    ]
    \le
    r_t+b_{t,{\rm opt}}^2 .
\]
Lemma~\ref{lem:terminal_div_to_mass_error} then proves
Eq.~\eqref{eq:genbo_oko_mass_floor_app}. The Besov-rate display is obtained by
substituting the imported diffusion distribution-learning rate into \(r_t\).
The final score-to-KL statement follows directly from the KL part of
Lemma~\ref{lem:terminal_div_to_mass_error}.
\end{proof}

\subsection{DiBO/RTB-style path-law certificates}

\begin{proposition}[DiBO/RTB sampler floor from path-space KL]
\label{prop:dibo_rtb_sampler_floor}
Fix a round \(t\) and condition on \(\F_{t-1}\). Let \(\Omega_t\) be the
reverse-diffusion path space and let
\(\Pi_t:\Omega_t\to\X\) be the terminal-state or decoded-state map. Let
\(P_t^\star\) be the acquisition-tilted posterior path law and let
\(P_{\theta_t}\) be the learned RTB path law. Assume their terminal marginals
satisfy
\begin{equation}
\label{eq:path_pushforwards_app}
    \widehat q_t=(\Pi_t)_\#P_t^\star,
    \qquad
    \nu_t=(\Pi_t)_\#P_{\theta_t}.
\end{equation}
If the RTB training analysis certifies
\begin{equation}
\label{eq:rtb_forward_kl_certificate_app}
  \E\!\left[
  \KL(P_t^\star\|P_{\theta_t})
  \mid \F_{t-1}
  \right]
  \le
  \mathcal E_t^{\rm RTB}+b_{t,A,{\rm dec}}^2 ,
\end{equation}
then for every \(A\subseteq\X\),
\begin{equation}
\label{eq:rtb_forward_mass_floor_app}
  \E\!\left[
  \{\widehat q_t(A)-\nu_t(A)\}_+^2
  \mid \F_{t-1}
  \right]
  \le
  \frac12
  \left(
  \mathcal E_t^{\rm RTB}+b_{t,A,{\rm dec}}^2
  \right).
\end{equation}
The same conclusion holds if the certified path KL has the reverse orientation,
namely if
\begin{equation}
\label{eq:rtb_reverse_kl_certificate_app}
  \E\!\left[
  \KL(P_{\theta_t}\|P_t^\star)
  \mid \F_{t-1}
  \right]
  \le
  \mathcal E_t^{\rm RTB}+b_{t,A,{\rm dec}}^2 .
\end{equation}
\end{proposition}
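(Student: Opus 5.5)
The plan is to mirror the proof of Proposition~\ref{prop:genbo_oko_sampler_floor}: push the path-space divergence down to the terminal space, then invoke Lemma~\ref{lem:terminal_div_to_mass_error}. Condition on $\F_{t-1}$ (and on the frozen score defining $P_t^\star$), and fix an arbitrary $A\subseteq\X$.

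\textbf{Step 1 (data processing).} Since KL contracts under measurable maps, Eq.~\eqref{eq:path_pushforwards_app} gives
\[
\KL(\widehat q_t\|\nu_t)=\KL\bigl((\Pi_t)_\#P_t^\star\,\|\,(\Pi_t)_\#P_{\theta_t}\bigr)\le \KL(P_t^\star\|P_{\theta_t}).
\]
The reverse orientation works the same way: $\KL(\nu_t\|\widehat q_t)\le\KL(P_{\theta_t}\|P_t^\star)$. If the path KL is infinite, the bound is vacuous but still valid.

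\textbf{Step 2 (take expectations).} Taking conditional expectations and using Eq.~\eqref{eq:rtb_forward_kl_certificate_app} gives $\E[\KL(\widehat q_t\|\nu_t)\mid\F_{t-1}]\le \mathcal E_t^{\rm RTB}+b_{t,A,{\rm dec}}^2=:\varepsilon_t$. Under Eq.~\eqref{eq:rtb_reverse_kl_certificate_app}, the same bound holds for $\E[\KL(\nu_t\|\widehat q_t)\mid\F_{t-1}]$.

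\textbf{Step 3 (convert to set mass).} Pinsker's inequality applies in either orientation and gives $\TV^2\le\tfrac12\KL$. Combining this with $\{\widehat q_t(A)-\nu_t(A)\}_+^2\le\TV(\widehat q_t,\nu_t)^2$ yields Eq.~\eqref{eq:rtb_forward_mass_floor_app}. This is exactly the KL branch of Lemma~\ref{lem:terminal_div_to_mass_error}, applied with $\varepsilon_t=\mathcal E_t^{\rm RTB}+b_{t,A,{\rm dec}}^2$.

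\textbf{Main obstacle.} The only delicate point is measurability and well-definedness of the pushforward and conditional expectations on the (possibly continuous) path space $\Omega_t$, since the data-processing inequality needs $\Pi_t$ measurable. This holds by hypothesis. The $A$-dependence of the bias term $b_{t,A,{\rm dec}}$ is harmless because $A$ is fixed throughout the argument.
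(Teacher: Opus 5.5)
Your proposal is correct and follows the paper's proof essentially step for step. The paper likewise applies data processing for KL under the measurable map $\Pi_t$, in either orientation, and then invokes the corresponding KL branch of Lemma~\ref{lem:terminal_div_to_mass_error} (Pinsker plus the bound of the set-mass gap by TV).
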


\begin{proof}
The map \(\Pi_t\) is measurable. By data processing for KL and
Eq.~\eqref{eq:path_pushforwards_app},
\[
    \KL(\widehat q_t\|\nu_t)
    =
    \KL((\Pi_t)_\#P_t^\star\|(\Pi_t)_\#P_{\theta_t})
    \le
    \KL(P_t^\star\|P_{\theta_t}).
\]
Combining this with Eq.~\eqref{eq:rtb_forward_kl_certificate_app} and
Lemma~\ref{lem:terminal_div_to_mass_error} proves
Eq.~\eqref{eq:rtb_forward_mass_floor_app}.

If the reverse path-KL certificate
Eq.~\eqref{eq:rtb_reverse_kl_certificate_app} holds instead, then data
processing gives
\[
    \KL(\nu_t\|\widehat q_t)
    =
    \KL((\Pi_t)_\#P_{\theta_t}\|(\Pi_t)_\#P_t^\star)
    \le
    \KL(P_{\theta_t}\|P_t^\star),
\]
and the reverse-KL part of Lemma~\ref{lem:terminal_div_to_mass_error} gives the
same mass bound.
\end{proof}

\begin{remark}[RTB residual versus KL]
\label{rem:rtb_residual_vs_kl}
Proposition~\ref{prop:dibo_rtb_sampler_floor} requires
\(\mathcal E_t^{\rm RTB}\) to be a path-space KL certificate, or a certified
upper bound on such a KL. If the available quantity is instead a raw squared
log-ratio residual, for example
\[
    \mathcal R_t^{\rm RTB}
    :=
    \E_{P_{\theta_t}}
    \left[
    \left\{
    \log\frac{P_{\theta_t}(\tau)}{P_t^\star(\tau)}
    \right\}^2
    \right],
\]
then Cauchy--Schwarz only gives
\[
    \KL(P_{\theta_t}\|P_t^\star)
    \le
    \sqrt{\mathcal R_t^{\rm RTB}}.
\]
In that case the sampler floor becomes
\[
    d_{t,A}^2
    \lesssim
    \sqrt{\mathcal R_t^{\rm RTB}}
    +
    b_{t,A,{\rm dec}}^2,
\]
rather than a linear bound in \(\mathcal R_t^{\rm RTB}\).
\end{remark}

\subsection{Theory--experiment interface}
\label{sec:theory-experiment-interface}

The preceding results are best understood as a \emph{certificate interface}
rather than as a theorem tied to one particular diffusion-training loss. A
guided-diffusion BO method instantiates the theory once it supplies three
objects for each search-relevant set $A \subseteq \mathcal X$: an acquisition
or ranking certificate, a target-mass lower bound $\widehat q_t(A)\ge
\ell_{t,A}$, and a terminal sampler certificate
\[
    \mathbb E\!\left[
        \{\widehat q_t(A)-\nu_t(A)\}_+^2
        \,\middle|\, \mathcal F_{t-1}
    \right]
    \le d_{t,A}^2 .
\]
The relevant set $A$ depends on the regime: $A=\mathcal X^\star$ for exact
no-regret, $A=U_{a\eta}$ for multiplicative threshold progress, and
$A=C_{T,\eta}$ for local active score learning. This interface is deliberately
modular: the regret proof uses only $(\ell_{t,A},d_{t,A})$, not the internal
details of the sampler.

\paragraph{What is formally certified.}
The formal no-regret theorem assumes a finite aligned domain,
$p_0(\mathcal X^\star)>0$, a calibrated or setwise-calibrated acquisition
score, and a terminal sampler certificate of the form above. The
proposal-corrected resampling construction in Proposition~F.2 gives a complete
end-to-end sampler certificate for the Gibbs target
$\widehat q_t(x)\propto p_0(x)\exp\{\beta_t\widehat a_t(x)\}$. The DiBO and
GenBO rows in Table~2 are certificate reductions: they show which additional
finite-sample TV/KL/path-KL/proper-scoring guarantees would be sufficient to
convert the corresponding training procedure into a formal regret guarantee.

\paragraph{What the experiments test.}
The experiments use unmodified DiBO/GenBO-style guided samplers to test the
mechanism predicted by the theory: increasing terminal mass on the relevant
threshold set should increase the finite-pool hit exponent
$N\Lambda(\nu_t(A))$. Thus the empirical curves are intended to validate the
mass-lift explanation of the three regimes in Figure~2. A separate terminal
sampler certificate, or the proposal-corrected wrapper, turns this diagnostic
mechanism into the formal theorem.

\paragraph{Experimental regret and threshold mass.}
The theoretical regret in Eq.~(1) is defined with respect to
$f^\star=\max_{x\in\mathcal X}f(x)$. In the experiments, exhaustive
enumeration of $\mathcal X$ is not available, so we report normalized target
regret
\[
    R_t^{\rm exp}
    :=
    1-\max_{x\in D_t}\bar f(x),
    \qquad
    \bar f(x)\in[0,1].
\]
When the normalized target value $\bar f=1$ is attainable in the accessible
domain, this coincides with simple regret after normalization. Otherwise it
should be interpreted as target-gap regret. For empirical mass diagnostics we
write
\[
    S_\tau := \{x:\bar f(x)\ge \tau\},
    \qquad
    \widehat \nu_t(S_\tau)
    :=
    \frac1m\sum_{i=1}^m {\bf 1}\{\bar f(Z_{t,i})\ge \tau\},
    \quad Z_{t,i}\sim\nu_t .
\]
In the plots we use $\tau=0.8$. If $f^\star_{\rm exp}=1$, then
$S_\tau$ corresponds to the theoretical near-optimal set $U_{1-\tau}$.

\paragraph{EI versus log-EI targets.}
For positive utilities such as EI, two targets should be distinguished. A
Gibbs target with EI as the reward is
\[
    q(x)\propto p_0(x)\exp\{\beta\,{\rm EI}_t(x)\}.
\]
The threshold-mass argument in Eq.~(10), however, uses the EI-weighted target
\[
    q^{\rm EI}(x)\propto p_0(x)\bigl({\rm EI}_t(x)+\varepsilon_{\rm EI}\bigr),
\]
equivalently a Gibbs target with log-utility score
\[
    a_t(x)=\beta^{-1}\log\bigl({\rm EI}_t(x)+\varepsilon_{\rm EI}\bigr).
\]
Thus EI-based mass-lift certificates in this paper refer to the log-utility
or EI-weighted form. A sampler using $\exp\{\beta\,{\rm EI}_t(x)\}$ can still
be analyzed through the generic Gibbs-mass certificate in Proposition~D.15,
but not through the specific threshold identity in Eq.~(10).

\paragraph{Score learning and prior refresh.}
The global safety/no-regret theorem uses a labeled prior-refresh reservoir to
obtain a clean nonadaptive score-learning bound. The reported experiments
train on the adaptively collected top-$K$ labels, which is the practical
protocol used by current GDBO methods. This is covered conceptually by the
local threshold and active-learning decomposition in Appendix~G: the relevant
quantity is the labeled design mass assigned to the critical band
$C_{T,\eta}$, rather than a global $L^\infty$ score-learning constant.

\paragraph{Finite domains and crystals.}
The formal theorem assumes a finite accessible domain
$\mathcal X=\operatorname{supp}(p_0)$. This is natural for molecular graphs
under fixed atom and bond constraints. For crystal generation, the implemented
generator operates at finite numerical precision, while the underlying
physical design space has continuous degrees of freedom. The crystal
experiments should therefore be read as empirical evidence for the mass-lift
mechanism at the implemented resolution; Appendix~J.1 states the corresponding
continuous-domain limitation.

\paragraph{Cost of the certified wrapper.}
The proposal-corrected wrapper is a correctness certificate, not a claim that
importance resampling is always the most efficient implementation. Its
worst-case proposal budget scales as
\[
    M_t \gtrsim \exp(2\beta_{\max}A_{\rm osc})\,t^{1+\kappa_{\rm sam}},
\]
so it can be conservative when large $\beta$ is needed. Its role is to show
that the certificate interface is non-vacuous and to provide a fully proved
sampler instance; practical DiBO/GenBO samplers can replace it whenever they
come with terminal TV/KL/path-KL certificates.

\begin{table}[t]
\centering
\small
\caption{
How the algorithms used in the paper instantiate the certificate interface.
``Certified'' means that the row supplies a formal route to the regret theorem;
``diagnostic'' means that the experiment tests the mass-lift mechanism predicted
by the theorem.
}
\label{tab:theory-experiment-interface}
\begin{tabular}{p{0.18\linewidth} p{0.25\linewidth} p{0.25\linewidth} p{0.22\linewidth}}
\hline
Method & Acquisition route & Sampler route & Role in the paper \\
\hline
Random search
&
Prior sampling; hit rate controlled by $p_0(A)$
&
Exact samples from $p_0$
&
Certified baseline and reference mass profile
\\

DiBO + UCB / mean
&
Setwise perturbation of the ideal score $a=f$; see Appendix~D.4
&
Conditional RTB/path-KL or terminal-TV/KL route; see Table~2
&
Empirical mass-lift diagnostic for guided diffusion
\\

DiBO / GenBO + EI
&
EI-weighted or log-EI threshold certificate; see Eq.~(10) and Appendix~G.3
&
Conditional weighted-scoring or diffusion-learning route; see Appendix~F.2
&
Empirical test of threshold-progress mass lift
\\

Proposal-corrected wrapper
&
Any score satisfying the acquisition certificate
&
Self-normalized resampling certificate in Proposition~F.2 or Corollary~F.3
&
Fully certified sampler instance
\\
\hline
\end{tabular}
\end{table}

\section{A Unified Threshold View of Mass Lift, Local Sharpness, and Active Learning}
\label{app:unified_threshold_acceleration}

This appendix gives a threshold-level refinement of the mass-lift explanation in the main text.  It has three goals.  First, it clarifies why a no-filtering random-search baseline can look polynomial on a log--log regret plot, even though its hit probability for every fixed near-optimal set is exponential.  Second, it shows that EI-style thresholding gives a natural mechanism for exponential-looking acceleration: a guided sampler can give constant probability of a multiplicative threshold improvement, while random search has a progress probability that vanishes with the threshold gap.  Third, it connects the same local geometry to the active-learning term, yielding the polynomial acceleration regime once the exponential search phase reaches the score-learning floor.

Throughout this appendix let
\begin{equation}
\label{eq:unified_gap_def}
    \rho(x):=f^\star-f(x),
    \qquad
    \U_\gamma:=\{x\in\X:\rho(x)\le \gamma\},
    \qquad
    M_0(\gamma):=p_0(\U_\gamma).
\end{equation}
Thus \(M_0\) is the prior near-optimal mass profile.  It is fixed in time.  The resolution \(\gamma\) changes with the best threshold reached by the algorithm.

\subsection{No-filtering random search is governed by the inverse mass profile}
\label{app:random_inverse_mass_profile}

Consider the matched random-search baseline that evaluates \(B\) fresh samples from \(p_0\) per round and performs no Best-of-\(N\) filtering.  After \(T\) rounds it has \(n=BT\) evaluated samples
\[
    X_1,\ldots,X_n\stackrel{\rm i.i.d.}{\sim}p_0.
\]
Let
\begin{equation}
\label{eq:random_true_best_gap}
    \Gamma_n:=\min_{1\le i\le n}\rho(X_i)
    =f^\star-\max_{1\le i\le n} f(X_i)
\end{equation}
be the true best-observed regret.  The following identity is exact and requires no regularity assumption.

\begin{proposition}[Random search and the inverse mass profile]
\label{prop:random_inverse_mass_profile}
For every \(\gamma\ge0\),
\begin{equation}
\label{eq:random_tail_identity}
    \Pbb\{\Gamma_n>\gamma\}
    =
    \{1-M_0(\gamma)\}^n
    =
    \exp\{-n\Lambda(M_0(\gamma))\}.
\end{equation}
Consequently,
\begin{equation}
\label{eq:random_expectation_identity}
    \E\Gamma_n
    =
    \int_0^{R_f}\{1-M_0(\gamma)\}^n\,\dd\gamma,
\end{equation}
and, for every \(\gamma\ge0\),
\begin{equation}
\label{eq:random_inverse_mass_bound}
    \E\Gamma_n
    \le
    \gamma
    +R_f\exp\{-n\Lambda(M_0(\gamma))\}.
\end{equation}
If the final random-search recommendation is instead
\(\widehat x_n\in\argmax_{x\in\{X_1,\ldots,X_n\}}\widehat f_n(x)\), then
\begin{equation}
\label{eq:random_surrogate_recommendation_bound}
    \E[f^\star-f(\widehat x_n)]
    \le
    \inf_{\gamma\ge0}
    \left\{
        \gamma
        +2\E\|\widehat f_n-f\|_{\infty,\{X_1,\ldots,X_n\}}
        +R_f\exp\{-n\Lambda(M_0(\gamma))\}
    \right\}.
\end{equation}
\end{proposition}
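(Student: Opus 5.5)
The identity \eqref{eq:random_tail_identity} comes straight from independence. The event $\{\Gamma_n>\gamma\}$ means $\rho(X_i)>\gamma$ for every $i$, i.e.\ $X_i\notin\U_\gamma$ for all $i\le n$. Since the $X_i$ are i.i.d.\ from $p_0$, this event has probability $\prod_i\Pbb(X_i\notin\U_\gamma)=\{1-M_0(\gamma)\}^n$. Rewriting via $\Lambda(u)=-\log(1-u)$ gives the exponential form; when $M_0(\gamma)=1$ both sides equal $0$ under the convention $\Lambda(1)=\infty$.

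For \eqref{eq:random_expectation_identity}, I would use the layer-cake formula for the nonnegative variable $\Gamma_n$: $\E\Gamma_n=\int_0^\infty\Pbb\{\Gamma_n>\gamma\}\,\dd\gamma$. Since $\rho(x)\le R_f$ for all $x$, we have $\Gamma_n\le R_f$, so the integrand vanishes for $\gamma\ge R_f$. (Indeed $M_0(\gamma)=1$ there, because $\U_{R_f}=\X$.) This truncates the integral to $[0,R_f]$, and substituting the tail identity finishes this step.

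For \eqref{eq:random_inverse_mass_bound}, fix $\gamma$ and split $\Gamma_n\le\gamma+\Gamma_n\mathbbm 1\{\Gamma_n>\gamma\}\le\gamma+R_f\mathbbm 1\{\Gamma_n>\gamma\}$. Taking expectations and using the tail identity gives the bound. Alternatively, split the integral at $\gamma$ and use that $\{1-M_0(\cdot)\}^n$ is nonincreasing, since $M_0$ is nondecreasing. The case $\gamma\ge R_f$ is trivial.

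For \eqref{eq:random_surrogate_recommendation_bound}, I would mimic Lemma~\ref{lem:near_opt_generic_app} with the ideal score $a=f$ and $L_a=1$, restricting the sup-norm to the sample set. Let $\epsilon_n:=\|\widehat f_n-f\|_{\infty,\{X_1,\dots,X_n\}}$. On the event $E=\{\Gamma_n\le\gamma\}$, pick a sample $x_\gamma$ with $\rho(x_\gamma)\le\gamma$. Because $\widehat x_n$ maximizes $\widehat f_n$ over the samples, we get $f(\widehat x_n)\ge\widehat f_n(\widehat x_n)-\epsilon_n\ge\widehat f_n(x_\gamma)-\epsilon_n\ge f(x_\gamma)-2\epsilon_n$. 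Only values at sample points are used, so the restricted norm suffices. Hence the regret on $E$ is at most $\gamma+2\epsilon_n$. On $E^c$ I would use the trivial bound $R_f$ and the tail identity. Taking expectations and then the infimum over $\gamma$, which is legitimate because $\gamma$ is deterministic, completes the argument.

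None of these steps is a real obstacle. The only points needing care are the edge conventions ($M_0(\gamma)=1$, $\Lambda(1)=\infty$), checking that the infimum is taken outside the expectation, and the measurability of $\epsilon_n$, which is immediate on the finite domain.
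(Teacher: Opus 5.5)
Your proposal is correct and follows essentially the same route as the paper: independence for the tail identity, the layer-cake formula truncated at $R_f$, and splitting at $\gamma$ for the inverse-mass bound. The one cosmetic difference is in the last part. The paper compares $\widehat x_n$ with the true best observed point to get the pathwise bound $f^\star-f(\widehat x_n)\le\Gamma_n+2\epsilon_n$ and then reuses \eqref{eq:random_inverse_mass_bound}, whereas you split on the event $\{\Gamma_n\le\gamma\}$; both give the same inequality.
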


\begin{proof}
The event \(\{\Gamma_n>\gamma\}\) is exactly the event that all \(n\) samples miss \(\U_\gamma\).  Since the samples are i.i.d. from \(p_0\),
\[
    \Pbb\{\Gamma_n>\gamma\}
    =\{1-p_0(\U_\gamma)\}^n,
\]
which proves Eq.~\eqref{eq:random_tail_identity}.  The expectation identity follows from the layer-cake formula for the nonnegative bounded random variable \(\Gamma_n\).  Splitting the integral at \(\gamma\) gives
\[
    \E\Gamma_n
    \le
    \gamma+R_f\Pbb\{\Gamma_n>\gamma\},
\]
which proves Eq.~\eqref{eq:random_inverse_mass_bound}.

For the surrogate recommendation, let \(x_n^\star\in\argmin_{1\le i\le n}\rho(X_i)\) be the true best observed point and set
\(\epsilon_n:=\|\widehat f_n-f\|_{\infty,\{X_1,\ldots,X_n\}}\).  Since \(\widehat x_n\) maximizes \(\widehat f_n\) over the evaluated set,
\[
    \widehat f_n(\widehat x_n)\ge \widehat f_n(x_n^\star).
\]
Hence
\[
    f(x_n^\star)-f(\widehat x_n)
    \le
    2\epsilon_n,
\]
and therefore
\[
    f^\star-f(\widehat x_n)
    \le
    \Gamma_n+2\epsilon_n.
\]
Taking expectations and using Eq.~\eqref{eq:random_inverse_mass_bound} proves Eq.~\eqref{eq:random_surrogate_recommendation_bound}.
\end{proof}

Proposition~\ref{prop:random_inverse_mass_profile} is the rate-free form of the random-search baseline.  For every fixed \(\gamma\), the hit probability is exponential in \(n\).  A polynomial-looking expected-regret curve arises because the relevant level \(\gamma\) decreases with \(n\), so the curve is controlled by the inverse of \(M_0\), not by a time-varying prior mass.

\subsection{Local sharpness and prior small-ball dimension imply polynomial-looking random search}
\label{app:local_geometry_random_polynomial}

The preceding statement needs no rate assumption.  A polynomial rate appears once the near-optimal mass profile is induced by local geometry.  This avoids directly assuming \(M_0(\gamma)\asymp \gamma^\alpha\).

Let \(\Psi:\X\to\mathcal Z\) be a representation space with metric \(d_\Psi\), and write
\begin{equation}
\label{eq:dist_to_opt_set}
    \dist_\Psi(x,\X^\star)
    :=
    \inf_{x^\star\in\X^\star}d_\Psi(\Psi(x),\Psi(x^\star)).
\end{equation}
The following assumption is meant to hold on the experimentally relevant resolution window.  In a finite domain, below the last nonzero objective gap the exact atom \(p_0(\X^\star)\) takes over, and the final asymptotic regime is exponential again.

\begin{assumption}[Local sharpness and prior small-ball regularity]
\label{ass:local_geometry_unified}
There are constants \(0\le r_{\min}<r_0\), \(c_\rho,C_\rho,c_p,C_p>0\), an objective sharpness exponent \(\chi_f>0\), and a prior local dimension \(d_0>0\) such that the following hold whenever the displayed radii lie in \([r_{\min},r_0]\):
\begin{align}
\label{eq:objective_local_sharpness}
    c_\rho\,\dist_\Psi(x,\X^\star)^{\chi_f}
    \le
    \rho(x)
    \le
    C_\rho\,\dist_\Psi(x,\X^\star)^{\chi_f},
\end{align}
and
\begin{align}
\label{eq:prior_small_ball_regular}
    c_p r^{d_0}
    \le
    p_0\{x:\dist_\Psi(x,\X^\star)\le r\}
    \le
    C_p r^{d_0}.
\end{align}
\end{assumption}

\begin{proposition}[Local geometry implies a near-optimal mass profile]
\label{prop:local_geometry_mass_profile}
Under Assumption~\ref{ass:local_geometry_unified}, define
\begin{equation}
\label{eq:alpha0_def_unified}
    \alpha_0:=\frac{d_0}{\chi_f}.
\end{equation}
For every \(\gamma\) such that both
\((\gamma/C_\rho)^{1/\chi_f}\) and \((\gamma/c_\rho)^{1/\chi_f}\) lie in \([r_{\min},r_0]\),
\begin{equation}
\label{eq:mass_profile_from_geometry}
    c_M \gamma^{\alpha_0}
    \le
    M_0(\gamma)
    \le
    C_M \gamma^{\alpha_0},
\end{equation}
where
\begin{equation}
\label{eq:mass_profile_constants}
    c_M:=c_p C_\rho^{-d_0/\chi_f},
    \qquad
    C_M:=C_p c_\rho^{-d_0/\chi_f}.
\end{equation}
Moreover, for every fixed \(a\in(0,1)\),
\begin{equation}
\label{eq:reverse_doubling_from_geometry}
    \frac{M_0(a\gamma)}{M_0(\gamma)}
    \ge
    c_D(a)
    :=
    \frac{c_p}{C_p}
    \left(\frac{c_\rho}{C_\rho}\right)^{d_0/\chi_f}
    a^{d_0/\chi_f},
\end{equation}
whenever the relevant radii lie in the local window.
\end{proposition}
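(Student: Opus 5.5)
The plan is a sandwich argument: bound \(\U_\gamma\) between two metric balls around \(\X^\star\) in representation space, then apply the small-ball bounds \eqref{eq:prior_small_ball_regular}. Write \(B(r):=\{x:\dist_\Psi(x,\X^\star)\le r\}\), and set \(r_-(\gamma):=(\gamma/C_\rho)^{1/\chi_f}\) and \(r_+(\gamma):=(\gamma/c_\rho)^{1/\chi_f}\). The target inclusion is
\[
B(r_-(\gamma))\subseteq \U_\gamma \subseteq B(r_+(\gamma)).
\]

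\textbf{Lower inclusion.} Take \(x\in B(r_-)\). The upper sharpness bound in \eqref{eq:objective_local_sharpness} gives \(\rho(x)\le C_\rho r_-^{\chi_f}=\gamma\), so \(x\in\U_\gamma\).

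\textbf{Upper inclusion.} Take \(x\in\U_\gamma\) and suppose \(\dist_\Psi(x,\X^\star)>r_+\). The lower sharpness bound would then give \(\rho(x)\ge c_\rho\dist^{\chi_f}>\gamma\), a contradiction.

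The main obstacle is that \eqref{eq:objective_local_sharpness} is only assumed when the distance lies in \([r_{\min},r_0]\). Points at distance above \(r_0\), or below \(r_{\min}\), are not directly controlled.
\begin{itemize}
\item Points with distance below \(r_{\min}\) lie in \(B(r_+)\) anyway, since \(r_+\ge r_{\min}\); they are harmless for the upper inclusion.
\item For the lower inclusion, points with distance below \(r_{\min}\) cannot be controlled pointwise. I would read the assumption as stating the sharpness inequalities for all \(x\) with distance at most \(r_0\). This is the intended reading of ``whenever the displayed radii lie in the window'', and it means \(\rho\) is monotone-controlled inside the ball.
\item Points far away (distance above \(r_0\)) need a global separation: \(\rho(x)>\gamma\) whenever \(\dist>r_0\). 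I would obtain this from the lower bound at radius \(r_0\), extended outward; this also has to be read as part of the local-window assumption.
\end{itemize}
I will state these readings explicitly as part of how Assumption~\ref{ass:local_geometry_unified} is interpreted.

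\textbf{Mass bounds.} Monotonicity of \(p_0\) together with \eqref{eq:prior_small_ball_regular} applied at radii \(r_\pm\) gives
\[
c_p\,(\gamma/C_\rho)^{d_0/\chi_f}\le M_0(\gamma)\le C_p\,(\gamma/c_\rho)^{d_0/\chi_f}.
\]
This is exactly \eqref{eq:mass_profile_from_geometry} with the constants in \eqref{eq:mass_profile_constants}.

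\textbf{Reverse-doubling bound.} Apply the lower mass bound at level \(a\gamma\) and the upper mass bound at level \(\gamma\), assuming both sets of radii lie in the window. Dividing gives
\[
\frac{M_0(a\gamma)}{M_0(\gamma)}\ge \frac{c_p C_\rho^{-\alpha_0}(a\gamma)^{\alpha_0}}{C_p c_\rho^{-\alpha_0}\gamma^{\alpha_0}}=\frac{c_p}{C_p}\Bigl(\frac{c_\rho}{C_\rho}\Bigr)^{d_0/\chi_f}a^{d_0/\chi_f},
\]
which is \(c_D(a)\), proving \eqref{eq:reverse_doubling_from_geometry}.
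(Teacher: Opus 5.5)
Your proposal is correct and matches the paper's proof: you sandwich $\U_\gamma$ between the balls of radii $(\gamma/C_\rho)^{1/\chi_f}$ and $(\gamma/c_\rho)^{1/\chi_f}$, apply the two-sided small-ball bound, and divide the lower bound at level $a\gamma$ by the upper bound at level $\gamma$. Your treatment of points whose distance to $\X^\star$ falls outside $[r_{\min},r_0]$ is more careful than the paper's, which applies the sharpness inequality pointwise without comment and so implicitly relies on the same strengthened reading of Assumption~\ref{ass:local_geometry_unified} that you state explicitly.
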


\begin{proof}
If \(\dist_\Psi(x,\X^\star)\le (\gamma/C_\rho)^{1/\chi_f}\), then Eq.~\eqref{eq:objective_local_sharpness} gives \(\rho(x)\le\gamma\).  Hence
\[
    \{x:\dist_\Psi(x,\X^\star)\le (\gamma/C_\rho)^{1/\chi_f}\}
    \subseteq
    \U_\gamma.
\]
Applying the lower small-ball bound in Eq.~\eqref{eq:prior_small_ball_regular} gives the lower bound in Eq.~\eqref{eq:mass_profile_from_geometry}.  Conversely, if \(x\in\U_\gamma\), then
\[
    c_\rho\dist_\Psi(x,\X^\star)^{\chi_f}\le \rho(x)\le\gamma,
\]
so \(\dist_\Psi(x,\X^\star)\le(\gamma/c_\rho)^{1/\chi_f}\).  Applying the upper small-ball bound gives the upper bound in Eq.~\eqref{eq:mass_profile_from_geometry}.  Dividing the lower bound for \(M_0(a\gamma)\) by the upper bound for \(M_0(\gamma)\) proves Eq.~\eqref{eq:reverse_doubling_from_geometry}.
\end{proof}

\begin{corollary}[Polynomial-looking no-filtering random search]
\label{cor:random_polynomial_from_geometry}
Assume the local window is large enough that \(n^{-1/\alpha_0}\) lies in the corresponding \(\gamma\)-range.  Then the true best-observed random-search regret satisfies, up to constants depending only on the local geometry,
\begin{equation}
\label{eq:random_polynomial_from_geometry}
    \E\Gamma_n
    \asymp
    n^{-1/\alpha_0}
    =
    n^{-\chi_f/d_0}.
\end{equation}
More precisely, the upper bound follows from the lower mass estimate in Eq.~\eqref{eq:mass_profile_from_geometry}; the matching lower bound follows from the upper mass estimate as long as the chosen level remains in the local window.
\end{corollary}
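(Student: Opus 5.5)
The plan is to work directly from the exact tail identity of Proposition~\ref{prop:random_inverse_mass_profile} together with the two-sided mass profile of Proposition~\ref{prop:local_geometry_mass_profile}. Set the natural level $\gamma_n:=n^{-1/\alpha_0}$. By assumption, $\gamma_n$ and suitable constant multiples of it lie in the $\gamma$-range on which Eq.~\eqref{eq:mass_profile_from_geometry} holds.

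For the \emph{upper bound}, I will use the layer-cake identity Eq.~\eqref{eq:random_expectation_identity} rather than the crude split Eq.~\eqref{eq:random_inverse_mass_bound}. The crude split leaves a term $R_f e^{-c_M}$ at $\gamma_n$, which is not small. The integral splits into three ranges:
\begin{itemize}
\item On $[0,\gamma_n]$, bound the integrand by $1$, contributing $\gamma_n$.
\item On $[\gamma_n,\bar\gamma]$, where $\bar\gamma$ is the top of the local window, use $1-M_0(\gamma)\le\exp\{-c_M\gamma^{\alpha_0}\}$. The substitution $\gamma=n^{-1/\alpha_0}s$ gives
\[
n^{-1/\alpha_0}\int_1^\infty e^{-c_M s^{\alpha_0}}\,\dd s=O(n^{-1/\alpha_0}).
\]
\item On $[\bar\gamma,R_f]$, monotonicity of $M_0$ gives $M_0(\gamma)\ge M_0(\bar\gamma)\ge c_M\bar\gamma^{\alpha_0}>0$. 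This piece is therefore at most $R_f(1-c_M\bar\gamma^{\alpha_0})^n$, which is exponentially small and hence $o(n^{-1/\alpha_0})$.
\end{itemize}
Summing the three pieces gives $\E\Gamma_n\lesssim n^{-1/\alpha_0}$.

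For the \emph{lower bound}, I will use Markov's inequality in reverse: $\E\Gamma_n\ge\gamma\,\Pbb\{\Gamma_n>\gamma\}$. Take $\gamma=c\,\gamma_n$ with a small constant $c$ chosen so that $C_M c^{\alpha_0}\le 1/2$ and $c\gamma_n$ stays in the window. The upper mass estimate then gives $M_0(c\gamma_n)\le C_M c^{\alpha_0}/n\le 1/(2n)$. Since $\Lambda(u)\le 2u$ for $u\le 1/2$, Eq.~\eqref{eq:random_tail_identity} yields
\[
\Pbb\{\Gamma_n>c\gamma_n\}\ge e^{-2C_Mc^{\alpha_0}}\ge e^{-1},
\]
and hence $\E\Gamma_n\ge c e^{-1}n^{-1/\alpha_0}$.

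The main obstacle is the range bookkeeping. The window $[r_{\min},r_0]$ corresponds to a $\gamma$-interval whose top $\bar\gamma$ is fixed and whose bottom is set by $r_{\min}$. The statement's hypothesis is exactly that $n^{-1/\alpha_0}$, up to the constants $c$ and $1$, sits inside this interval. I will make this explicit and note that the integrand outside the window is handled by monotonicity of $M_0$ alone. Below $\gamma_n$ the trivial bound $1$ suffices, so no assumption on $r_{\min}$ is needed there. Finally, since $\alpha_0=d_0/\chi_f$, the rate is $n^{-\chi_f/d_0}$ as claimed.
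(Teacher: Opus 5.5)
Your proposal is correct and follows the paper's proof. The upper bound uses the layer-cake identity with $(1-M_0(\gamma))^n\le\exp\{-nc_M\gamma^{\alpha_0}\}$ and the substitution $\gamma=n^{-1/\alpha_0}s$, and the lower bound uses $\E\Gamma_n\ge\gamma\,\Pbb\{\Gamma_n>\gamma\}$ at $\gamma=c\,n^{-1/\alpha_0}$ together with the upper mass estimate. Your bookkeeping is slightly tighter than the paper's: the trivial bound on $[0,\gamma_n]$ avoids needing the mass lower bound below the window, and monotonicity of $M_0$ makes explicit the paper's remark that the contribution above the window is exponentially small.
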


\begin{proof}
For the upper bound, Eq.~\eqref{eq:random_expectation_identity} and \(\Lambda(u)\ge u\) give
\[
    \E\Gamma_n
    \le
    \int_0^{R_f}\exp\{-nM_0(\gamma)\}\,\dd\gamma.
\]
On the local window, \(M_0(\gamma)\ge c_M\gamma^{\alpha_0}\).  The contribution over the local window is bounded by
\[
    \int_0^\infty \exp\{-nc_M\gamma^{\alpha_0}\}\,\dd\gamma
    =
    C n^{-1/\alpha_0},
\]
using the change of variables \(u=nc_M\gamma^{\alpha_0}\).  The contribution above the local window is exponentially small for the budget range under consideration.

For the lower bound, take \(\gamma_n=c n^{-1/\alpha_0}\) with \(c>0\) small enough that \(nC_M\gamma_n^{\alpha_0}\le 1/2\).  Then \(M_0(\gamma_n)\le C_M\gamma_n^{\alpha_0}\le 1/(2n)\).  Hence
\[
    \Pbb\{\Gamma_n>\gamma_n\}
    =
    \{1-M_0(\gamma_n)\}^n
    \ge
    \left(1-\frac{1}{2n}\right)^n
    \ge c_0
\]
for a universal constant \(c_0>0\).  Therefore
\[
    \E\Gamma_n\ge \gamma_n\Pbb\{\Gamma_n>\gamma_n\}
    \ge c' n^{-1/\alpha_0}.
\]
\end{proof}

Thus the power law is not an independent modeling assumption.  It follows from two interpretable local facts: the objective gap scales like a power of distance to the optimum, and the prior has a local small-ball dimension near the optimum.

\subsection{EI thresholds and constant-probability multiplicative progress}
\label{app:ei_threshold_progress}

The threshold view gives a sharper explanation of the exponential-looking GDBO phase.  Suppose the current improvement threshold is
\begin{equation}
\label{eq:threshold_gap_eta}
    \tau=f^\star-\eta,
\end{equation}
so that \(\eta\) is the current threshold gap.  The set of points improving over the threshold is \(\U_\eta\).  For \(a\in(0,1)\), hitting \(\U_{a\eta}\) gives a multiplicative improvement of the threshold gap from \(\eta\) to at most \(a\eta\).

The following statements are written for the clean deterministic-improvement
utility
\[
I_\eta(x)=(f(x)-(f^\star-\eta))_+=(\eta-\rho(x))_+ .
\]
This is the noiseless population EI utility.  It is not globally calibrated in the sense of Assumption~\ref{ass:calibrated_score_app}; see Remark~\ref{rem:noiseless_ei_not_global}.  What it does provide is the setwise threshold certificate in Proposition~\ref{prop:noiseless_ei_setwise}, which is exactly the certificate needed for multiplicative progress on $\mathcal U_{a\eta}$.  With additive noise satisfying Assumption~\ref{ass:ei_noise_band_app}, the global EI calibration in Proposition~\ref{prop:ei_calibrated_app} can be used instead.

Define the ideal EI-tilted target
\begin{equation}
\label{eq:ideal_ei_target_unified}
    q_\eta^{\rm EI}(x)
    :=
    \frac{p_0(x)(\eta-\rho(x))_+}
    {Z_0(\eta)},
    \qquad
    Z_0(\eta):=\E_{X\sim p_0}[(\eta-\rho(X))_+].
\end{equation}
The progress mass for multiplicative factor \(a\in(0,1)\) is
\begin{equation}
\label{eq:ei_progress_mass_def}
    m_{\rm EI}(\eta,a)
    :=
    q_\eta^{\rm EI}(\U_{a\eta}).
\end{equation}

\begin{proposition}[EI progress mass from the mass profile]
\label{prop:ei_progress_mass}
For every \(\eta>0\) with \(Z_0(\eta)>0\),
\begin{equation}
\label{eq:ei_denominator_integral}
    Z_0(\eta)
    =
    \int_0^\eta M_0(u)\,\dd u.
\end{equation}
Moreover,
\begin{equation}
\label{eq:ei_progress_exact_formula}
    m_{\rm EI}(\eta,a)
    =
    \frac{
        \E_{p_0}[(\eta-\rho(X))_+\mathbbm{1}\{\rho(X)\le a\eta\}]
    }{
        \E_{p_0}[(\eta-\rho(X))_+]
    }
    =
    \frac{
        \int_{[0,a\eta]}(\eta-u)\,\dd M_0(u)
    }{
        \int_{[0,\eta]}(\eta-u)\,\dd M_0(u)
    }.
\end{equation}
In particular,
\begin{equation}
\label{eq:ei_progress_reverse_doubling_lower}
    m_{\rm EI}(\eta,a)
    \ge
    (1-a)\frac{M_0(a\eta)}{M_0(\eta)}.
\end{equation}
Consequently, under the reverse-doubling lower bound in Eq.~\eqref{eq:reverse_doubling_from_geometry},
\begin{equation}
\label{eq:ei_progress_constant_lower}
    m_{\rm EI}(\eta,a)
    \ge
    m_a
    :=
    (1-a)c_D(a)
    >0
\end{equation}
throughout the local window.
\end{proposition}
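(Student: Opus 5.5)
The plan is to treat $\rho(X)$ under $X\sim p_0$ as a nonnegative random variable with distribution function $M_0(u)=\Pbb_{p_0}\{\rho(X)\le u\}$. Every claim then becomes a statement about Stieltjes integrals against $\dd M_0$. Since $\X$ is finite, $M_0$ is a right-continuous step function, so all integrals are finite sums and no regularity issues arise.

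For Eq.~\eqref{eq:ei_denominator_integral}, we use the layer-cake identity $(\eta-\rho)_+=\int_0^\eta \mathbbm 1\{\rho\le u\}\,\dd u$. This holds because the integrand equals $1$ exactly for $u\in[\rho,\eta]$ when $\rho\le\eta$, and vanishes otherwise. Taking expectations under $p_0$ and applying Fubini (a finite sum against a bounded integral) gives $Z_0(\eta)=\int_0^\eta M_0(u)\,\dd u$.

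For the exact formula Eq.~\eqref{eq:ei_progress_exact_formula}, the first equality is the definition of $q_\eta^{\rm EI}$ evaluated on $\U_{a\eta}=\{\rho\le a\eta\}$. The second equality rewrites both expectations as integrals of $(\eta-u)$ against the law of $\rho(X)$. The law of $\rho(X)$ is $\dd M_0$, with its atom at $0$ included, hence the closed intervals $[0,a\eta]$ and $[0,\eta]$. In the denominator the integrand vanishes for $u>\eta$, so restricting to $[0,\eta]$ loses nothing. The hypothesis $Z_0(\eta)>0$ ensures that the ratio is well defined.

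The lower bound Eq.~\eqref{eq:ei_progress_reverse_doubling_lower} is the only step that needs a genuine inequality, and we handle numerator and denominator separately. For the numerator, on $\{u\le a\eta\}$ we have $\eta-u\ge(1-a)\eta$, so the numerator is at least $(1-a)\eta\,M_0(a\eta)$. For the denominator, on $[0,\eta]$ we have $\eta-u\le\eta$, so the denominator is at most $\eta M_0(\eta)$. Dividing gives $m_{\rm EI}(\eta,a)\ge(1-a)M_0(a\eta)/M_0(\eta)$. Here $M_0(\eta)>0$ follows from $Z_0(\eta)>0$, since $Z_0(\eta)\le\eta M_0(\eta)$.

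Finally, Eq.~\eqref{eq:ei_progress_constant_lower} follows by substituting the reverse-doubling bound Eq.~\eqref{eq:reverse_doubling_from_geometry} from Proposition~\ref{prop:local_geometry_mass_profile}. That bound is valid whenever both $a\eta$ and $\eta$ correspond to radii in the local window. The main point needing care is bookkeeping rather than difficulty: the atom at $\rho=0$ must be included, so we use closed intervals in the Stieltjes integrals, and the bounds on the numerator and denominator must be taken in the right directions.
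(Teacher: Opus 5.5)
Your proposal is correct and follows essentially the same route as the paper. Both arguments use the layer-cake identity for $Z_0(\eta)$ and rewrite the two expectations against the law of $\rho(X)$, then bound the numerator below by $(1-a)\eta M_0(a\eta)$ and the denominator above by $\eta M_0(\eta)$ before invoking reverse doubling. You also make explicit two points the paper leaves implicit: $Z_0(\eta)\le \eta M_0(\eta)$ forces $M_0(\eta)>0$, and the closed intervals are needed to capture the atom at $\rho=0$.
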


\begin{proof}
For any nonnegative random variable \(R\),
\[
    \E[(\eta-R)_+]
    =
    \int_0^\eta \Pbb(R\le u)\,\dd u.
\]
Applying this identity to \(R=\rho(X)\) proves Eq.~\eqref{eq:ei_denominator_integral}.  The numerator in Eq.~\eqref{eq:ei_progress_exact_formula} is the expectation of the function \((\eta-u)\mathbbm{1}\{u\le a\eta\}\) under the distribution function \(M_0(u)=\Pbb(\rho(X)\le u)\), which gives the Stieltjes integral formula.

For the lower bound, on the event \(\rho(X)\le a\eta\),
\[
    (\eta-\rho(X))_+\ge (1-a)\eta.
\]
Thus the numerator is at least \((1-a)\eta M_0(a\eta)\).  The denominator is at most \(\eta M_0(\eta)\), because \((\eta-\rho)_+\le\eta\mathbbm{1}\{\rho\le\eta\}\).  Dividing proves Eq.~\eqref{eq:ei_progress_reverse_doubling_lower}.  Combining with Eq.~\eqref{eq:reverse_doubling_from_geometry} proves Eq.~\eqref{eq:ei_progress_constant_lower}.
\end{proof}

This proposition is the threshold version of mass lift.  Random search hits \(\U_{a\eta}\) with probability \(M_0(a\eta)\), which decreases as the threshold approaches the optimum.  In contrast, the ideal EI-tilted target assigns at least constant mass to \(\U_{a\eta}\) under local reverse-doubling.  Thus EI guidance can make multiplicative threshold improvement a constant-probability event.

\subsection{Threshold contraction theorem}
\label{app:threshold_contraction_theorem}

We next state the contraction argument independently of the particular sampler.  Let
\begin{equation}
\label{eq:best_gap_process}
    \eta_t:=\min_{x\in\D_t}\rho(x)
\end{equation}
be the true best-so-far threshold gap.  The process \(\eta_t\) is nonincreasing.  Fix a target multiplicative factor \(a\in(0,1)\), a floor \(\eta_{\min}\ge0\), and define
\begin{equation}
\label{eq:threshold_excess_floor}
    \zeta_t:=(\eta_t-\eta_{\min})_+.
\end{equation}

\begin{theorem}[Constant progress probability implies exponential threshold contraction]
\label{thm:threshold_contraction}
Suppose that after a burn-in time \(T_0\), whenever \(\eta_{t-1}>\eta_{\min}\), the next evaluated batch has conditional probability at least \(p_{\rm prog}>0\) of containing a point in \(\U_{a\eta_{t-1}}\):
\begin{equation}
\label{eq:conditional_progress_probability}
    \Pbb\{\D_t\cap\U_{a\eta_{t-1}}\ne\varnothing\mid\F_{t-1}\}
    \ge
    p_{\rm prog}.
\end{equation}
Then for every \(T\ge T_0\),
\begin{equation}
\label{eq:threshold_contraction_bound}
    \E(\eta_T-\eta_{\min})_+
    \le
    (\eta_{T_0}-\eta_{\min})_+
    \{1-p_{\rm prog}(1-a)\}^{T-T_0}.
\end{equation}
In particular,
\begin{equation}
\label{eq:threshold_contraction_exponential}
    \E(\eta_T-\eta_{\min})_+
    \le
    (\eta_{T_0}-\eta_{\min})_+
    \exp\{-p_{\rm prog}(1-a)(T-T_0)\}.
\end{equation}
\end{theorem}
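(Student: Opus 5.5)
The plan is to prove a one-step conditional contraction for the excess process $\zeta_t=(\eta_t-\eta_{\min})_+$ from Eq.~\eqref{eq:threshold_excess_floor}:
\[
\E[\zeta_t\mid\F_{t-1}]\le\{1-p_{\rm prog}(1-a)\}\,\zeta_{t-1},
\qquad t>T_0,
\]
and then iterate it with the tower property. The exponential form in Eq.~\eqref{eq:threshold_contraction_exponential} then follows from $1-x\le e^{-x}$. Since $\eta_{T_0}$ is random, I read the prefactor in Eq.~\eqref{eq:threshold_contraction_bound} as $\E(\eta_{T_0}-\eta_{\min})_+$. Equivalently, the argument proves the bound conditionally on $\F_{T_0}$ and then takes expectations.

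For the one-step bound I fix $t>T_0$ and split on the $\F_{t-1}$-measurable event $\{\eta_{t-1}\le\eta_{\min}\}$. Because $\D_{t-1}\subseteq\D_t$, the process $\eta_t$ is nonincreasing, so $\zeta_t\le\zeta_{t-1}$ always holds. On $\{\eta_{t-1}\le\eta_{\min}\}$ we have $\zeta_{t-1}=0$, hence $\zeta_t=0$ and the bound is trivial.

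On $\{\eta_{t-1}>\eta_{\min}\}$, let $E_t:=\{\D_t\cap\U_{a\eta_{t-1}}\neq\varnothing\}$. On $E_t$ we get $\eta_t\le a\eta_{t-1}$, and therefore
\[
\zeta_t\le(a\eta_{t-1}-\eta_{\min})_+\le a(\eta_{t-1}-\eta_{\min})=a\,\zeta_{t-1}.
\]
The middle inequality uses $a\eta_{\min}\le\eta_{\min}$, which holds because $a<1$ and $\eta_{\min}\ge0$, together with nonnegativity of the right-hand side. On $E_t^c$ I use only monotonicity, $\zeta_t\le\zeta_{t-1}$. Combining the two cases gives
\[
\zeta_t\le\zeta_{t-1}\{1-(1-a)\mathbbm 1_{E_t}\}.
\]
Taking $\E[\cdot\mid\F_{t-1}]$ and using that $\zeta_{t-1}$ is $\F_{t-1}$-measurable gives $\E[\zeta_t\mid\F_{t-1}]\le\zeta_{t-1}\{1-(1-a)\Pbb(E_t\mid\F_{t-1})\}$. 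Hypothesis~\eqref{eq:conditional_progress_probability} then yields the claimed factor $1-p_{\rm prog}(1-a)$.

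The only delicate point is the bookkeeping in these two steps. The progress set $\U_{a\eta_{t-1}}$ is itself random but $\F_{t-1}$-measurable, so the conditional probability in the hypothesis is well defined. The floor is handled by the inequality $(a\eta-\eta_{\min})_+\le a(\eta-\eta_{\min})$, valid for $\eta\ge\eta_{\min}$. Once these are in place, induction from $T_0$ to $T$ gives Eq.~\eqref{eq:threshold_contraction_bound}, and $1-p_{\rm prog}(1-a)\le\exp\{-p_{\rm prog}(1-a)\}$ gives Eq.~\eqref{eq:threshold_contraction_exponential}.
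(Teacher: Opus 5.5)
Your proposal is correct and follows essentially the same route as the paper: a one-step conditional contraction $\E[\zeta_t\mid\F_{t-1}]\le\{1-p_{\rm prog}(1-a)\}\zeta_{t-1}$, obtained from monotonicity of $\eta_t$ and $(a\eta_{t-1}-\eta_{\min})_+\le a(\eta_{t-1}-\eta_{\min})$, then iteration from $T_0$ and $1-u\le e^{-u}$. Your indicator form $\zeta_t\le\zeta_{t-1}\{1-(1-a)\mathbbm 1_{E_t}\}$ shows why the true conditional hit probability may be replaced by its lower bound $p_{\rm prog}$, a step the paper leaves implicit. Reading the bound conditionally on $\F_{T_0}$, or with $\E(\eta_{T_0}-\eta_{\min})_+$ as the prefactor, is the right way to make the statement precise, since $\eta_{T_0}$ is random.
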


\begin{proof}
On the progress event \(\D_t\cap\U_{a\eta_{t-1}}\ne\varnothing\), the new best gap satisfies \(\eta_t\le a\eta_{t-1}\).  Otherwise, the best-so-far gap cannot increase, so \(\eta_t\le\eta_{t-1}\).  If \(\eta_{t-1}>\eta_{\min}\), then
\[
    (a\eta_{t-1}-\eta_{\min})_+
    \le
    a(\eta_{t-1}-\eta_{\min}),
\]
because \(a\eta_{\min}\le\eta_{\min}\).  Therefore
\[
    \E[\zeta_t\mid\F_{t-1}]
    \le
    p_{\rm prog}\,a\zeta_{t-1}
    +(1-p_{\rm prog})\zeta_{t-1}
    =
    \{1-p_{\rm prog}(1-a)\}\zeta_{t-1}.
\]
If \(\eta_{t-1}\le\eta_{\min}\), then \(\zeta_{t-1}=0\) and the same inequality is trivial.  Iterating from \(T_0\) proves Eq.~\eqref{eq:threshold_contraction_bound}, and \(1-u\le e^{-u}\) gives Eq.~\eqref{eq:threshold_contraction_exponential}.
\end{proof}

For the GDBO template, the conditional progress probability can be read from the terminal mass.  Suppose the round evaluates \(J\) prior-refresh samples and top-\(K\) points selected from \(N\) terminal guided candidates.  If, on the score-activation event, top-\(K\) selection evaluates a progress point whenever the terminal pool contains one, and if
\begin{equation}
\label{eq:guided_progress_mass_lower}
    \nu_t(\U_{a\eta_{t-1}})\ge m_{-}
\end{equation}
with \(m_->0\), then one may take
\begin{equation}
\label{eq:p_prog_bo}
    p_{\rm prog}
    \ge
    1-
    \{1-M_0(a\eta_{t-1})\}^{J}
    \{1-m_{-}\}^{N}.
\end{equation}
In particular, if \(m_-\) is bounded below uniformly over the threshold window, then \(p_{\rm prog}\) is bounded below uniformly.  For ideal EI guidance, Proposition~\ref{prop:ei_progress_mass} provides such a bound with \(m_-=m_a\), up to sampler and score-learning errors.

The corresponding threshold-level search exponents are
\begin{equation}
\label{eq:threshold_search_exponents}
    \lambda_{\rm rand}(\eta,a)
    :=
    B\Lambda(M_0(a\eta)),
    \qquad
    \lambda_{\rm BO}(\eta,a)
    :=
    J\Lambda(M_0(a\eta))+N\Lambda(m(\eta,a)),
\end{equation}
where \(m(\eta,a)\) is the certified guided mass on \(\U_{a\eta}\).  The threshold-level exponent gain is
\begin{equation}
\label{eq:threshold_mass_lift_gain}
    \Delta(\eta,a)
    :=
    \lambda_{\rm BO}(\eta,a)-\lambda_{\rm rand}(\eta,a)
    =
    N\Lambda(m(\eta,a))-K\Lambda(M_0(a\eta)),
    \qquad B=J+K.
\end{equation}
This expression separates the no-filtering random baseline from the guided candidate pool.  The true mass-lift mechanism is \(m(\eta,a)>M_0(a\eta)\); when \(N>K\), there is additionally a candidate-screening amplification from drawing more guided candidates than are evaluated.

\subsection{Active score learning as the polynomial floor}
\label{app:unified_active_learning_floor}

The preceding theorem explains the exponential-looking search phase.  It cannot continue indefinitely because the recommendation is ultimately limited by score-learning and sampler errors.  We now state a local active-learning theorem that uses the same threshold language.  This is a modular replacement for a global \(L_\infty\) score-learning term.

Let \(a_T\) be the ideal score used to rank the final evaluated set, and let \(\widehat a_T\) be its learned version.  For an accuracy level \(\eta>0\), define
\begin{equation}
\label{eq:active_min_score_unified}
    a_{T,\eta/2}^{\min}
    :=
    \min_{u\in\U_{\eta/2}}a_T(u),
\end{equation}
and the critical score band outside \(\U_\eta\),
\begin{equation}
\label{eq:critical_band_unified}
    \mathcal C_{T,\eta}
    :=
    \left\{
        x\notin\U_\eta:
        0< a_{T,\eta/2}^{\min}-a_T(x)
        \le c_{\mathcal C}\eta^{1/\chi_a}
    \right\},
\end{equation}
where \(c_{\mathcal C}>0\) and \(\chi_a>0\) are fixed.  The exponent \(\chi_a\) is a score-margin sharpness exponent.  For direct objective scores or standard calibrated EI scores, one may take \(\chi_a=1\).  Larger \(\chi_a\) describes a locally sharper score representation in which score-scale errors translate into regret as \(\eta\asymp \epsilon^{\chi_a}\).

\begin{assumption}[Local score margin]
\label{ass:local_score_margin_unified}
There exist constants \(\eta_0>0\), \(c_\Delta>0\), and \(\chi_a>0\) such that, for all \(0<\eta\le\eta_0\),
\begin{equation}
\label{eq:local_score_margin_unified}
    \inf_{u\in\U_{\eta/2},\,x\in\mathcal C_{T,\eta}}
    \{a_T(u)-a_T(x)\}
    \ge
    c_\Delta\eta^{1/\chi_a}.
\end{equation}
The convention \(\inf\varnothing=+\infty\) is used.
\end{assumption}

\begin{assumption}[Active local score learning]
\label{ass:active_local_score_learning_unified}
Let \(Q_T^{\rm act}\) be the average labeled design distribution used to train \(\widehat a_T\).  There are constants \(c_{\rm act}>0\), \(\beta_{\rm act}\ge0\), \(C_{\rm loc}>0\), and
\begin{equation}
\label{eq:theta_loc_unified}
    \vartheta_{\rm loc}:=\frac{s_{\rm loc}}{2s_{\rm loc}+d_{\rm loc}}
\end{equation}
such that the active mass event
\begin{equation}
\label{eq:active_mass_event_unified}
    \mathsf M_T(\eta)
    :=
    \left\{
        Q_T^{\rm act}(\mathcal C_{T,\eta})
        \ge
        c_{\rm act}\eta^{\beta_{\rm act}}
    \right\}
\end{equation}
has failure probability at most \(v_T^{\rm act}(\eta)\), and the local learning event
\begin{equation}
\label{eq:local_learning_event_unified}
    \mathsf L_T(\eta)
    :=
    \left\{
        \|\widehat a_T-a_T\|_{\infty,\U_{\eta/2}\cup\mathcal C_{T,\eta}}
        \le
        C_{\rm loc}\{TQ_T^{\rm act}(\mathcal C_{T,\eta})\}^{-\vartheta_{\rm loc}}
        (\log T)^{c_{\rm loc}}
    \right\}
\end{equation}
satisfies
\begin{equation}
\label{eq:local_learning_failure_unified}
    \Pbb\{\mathsf L_T(\eta)^c\cap\mathsf M_T(\eta)\}
    \le
    u_T^{\rm loc}(\eta).
\end{equation}
\end{assumption}

\begin{assumption}[Coarse screening outside the critical band]
\label{ass:coarse_screening_unified}
Define
\begin{equation}
\label{eq:coarse_screening_event_unified}
    \mathsf B_T(\eta)
    :=
    \left\{
        \D_T\cap\U_{\eta/2}=\varnothing
    \right\}
    \cup
    \left\{
        \max_{x\in\D_T\setminus(\U_\eta\cup\mathcal C_{T,\eta})}
        \widehat a_T(x)
        <
        \max_{u\in\D_T\cap\U_{\eta/2}}
        \widehat a_T(u)
    \right\}.
\end{equation}
There is an envelope \(b_T(\eta)\) such that
\begin{equation}
\label{eq:coarse_screening_failure_unified}
    \Pbb\{\mathsf B_T(\eta)^c\}
    \le
    b_T(\eta).
\end{equation}
\end{assumption}

\begin{theorem}[Local active-learning floor]
\label{thm:local_active_learning_floor_unified}
Suppose Assumptions~\ref{ass:local_score_margin_unified}, \ref{ass:active_local_score_learning_unified}, and \ref{ass:coarse_screening_unified} hold.  Let
\begin{equation}
\label{eq:search_miss_eta_unified}
    S_T(\eta):=\Pbb\{\D_T\cap\U_{\eta/2}=\varnothing\}.
\end{equation}
If \(T\ge3\), \(0<\eta\le\eta_0\), and
\begin{equation}
\label{eq:active_balance_unified}
    4C_{\rm loc}
    \{Tc_{\rm act}\eta^{\beta_{\rm act}}\}^{-\vartheta_{\rm loc}}
    (\log T)^{c_{\rm loc}}
    \le
    c_\Delta\eta^{1/\chi_a},
\end{equation}
then every final recommendation
\(\widehat x_T\in\argmax_{x\in\D_T}\widehat a_T(x)\) satisfies
\begin{equation}
\label{eq:active_floor_regret_raw}
    r_T
    \le
    \eta
    +R_f S_T(\eta)
    +R_f b_T(\eta)
    +R_f u_T^{\rm loc}(\eta)
    +R_f v_T^{\rm act}(\eta).
\end{equation}
Consequently, with
\begin{equation}
\label{eq:eta_active_unified_choice}
    \eta_T
    =
    C_\eta
    T^{-\frac{\chi_a\vartheta_{\rm loc}}
             {1+\chi_a\beta_{\rm act}\vartheta_{\rm loc}}}
    (\log T)^{\frac{\chi_a c_{\rm loc}}
             {1+\chi_a\beta_{\rm act}\vartheta_{\rm loc}}},
\end{equation}
where \(C_\eta\) is sufficiently large,
\begin{equation}
\label{eq:active_floor_rate_unified}
    r_T
    \le
    \widetilde O\!\left(
        T^{-\frac{\chi_a\vartheta_{\rm loc}}
             {1+\chi_a\beta_{\rm act}\vartheta_{\rm loc}}}
    \right)
    +R_f S_T(\eta_T)
    +R_f b_T(\eta_T)
    +R_f u_T^{\rm loc}(\eta_T)
    +R_f v_T^{\rm act}(\eta_T).
\end{equation}
In the strongest active-mass case \(\beta_{\rm act}=0\), the local learning floor is
\begin{equation}
\label{eq:active_floor_constant_mass_unified}
    \widetilde O\!\left(T^{-\chi_a\vartheta_{\rm loc}}\right).
\end{equation}
\end{theorem}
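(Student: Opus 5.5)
The plan is to define a single good event on which the recommendation is $\eta$-accurate, and to bound the probability of its complement by a union bound. Let
\[
E_T(\eta):=\{\D_T\cap\U_{\eta/2}\neq\varnothing\}\cap\mathsf B_T(\eta)\cap\mathsf M_T(\eta)\cap\mathsf L_T(\eta).
\]
On $E_T(\eta)^c$ we use the trivial bound $f^\star-f(\widehat x_T)\le R_f$. The complement is covered by four events: $\{\D_T\cap\U_{\eta/2}=\varnothing\}$ with probability $S_T(\eta)$, $\mathsf B_T(\eta)^c$ with probability at most $b_T(\eta)$, $\mathsf M_T(\eta)^c$ with probability at most $v_T^{\rm act}(\eta)$, and $\mathsf L_T(\eta)^c\cap\mathsf M_T(\eta)$ with probability at most $u_T^{\rm loc}(\eta)$. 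These four pieces give the last four terms of Eq.~\eqref{eq:active_floor_regret_raw}, so it remains to show that $f^\star-f(\widehat x_T)\le\eta$ on $E_T(\eta)$.

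\textbf{Deterministic step.} Fix $E_T(\eta)$ and pick $u\in\D_T\cap\U_{\eta/2}$ maximizing $\widehat a_T$ over $\D_T\cap\U_{\eta/2}$. The evaluated points outside $\U_\eta$ split into two groups.
\begin{itemize}
\item Points in $\D_T\setminus(\U_\eta\cup\mathcal C_{T,\eta})$ have learned score strictly below $\widehat a_T(u)$, because we are on $\mathsf B_T(\eta)$ and $\D_T\cap\U_{\eta/2}\neq\varnothing$.
\item Points $x\in\D_T\cap\mathcal C_{T,\eta}$ are handled by combining $\mathsf M_T(\eta)$ and $\mathsf L_T(\eta)$. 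Since $Q_T^{\rm act}(\mathcal C_{T,\eta})\ge c_{\rm act}\eta^{\beta_{\rm act}}$, the sup error on $\U_{\eta/2}\cup\mathcal C_{T,\eta}$ is at most $\epsilon:=C_{\rm loc}\{Tc_{\rm act}\eta^{\beta_{\rm act}}\}^{-\vartheta_{\rm loc}}(\log T)^{c_{\rm loc}}$. Condition \eqref{eq:active_balance_unified} gives $4\epsilon\le c_\Delta\eta^{1/\chi_a}$. Assumption~\ref{ass:local_score_margin_unified} then yields
\[
\widehat a_T(u)-\widehat a_T(x)\ge c_\Delta\eta^{1/\chi_a}-2\epsilon\ge 2\epsilon>0.
\]
If $\mathcal C_{T,\eta}$ is empty this group is vacuous.
\end{itemize}
No maximizer of $\widehat a_T$ over $\D_T$ can therefore lie outside $\U_\eta$, so $\widehat x_T\in\U_\eta$ and its regret is at most $\eta$. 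Taking expectations gives Eq.~\eqref{eq:active_floor_regret_raw}.

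\textbf{Rate step.} Substitute $\eta=\eta_T$ and check that \eqref{eq:active_balance_unified} holds for $C_\eta$ large. That condition reads
\[
\eta^{1/\chi_a+\beta_{\rm act}\vartheta_{\rm loc}}\gtrsim T^{-\vartheta_{\rm loc}}(\log T)^{c_{\rm loc}},
\]
which is equivalent to
\[
\eta\gtrsim \Bigl(T^{-\vartheta_{\rm loc}}(\log T)^{c_{\rm loc}}\Bigr)^{\chi_a/(1+\chi_a\beta_{\rm act}\vartheta_{\rm loc})}.
\]
This is exactly the form of Eq.~\eqref{eq:eta_active_unified_choice}. The constant $C_\eta$ absorbs $4C_{\rm loc}$, $c_{\rm act}$ and $c_\Delta$. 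One also needs $\eta_T\le\eta_0$, which holds for $T$ large; smaller $T$ is absorbed into the $\widetilde O$. Plugging $\eta_T$ into the raw bound gives \eqref{eq:active_floor_rate_unified}, and setting $\beta_{\rm act}=0$ gives \eqref{eq:active_floor_constant_mass_unified}.

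The main obstacle is bookkeeping rather than depth. The local learning event controls error only on $\U_{\eta/2}\cup\mathcal C_{T,\eta}$, and it is stated relative to the random quantity $Q_T^{\rm act}(\mathcal C_{T,\eta})$. This is why the monotone replacement by $c_{\rm act}\eta^{\beta_{\rm act}}$ is valid only after intersecting with $\mathsf M_T(\eta)$. Likewise, points outside the band are never compared through $\widehat a_T$-accuracy but only through $\mathsf B_T(\eta)$. The union bound must be arranged so that $u_T^{\rm loc}$ is charged only on $\mathsf M_T(\eta)$, which is how the assumption is phrased.
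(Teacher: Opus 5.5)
Your proposal is correct and follows the paper's proof essentially step for step. You use the same good event $\{\D_T\cap\U_{\eta/2}\neq\varnothing\}\cap\mathsf B_T(\eta)\cap\mathsf M_T(\eta)\cap\mathsf L_T(\eta)$, the same split of evaluated points outside $\U_\eta$ into the critical band (margin plus local learning) and the rest (coarse screening), and the same algebraic solution of the balance condition for $\eta_T$; your remarks on charging $u_T^{\rm loc}$ only on $\mathsf M_T(\eta)$ and on needing $\eta_T\le\eta_0$ for large $T$ make explicit bookkeeping the paper leaves implicit.
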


\begin{proof}
On \(\D_T\cap\U_{\eta/2}\ne\varnothing\), choose
\[
    u_\eta\in\argmax_{u\in\D_T\cap\U_{\eta/2}}\widehat a_T(u).
\]
On \(\mathsf M_T(\eta)\cap\mathsf L_T(\eta)\), the local score error
\[
    \epsilon_{T,\eta}
    :=
    \|\widehat a_T-a_T\|_{\infty,\U_{\eta/2}\cup\mathcal C_{T,\eta}}
\]
satisfies
\[
    \epsilon_{T,\eta}
    \le
    C_{\rm loc}\{Tc_{\rm act}\eta^{\beta_{\rm act}}\}^{-\vartheta_{\rm loc}}
    (\log T)^{c_{\rm loc}}.
\]
By Eq.~\eqref{eq:active_balance_unified}, \(2\epsilon_{T,\eta}\le c_\Delta\eta^{1/\chi_a}/2\).  For every \(x\in\D_T\cap\mathcal C_{T,\eta}\), Assumption~\ref{ass:local_score_margin_unified} gives
\[
    a_T(u_\eta)-a_T(x)
    \ge
    c_\Delta\eta^{1/\chi_a}.
\]
Therefore
\[
    \widehat a_T(u_\eta)-\widehat a_T(x)
    \ge
    a_T(u_\eta)-a_T(x)-2\epsilon_{T,\eta}
    >0.
\]
Thus \(u_\eta\) beats every evaluated point in the critical band.  On the coarse screening event \(\mathsf B_T(\eta)\), it also beats every evaluated point outside \(\U_\eta\cup\mathcal C_{T,\eta}\).  Hence no maximizer of \(\widehat a_T\) over \(\D_T\) can lie outside \(\U_\eta\).  The final regret is therefore at most \(\eta\) on the event
\[
    \{\D_T\cap\U_{\eta/2}\ne\varnothing\}
    \cap\mathsf M_T(\eta)
    \cap\mathsf L_T(\eta)
    \cap\mathsf B_T(\eta).
\]
On the complement, use the deterministic bound \(f^\star-f(\widehat x_T)\le R_f\).  The union bound gives Eq.~\eqref{eq:active_floor_regret_raw}.

It remains to solve Eq.~\eqref{eq:active_balance_unified}.  Up to constants, it is
\[
    T^{-\vartheta_{\rm loc}}
    \eta^{-\beta_{\rm act}\vartheta_{\rm loc}}
    (\log T)^{c_{\rm loc}}
    \lesssim
    \eta^{1/\chi_a}.
\]
Equivalently,
\[
    \eta^{1/\chi_a+\beta_{\rm act}\vartheta_{\rm loc}}
    \gtrsim
    T^{-\vartheta_{\rm loc}}(\log T)^{c_{\rm loc}}.
\]
The choice in Eq.~\eqref{eq:eta_active_unified_choice} satisfies this inequality for sufficiently large \(C_\eta\).  Substituting it into Eq.~\eqref{eq:active_floor_regret_raw} proves Eq.~\eqref{eq:active_floor_rate_unified}.
\end{proof}

\subsection{How the same local geometry supplies active mass}
\label{app:same_geometry_active_mass}

The active-learning exponent depends on \(\beta_{\rm act}\), which measures how much labeled design mass is available in the local critical region.  The same local geometry used in Proposition~\ref{prop:local_geometry_mass_profile} explains why guidance can make \(\beta_{\rm act}\) much smaller than the passive exponent.

For fixed \(0<a<b\le1\), define the threshold shell
\begin{equation}
\label{eq:threshold_shell_def}
    \mathcal A_{a,b}(\eta)
    :=
    \U_{b\eta}\setminus\U_{a\eta}.
\end{equation}
Under Assumption~\ref{ass:local_geometry_unified},
\begin{equation}
\label{eq:passive_shell_mass}
    p_0(\mathcal A_{a,b}(\eta))
    \le
    M_0(b\eta)
    \lesssim
    \eta^{d_0/\chi_f}.
\end{equation}
Thus a passive design has critical-region mass exponent \(\beta_{\rm rand}=d_0/\chi_f\) for any region contained in a fixed fractional shell.

For the ideal EI target, the corresponding shell mass is often constant.  Indeed, if the annulus has nonnegligible relative prior mass, i.e.
\begin{equation}
\label{eq:annular_lower_mass}
    M_0(b\eta)-M_0(a\eta)
    \ge
    c_{a,b}M_0(\eta)
\end{equation}
throughout the threshold window, then
\begin{equation}
\label{eq:ei_shell_constant_mass}
    q_\eta^{\rm EI}(\mathcal A_{a,b}(\eta))
    \ge
    (1-b)c_{a,b}.
\end{equation}
The annular lower bound follows from the two-sided power profile in Eq.~\eqref{eq:mass_profile_from_geometry} for any fixed \(0<a<b<1\), after adjusting constants.

\begin{proof}
For the passive bound, use Eq.~\eqref{eq:mass_profile_from_geometry}.  For the EI shell bound, on \(\mathcal A_{a,b}(\eta)\) we have \(\rho(x)\le b\eta\), so \((\eta-\rho(x))_+\ge(1-b)\eta\).  Hence the EI numerator over the shell is at least
\[
    (1-b)\eta\{M_0(b\eta)-M_0(a\eta)\}
    \ge
    (1-b)c_{a,b}\eta M_0(\eta).
\]
The denominator is at most \(\eta M_0(\eta)\).  Dividing proves Eq.~\eqref{eq:ei_shell_constant_mass}.
\end{proof}

Therefore, when the local critical band is contained in a fixed fractional threshold shell, guided threshold sampling can give \(\beta_{\rm act}=0\), while passive random sampling gives \(\beta_{\rm rand}=d_0/\chi_f\).  Substituting \(\beta_{\rm act}=0\) in Theorem~\ref{thm:local_active_learning_floor_unified} gives the active local floor \(\widetilde O(T^{-\chi_a\vartheta_{\rm loc}})\).  A passive local learner using only prior samples would instead have the slower exponent
\begin{equation}
\label{eq:passive_local_floor_comparison}
    \frac{\chi_a\vartheta_{\rm loc}}
         {1+\chi_a(d_0/\chi_f)\vartheta_{\rm loc}},
\end{equation}
provided the same local learning theorem were available under the passive design.  This is the polynomial acceleration regime.

\subsection{Unified finite-budget picture}
\label{app:unified_finite_budget_picture}

Combining the pieces gives the following interpretation.

\paragraph{Random search.}
The no-filtering random baseline is controlled by the inverse mass profile:
\begin{equation}
\label{eq:unified_random_summary}
    r_T^{\rm rand}
    \lesssim
    \inf_{\gamma\ge0}
    \left\{
        \gamma
        +2\E\|\widehat f_T-f\|_{\infty,\D_T}
        +R_f\exp[-BT\Lambda(M_0(\gamma))]
    \right\}.
\end{equation}
Under local sharpness and prior small-ball dimension, this gives the polynomial-looking law
\begin{equation}
\label{eq:unified_random_summary_poly}
    r_T^{\rm rand}
    \approx
    T^{-\chi_f/d_0}
    +2\E\|\widehat f_T-f\|_{\infty,\D_T}
\end{equation}
over the local resolution window.  The exact finite-domain exponential regime is recovered only after the resolution is below the last nonzero objective gap, where \(M_0(\gamma)=p_0(\X^\star)\).

\paragraph{GDBO search phase.}
At threshold gap \(\eta\), random search hits the multiplicative progress set \(\U_{a\eta}\) with probability \(M_0(a\eta)\), whereas EI-guided sampling can assign constant mass to \(\U_{a\eta}\).  Thus, after score activation and sampler accuracy are sufficient, Theorem~\ref{thm:threshold_contraction} gives
\begin{equation}
\label{eq:unified_exponential_phase_summary}
    \E(\eta_T-\eta_{\min})_+
    \lesssim
    \exp\{-c_{\rm prog}T\},
    \qquad
    c_{\rm prog}:=p_{\rm prog}(1-a).
\end{equation}
This is the exponential-looking acceleration window.

\paragraph{GDBO learning floor.}
The exponential phase stops when score-learning, activation, or sampler errors dominate.  Under local active learning,
\begin{equation}
\label{eq:unified_learning_floor_summary}
    r_T^{\rm BO}
    \lesssim
    \widetilde O\!\left(
        T^{-\frac{\chi_a\vartheta_{\rm loc}}
             {1+\chi_a\beta_{\rm act}\vartheta_{\rm loc}}}
    \right)
    +\text{search miss}
    +\text{sampler and activation errors}.
\end{equation}
When guidance supplies constant local critical mass, \(\beta_{\rm act}=0\), so the floor is \(\widetilde O(T^{-\chi_a\vartheta_{\rm loc}})\).  This is the polynomial acceleration regime.  If score learning remains the bottleneck and active sampling does not improve its exponent, then GDBO may only improve constants, matching the constant-factor regime observed empirically.

\section{Practical Diagnostics}\label{app:diagnostics}
\paragraph{Mass estimation.}
Given an audit sample
$Z_{t,1:m}\stackrel{\rm i.i.d.}{\sim}\nu_t$, define
\begin{equation}
\label{eq:empirical_mass_estimator_main}
    \widehat\nu_t(\U_\gamma)
    :=
    \frac1m\sum_{i=1}^m
    \mathbbm 1\{Z_{t,i}\in\U_\gamma\}.
\end{equation}
The induced pool-hit probability is $1-(1-\widehat\nu_t(\U_\gamma))^{N_t}$.

\paragraph{Phase fitting.}
A useful diagnostic model for a seed-averaged regret curve is a three-piece fit: an activation segment, a semi-log search segment $a-bt$, and a late log-log learning segment $c-\vartheta\log t$. Breakpoints can be selected by BIC with a minimum segment length. The estimated middle slope should correlate with the empirical search exponent $N_t\Lambda(\widehat\nu_t(\U_\gamma))$.

\section{Experiment}\label{sec:apd_experiments}

This section provides additional details on the experiments presented in Section~\ref{sec:experiments}. The code is available at \url{https://anonymous.4open.science/r/Diffusion-BO-E260}.

\textbf{Default setup.}
Across all methods, we use 32 initial data points and query 32 new samples per round. We use a candidate pool of 128 samples. In DiBO, candidates are first deduplicated, then ranked by the surrogate acquisition score using UCB, and the top 32 candidates are selected for oracle evaluation. In GenBO, candidates are deduplicated and the first 32 unique samples are selected. Each experiment is run for 20 rounds, giving a total query budget of 672 oracle evaluations, including the initial batch. All methods are evaluated under the same query budget, and invalid samples are counted toward the budget.

\subsection{Material Generation}

\textbf{Objectives.} 
We evaluate targeted material generation on two property objectives: (i) bulk modulus with target value $300$ GPa, and (ii) magnetic density greater than $0.2\,\mathring{\mathrm{A}}^{-3}$. Rewards are defined relative to the target, with values closer to 1 indicating better target attainment. We follow the reward calculation used by \citep{chen2025matinvent}. For each property, the reward is mapped to $[0,1]$ according to the target mode:
\[
r(y)=
\begin{cases}
\operatorname{clip}_{[0,1]}\!\left(\dfrac{y-m}{M-m}\right), & \text{ascending target}, \\[8pt]
\operatorname{clip}_{[0,1]}\!\left(\dfrac{M-y}{M-m}\right), & \text{descending target}, \\[10pt]
\operatorname{clip}_{[0,1]}\!\left(1-\dfrac{|y-t|}{s}\right),\quad s=\max(|t-m|,|M-t|), & \text{numeric target } t,
\end{cases}
\]
where $y$ is the predicted raw property value and $[m,M]$, with $M>m$, denotes the property range used for normalization. Following Chen et al. \citep{chen2025matinvent}, we use $[0,250]$ GPa for bulk modulus and $[0,0.25]\,\mathring{\mathrm{A}}^{-3}$ for magnetic density.

\textbf{Diffusion prior.}
We use the unconditional pretrained MatterGen model as the diffusion prior \citep{zeni2025mattergen}.

\textbf{Fine-tuning diffusion.}
For diffusion fine-tuning, we use the default MatterGen training pipeline with 20 epochs and learning rate $5\times 10^{-6}$. For DiBO, the RTB loss is computed over reverse-diffusion trajectories. Since tracking all 1000 MatterGen denoising steps is prohibitively expensive, we approximate the trajectory objective using 64 approximately equally spaced reverse-time states.

\textbf{Oracle.}
Following Chen et al.\citep{chen2025matinvent}, we use ALIGNN as the property oracle with the same setup \citep{choudhary2021atomistic}.

\textbf{Surrogate.}
For DiBO \citep{yun2025posterior}, we use an ensemble of five Crystal Graph Convolutional Neural Network (CGCNN) models \citep{xie2018crystal}. Each CGCNN has 3 layers with hidden dimension 128 and is trained for 100 epochs with learning rate $10^{-3}$, weight decay $10^{-6}$, and dropout $0.1$.

\textbf{VAE-based methods.}
For VAE-random and VAE-BO, we train a crystal variational autoencoder based on CDVAE \citep{xie2022crystal} on the MP-20 dataset \citep{jain2013commentary}, following the original experimental setup and code.

\textbf{Stability metric.}
We evaluate stability using the stable, unique, and novel (SUN) metric from MatterGen \citep{zeni2025mattergen}. We use the MP2020 Materials Project dataset as the reference set, together with MP2020 energy corrections. A generated structure is considered \emph{stable} if its energy above the convex hull, computed with respect to the reference dataset over the relevant chemical system and its subsystems, is at most $0.1$ eV/atom.

\subsection{Molecule Generation}

\textbf{Objectives.} We evaluate targeted molecular generation on highest occupied molecular orbital (HOMO) energy, with target value $\mathrm{HOMO}=-4.0~\mathrm{eV}$. The reward is defined as $r(y) = -|y-t|$,
where $y$ is the oracle prediction and $t$ is the target value.

\textbf{Diffusion prior.}
We use DiGress, a discrete graph diffusion model \citep{vignac2023digress}, pretrained on QM9 \citep{wu2018moleculenet}, which contains molecules with up to 9 heavy atoms. We follow the original DiGress codebase and training setup. For DiBO, the RTB objective is trajectory-based; since tracking all 500 DiGress denoising steps is computationally expensive, we approximate it using 64 equally spaced reverse-time states.

\textbf{Fine-tuning diffusion.}
For diffusion fine-tuning, we use the default DiGress training pipeline with 20 epochs and learning rate $2\times 10^{-4}$.

\textbf{Oracle.}
We use GFN2-xTB as the molecular property oracle \citep{bannwarth2019gfn2xtb}. GFN2-xTB is a semiempirical quantum chemistry method based on extended tight binding, providing efficient approximations of electronic structure and molecular properties at substantially lower cost than DFT or Hartree--Fock.

\textbf{Surrogate.}
For DiBO \citep{yun2025posterior}, we use an ensemble of five Graph Isomorphism Network (GIN) models \citep{xu2018how}. Each GIN has 2 layers with hidden dimension 64 and mean pooling, and is trained for 80 epochs with learning rate $10^{-3}$, weight decay $10^{-6}$, and dropout $0.1$.

\textbf{VAE-based methods.}
For VAE-random, VAE-BO, and COWBOYS \citep{moss2025return}, we use a chemical VAE \citep{gomez2018automatic} trained on the QM9 dataset \citep{wu2018moleculenet}, following the original setup.

\subsection{Additional experiments}
\begin{figure}
    \centering
    \includegraphics[width=1\linewidth]{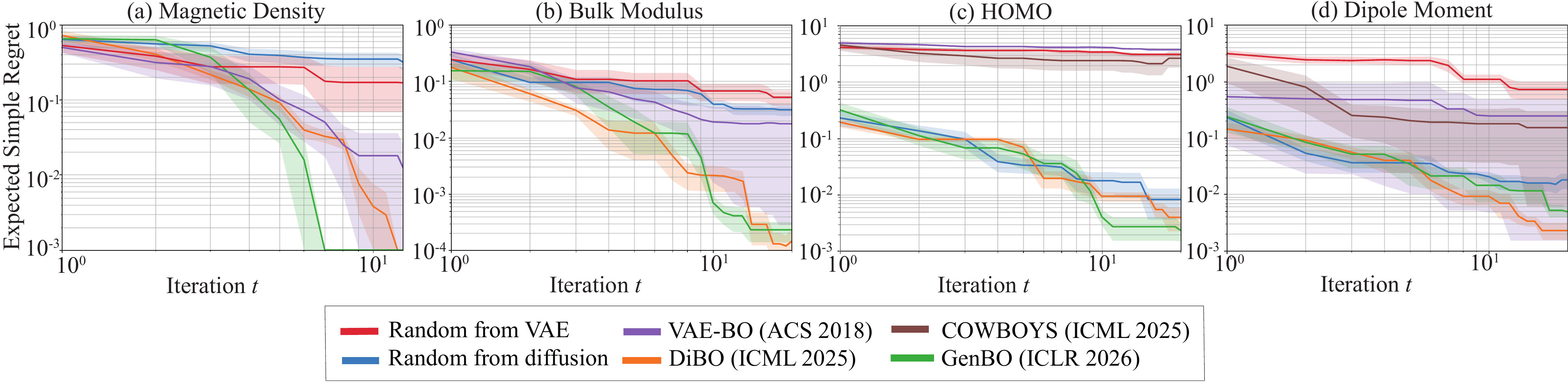}
    \caption{Comparison with VAE baselines}
    \label{fig:baselines}
\end{figure}

Additional experiments for the comparison with VAE baselines are shown in Fig.~\ref{fig:baselines}.

\section{Limitation and broader impact}\label{app:continuous_warning}
\subsection{Limitations}
\paragraph{Continuous domain}
\begin{proposition}[Fixed best-of-$N$ cannot generally certify continuous no-regret]\label{prop:continuous_pool_warning_app}
Assume $\X\subset\R^d$ and that, for all sufficiently small $\gamma$ and large $t$, $q_t(\U_\gamma)\le C_{\rm cont}\gamma^{d/\alpha}$. If the candidate-pool miss probability is required to satisfy
$\Pbb(C_t\cap\U_{\gamma_t}=\varnothing)\le c\gamma_t$ for some $\gamma_t\downarrow0$, then necessarily
\begin{equation}
\label{eq:continuous_pool_warning_app}
    N_t\gtrsim \gamma_t^{-d/\alpha}\log\frac1{\gamma_t}.
\end{equation}
\end{proposition}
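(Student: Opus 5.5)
Because the candidates $X_{t,1},\ldots,X_{t,N_t}$ are conditionally i.i.d.\ from the sampler law $q_t$ (here playing the role of $\nu_t$), the pool misses $\U_{\gamma_t}$ exactly when every draw misses it. So the first step is the identity
\[
\Pbb(C_t\cap\U_{\gamma_t}=\varnothing)=\E\bigl[\{1-q_t(\U_{\gamma_t})\}^{N_t}\bigr]=\E\exp\{-N_t\Lambda(q_t(\U_{\gamma_t}))\},
\]
which is the same Bernoulli miss computation as in Proposition~\ref{prop:random_inverse_mass_profile}. If $q_t$ is random, I use the deterministic upper bound $q_t(\U_\gamma)\le C_{\rm cont}\gamma^{d/\alpha}$, which holds almost surely. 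Then the miss probability is at least $\{1-C_{\rm cont}\gamma_t^{d/\alpha}\}^{N_t}$, and no expectation over $q_t$ needs to be handled.

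Next I impose the requirement. The condition $\{1-m_t\}^{N_t}\le c\gamma_t$ with $m_t:=C_{\rm cont}\gamma_t^{d/\alpha}$ rearranges to
\[
N_t\Lambda(m_t)\ge\log\frac{1}{c\gamma_t}.
\]
Since $\gamma_t\downarrow0$, we eventually have $m_t\le1/2$. On $[0,1/2]$ the elementary bound $\Lambda(u)=-\log(1-u)\le u+u^2\le 2u$ holds, because $\Lambda(u)=\sum_k u^k/k\le u/(1-u)$. This gives
\[
N_t\ge\frac{\log(1/(c\gamma_t))}{2C_{\rm cont}\gamma_t^{d/\alpha}}.
\]
For small $\gamma_t$ we have $\log(1/(c\gamma_t))\ge\tfrac12\log(1/\gamma_t)$, and absorbing constants yields Eq.~\eqref{eq:continuous_pool_warning_app}.

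The argument has no real obstacle; it is only bookkeeping about which quantities are random and where the bound $\Lambda(u)\lesssim u$ is valid. I need $m_t$ to be small, which follows from $\gamma_t\downarrow0$ together with the hypothesis "for all sufficiently small $\gamma$ and large $t$". I also need $c\gamma_t<1$ so that the logarithm is positive. Both hold for large $t$, and the statement is asymptotic, so these conditions suffice. If $N_t$ were itself random or adaptive, I would condition on $\F_{t-1}$ and apply the same bound conditionally.
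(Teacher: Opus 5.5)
Your proposal is correct and follows the paper's proof essentially line for line: the Bernoulli miss computation, rearrangement to $N_t\Lambda(q_t(\U_{\gamma_t}))\ge\log(1/(c\gamma_t))$, the small-mass bound $\Lambda(u)\le 2u$, and substitution of the mass upper bound $q_t(\U_{\gamma_t})\le C_{\rm cont}\gamma_t^{d/\alpha}$. Your extra care makes explicit what the paper leaves implicit: you handle a random sampler law via the almost-sure mass bound and the monotonicity of $u\mapsto(1-u)^{N_t}$, and you check that $m_t\le 1/2$ and $c\gamma_t<1$ hold eventually.
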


\begin{proof}
The miss requirement gives $(1-q_t(\U_{\gamma_t}))^{N_t}\le c\gamma_t$, so $N_t\Lambda(q_t(\U_{\gamma_t}))\ge\log(1/(c\gamma_t))$. For small mass, $\Lambda(u)\le2u$, and $q_t(\U_{\gamma_t})\le C_{\rm cont}\gamma_t^{d/\alpha}$ gives the claim.
\end{proof}

\paragraph{Sampler-certification limitation.}
Our regret theorem requires terminal sampler certificates: after the acquisition score is frozen, the realized sampler must be close to the intended Gibbs target in set mass, TV, KL, or a path-space divergence that contracts to the terminal law.  The DiBO/GenBO instantiations in Table~\ref{tab:sampler_floor_summary} are therefore conditional reductions, not claims that the original finite-time training losses automatically imply regret.  A complete end-to-end sampler proof is obtained for the proposal-corrected resampling wrapper in Proposition~\ref{prop:certified_prior_resampling}.  Proving analogous finite-sample terminal KL/TV certificates for unmodified diffusion fine-tuning objectives remains an important direction.

\paragraph{Global score constants.}
The global score-learning bound uses a finite-support
$L_2(P_\Psi)\to L_\infty$ conversion.  Although valid on finite domains, its constant depends on $p_{\min}(P_\Psi)^{-1/2}$ and can be very large in astronomically large structured spaces.  We therefore use this global bound as a conservative safety/no-regret certificate.  The finite-budget acceleration claims are instead explained by terminal mass lift on search-relevant sets and by the local threshold results in Appendix~\ref{app:unified_threshold_acceleration}.

\subsection{Broader impact}
We theoretically analyzed the existing family of optimization algorithms. This may accelerate the algorithm development in this domain, yet the consequence of its acceleration is beyond our scope.


\end{document}